\theoremstyle{plain}
\newtheorem{theorem}{Theorem}[section]
\newtheorem{lemma}[theorem]{Lemma}
\newtheorem{corollary}[theorem]{Corollary}
\theoremstyle{definition}
\newtheorem{definition}[theorem]{Definition}
\newtheorem{assumption}[theorem]{Assumption}
\theoremstyle{remark}
\title{The Batch Complexity of Bandit Pure Exploration}
\date{}
\author[1]{Adrienne Tuynman}
\author[1]{Rémy Degenne}
\affil[1]{Univ. Lille, Inria, CNRS, Centrale Lille, UMR 9189-CRIStAL, F-59000 Lille, France}
\begin{document}
	\maketitle

\begin{abstract}
In a fixed-confidence pure exploration problem in stochastic multi-armed bandits, an algorithm iteratively samples arms and should stop as early as possible and return the correct answer to a query about the arms distributions.
We are interested in batched methods, which change their sampling behaviour only a few times, between batches of observations.
We give an instance-dependent lower bound on the number of batches used by any sample efficient algorithm for any pure exploration task.
We then give a general batched algorithm and prove upper bounds on its expected sample complexity and batch complexity.
We illustrate both lower and upper bounds on best-arm identification and thresholding bandits.
\end{abstract}
% !TeX root = ../all.tex

\section{Introduction}

A Multi Armed Bandit (MAB) is a model of a sequential interaction that was introduced in \citep{thompsonLikelihoodThatOne1933} to create better medical trials.
This framework has since been expanded to various fields, and has seen applications to online advertising and recommendation systems.
In a MAB, an algorithm chooses at each time an \emph{arm} among a finite number (it \emph{pulls} it) and then observes a sample from a probability distribution associated with the arm.
The goal of the interaction will be to identify quickly which arm has the distribution with highest mean.

By making use of past observed rewards to continuously update the way they sample, MAB algorithms reach their objective faster than traditional fixed randomized trials.
For applications like online advertising, obtaining feedback can be quick, if for example the feedback is a click on an advertisement. Adapting after almost each interaction is then feasible.
However in applications like clinical trials, the delay between giving a treatment to a patient and seeing the effect can be on the order of months.
It is not possible to use a fully sequential algorithm, which would take too long.

This motivates looking into \textit{batched} algorithms: the algorithm pulls multiple arms within a single batch, and can only observe the results at the end of the batch.
An advantage is that all arms in a batch can be pulled in parallel, reducing greatly the length of the test in applications with large delay.
Of course, being less adaptive than a typical bandit method could lead to worse algorithms.
The key question is then how few batches one can use while keeping a performance comparable to a fully sequential algorithm.

\subsection{Fixed Confidence Pure Exploration}

We study $K$-armed bandit models, which we represent by a vector of real distributions $\bm\nu=(\nu_i)_{i\in [K]}$ with finite means, with the vector of means denoted by $\bm\mu=(\mu_i)_{i\in[K]}$. %=\left( \bE[\nu_i]\right)_{i\in [K]}$.
At each timestep $t \in \mathbb{N}$, an agent interacts with the bandit by choosing an arm $i_t\in [K]$ to sample.
It then observes $X_{t, i_t}$ a realization of $\nu_{i_t}$ and proceeds to the next step.
One common objective is regret minimization, where the agent must maximize $\bE[\sum_t X_{t, i_t}]$. See \citep{bubeck2012regret,lattimore2020bandit} for extensive surveys.
We focus on another goal, pure exploration, in which the agent must answer a question about $\bm\nu$, such as identifying the arm with the highest mean \citep{even-darPACBoundsMultiarmed2002,kaufmann2016complexity}.
In pure exploration problems, we have a set $\mathcal{I}$ of possible answers, and a function $\bm\mu\mapsto i^\star(\bm\mu)\in\mathcal{I}$ that depends on the means of the distributions and provides the unique correct answer for that instance.
For a given pure exploration problem and a vector of means $\bm\mu$, we denote by $Alt_{\bm \mu}$ the set of vectors of means $\bm{\mu'}$ such that $i^\star(\bm\mu') \ne i^\star(\bm\mu)$. Essentially, $Alt_{\bm\mu}$ is the set of means that disagree with $\bm\mu$. 

A pure exploration algorithm samples arms in a bandit interaction (following a \emph{sampling rule}) until a time at which it decides to stop (with a \emph{stopping rule}). Once it stops, it returns an answer $\hat{i} \in \mathcal I$.
A good pure exploration algorithm should have low probability of error $\bP_{\bm\nu}\{\hat{i}\neq i^\star(\bm\mu)\}$ and stop quickly (use few samples).
When the number of samples is fixed and the objective is to minimize the probability of error, we talk about fixed-budget pure exploration \citep{audibertBestArmIdentification2010}.
We consider instead the \emph{fixed confidence} objective, in which the probability of error is fixed and the agent aims to achieve it with the minimum number of samples.
We focus on \textbf{$\delta$-correct algorithms}. That is, algorithms that satisfy $\bP_{\bm\nu}\{\hat{i}\neq i^\star(\bm\mu)\} \le \delta$ \citep{garivierOptimalBestArm2016}.

We focus on bandit models where each arm $i\in[K]$ has a sub-Gaussian distribution $\nu_i$ of constant $\sigma^2$.
That is, for all $\lambda \in \mathbb{R}$, $\mathbb{E}_{\nu_i}[e^{\lambda (X - \mu_i)}] \le e^{\sigma^2 \lambda^2 / 2}$.
For example, both bounded and Gaussian distributions are sub-Gaussian.

Finally, we study \textbf{batched} algorithms \citep{agarwalLearningLimitedRounds2017,jinEfficientPureExploration2019}.
In practical settings, it is not always possible to adapt at every single timestep. 
Therefore, we want the agent to only observe results and take decisions a limited number of times.
At the beginning of a batch, the agent decides how many times to sample each arm $i \in [K]$, then observes all the realizations of those pulls.
It then can proceed to the next batch.
Crucially, the agent cannot change anything during a batch: it cannot change its sampling decision and cannot stop a batch early.
For $\delta$-correct batched algorithms, for each pure exploration problem, we want to control the sample complexity $\tau_\delta$ (the number of samples used), and the batch complexity $R_\delta$ (the number of batches used, which is the number of times the results were observed and the sampling rule was updated).

In all pure exploration problems, the expectation of the stopping time of $\delta$-correct strategies on an instance $\bm\nu$ is bounded from below as follows \citep{garivierOptimalBestArm2016}. Let $\Sigma_K$ be the simplex, $\KL$ the Kullback-Leibler divergence and $\kl$ the KL between Bernoulli distributions.
\begin{align*}
&\bE_{\bm\nu}[\tau_\delta]
\ge \tilde{T}^\star(\bm\mu) \kl(\delta,1-\delta)
\ge \tilde{T}^\star(\bm\mu) \log (1/(2.4 \delta))
\: , \\
&\text{in which }
(\tilde{T}^\star(\bm\mu))^{-1}
= \sup_{w\in \Sigma_K}\inf_{\bm\lambda\in Alt_{\bm\mu}} \sum_{i=1}^K w_i\KL(\nu_i,\lambda_i)
\: .
\end{align*}
 This result has been expanded to pure exploration problems with multiple correct answers by \citep{degennePureExplorationMultiple2019} in the asymptotic regime $\delta\rightarrow 0$. This $\tilde{T}^\star(\bm\mu)$ is the instance-dependent complexity of instance $\bm\mu$ in this problem. Our objective is to develop algorithms with a sample complexity approaching $\tilde{T}^\star(\bm\mu)\log(1/\delta)$ for all $\bm\mu$.
 As we are in the $\sigma$-subgaussian setting, $\tilde{T}^\star (\bm\mu) \leq T^\star(\bm\mu)$ where \[(T^\star(\bm\mu))^{-1} = \sup_{w\in \Delta_K}\inf_{\bm\lambda\in Alt_{\bm\mu}} \sum_i w_i\frac{(\mu_i-\lambda_i)^2}{2\sigma^2}\] (by Donsker-Varadhan duality, see for example \citep{wang2021sub}) with equality for Gaussians with variance $\sigma^2$.
Our goal is to obtain algorithms that have sample complexity as close to $T^\star(\bm\mu) \log(1/\delta)$ as possible while using few batches.

We will prove results for general identification problems, and specialize them to three particular cases.
Let $\sigma$ be a permutation such that $\mu_{\sigma(1)} \ge \ldots \ge \mu_{\sigma(K)}$.
\begin{itemize}
	\item In \textbf{Best Arm Identification} (BAI), $\mathcal{I}=[K]$ and, for some instance for which $\mu_{\sigma(1)}>\mu_{\sigma(2)}$, the correct answer is $i^\star(\bm\mu)=\sigma(1)$ the best arm.
	
	\item In \textbf{Top-$k$}, $\mathcal{I}=\mathcal{P}_k(K)$ the set of all subsets of $[K]$ of size $k$, and the correct answer for some instance for which $\mu_{\sigma(k)}>\mu_{\sigma(k+1)}$ is $i^\star(\bm\mu)=\{\sigma(1),\sigma(2),\dots,\sigma(k)\}$ the set of the best $k$ arms.

	Note that top-$1$ is BAI.
	
	\item In the \textbf{Thresholding Bandit Problem} (TBP) with threshold $\tau$ \citep{locatelliOptimalAlgorithmThresholding2016}, $\mathcal{I}$ is the set of all subsets of $[K]$, and the correct answer for some instance for which $\forall i,\mu_i\neq \tau$ is $i^\star=\{i:\mu_i >\tau\}$ the set of all arms that have their mean above the threshold. 
\end{itemize}

\subsection{Related Work}

The problem of batched bandit algorithms has been studied in fixed-confidence BAI and Top-k. Some of the works cited below consider a task with multiple agents that limit the number of rounds in which they communicate, but their methods also give batched bandit algorithms.
\citet{hillelDistributedExplorationMultiArmed2013} derived an algorithm for $\varepsilon$-BAI (the algorithm needs to return an arm with mean within $\varepsilon$ of the best) which progressively eliminates arms. They show high probability bounds on the batch and sample complexities: with probability $1 - \delta$, it uses less than $\log(1/\varepsilon)$ batches and uses $O\left( \sum_i (\Delta_i^\varepsilon)^{-2} \log\left(\frac{K}{\delta}\log(\Delta_i^\varepsilon)^{-1}\right)\right)$ samples with $\Delta_i^\varepsilon =\max\{\mu^\star - \mu_i,\varepsilon\}$.
\citet{agarwalLearningLimitedRounds2017} consider $\delta$-correct algorithms for BAI that use a given number of batches, and give a worst-case upper bound on the number of samples used by their algorithm (also elimination based), but their method requires the knowledge of a lower bound on the gaps of the arms.

\citet{jinEfficientPureExploration2019} and \citet{karpovCollaborativeTopDistribution2020} designed $\delta$-correct algorithms for the Top-k problem, and give each bounds with high probability on the batch and sample complexities.
\citep{jinOptimalBatchedBest2023} is the work which is closest to our approach when it comes to algorithms. They study the BAI setting and prove bounds on the expected batch and sample complexity. Compared to bounds with probability $1 - \delta$, that requires that they control the complexities also in the event of probability $\delta$. Their method for finite $\delta$ is also elimination based, but they additionally propose an algorithm with guarantees as $\delta \to 0$ that is inspired by the Track-and-Stop BAI algorithm \citep{garivierOptimalBestArm2016}.

All those works cover Top-k or its special case BAI. However, many other pure exploration problems have been studied in the fully sequential setting, like thresholding bandits \citep{locatelliOptimalAlgorithmThresholding2016} or the problem of detecting if any arm has mean larger than a threshold \citep{kaufmann2018sequential}.
We seek to analyze general pure exploration problems and answer the following questions: what is the minimal expected batch sample complexity needed to get an expected sample complexity close to what is achievable by fully sequential algorithms? Can we design a general pure exploration algorithm with near-optimal sample complexity and that reaches that minimal batch complexity?

\subsection{Contributions}

We show a link between sample and batch complexities.
We build in Section~\ref{sect:lb} a general method for computing batch complexities lower bounds for $\delta$-correct pure exploration algorithms.
We demonstrate how to apply that method to Top-k (including BAI) and thresholding bandits.
The lower bounds we obtain are instance dependent: contrary to previous work, we don't merely state that there exist an instance on which the algorithm requires some number of batches, but we give a lower bound for the batch complexity of each instance, function of its complexity $T^\star$.

In Section~\ref{sect:alg}, we construct a general batched algorithm for pure exploration problems, taking inspiration from Track-and-Stop \citep{garivierOptimalBestArm2016}.
The batch complexity of that algorithm is close to the lower bound under mild conditions that are satisfied on Top-k and thresholding bandits. Moreover, its sample complexity is close to optimal in the high confidence regime (small error probability $\delta$).
% !TeX root = ../all.tex

\section{Lower Bound on the Round Complexity}\label{sect:lb}

It seems obvious that using more batches will make it possible to use less samples, as the algorithm can more quickly adapt to its observations. What then is the precise relation between the sample complexity and the number of batches?

\citet{taoCollaborativeLearningLimited2019} focus on the problem of collaborative learning with limited interaction, in which multiple agents take samples in an environment and observe their own rewards, but must minimize the number of times they communicate their rewards with the other agents.
They manage to find a link, in some specific 2-armed instances, between how much faster an algorithm can get when communication is allowed, and the number of times the algorithm communicates.
They do so by constructing a sequence of gradually more difficult instances, each requiring one more batch than the previous one.
Generalizing this idea lets us reach the following result, for any pure-exploration problem.

\begin{restatable}[]{lemma}{theorec}\label{th:theorec}
	Suppose that a $\delta$-correct algorithm satisfies $\bP_{\bm\mu}\left(\tau_\delta >\gamma T^\star(\bm\mu)\ln(1/\delta)\right)\leq c$ for some $\gamma,c > 0$ on any Gaussian instance $\bm\mu$ with variance $\sigma^2$ with $T^\star(\bm\mu)\in (T_{\min},T_{\max})$.
	Let $(\bm\mu^n)_{0\leq n\leq N}$ be a sequence of such Gaussian instances with $T^\star(\bm\mu^n)=T^\star(\bm\mu^0)\zeta^{-n}\in (T_{\min},T_{\max})$ for some $\zeta\in (0,1)$.
	Then
	\begin{align*}
	&\bP_{{\bm\mu}^N}(R_\delta>N)
	\ge 1-N(2\delta+c)-\sqrt{\frac{\gamma\ln(1/\delta)}{2(\zeta^{-1}-1)}} S_n
	\end{align*}
	with
	\begin{align*}
	S_n = \sum_{i=0}^{N-1}\left[ 1+\sqrt{\frac{T^\star(\bm\mu^0)}{\zeta^{n}}\sup_{w\in \Sigma_K}\sum_{i\in[K]}w_i \frac{(\mu_i^{n+1}-\mu_i^n)^2}{2}}\right]
	\end{align*}
\end{restatable}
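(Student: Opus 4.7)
The proof would proceed by induction along the sequence of instances, comparing the algorithm's behaviour on $\bm\mu^n$ and $\bm\mu^{n+1}$ via change of measure. Each step of the induction contributes one $(2\delta+c)$ term and one summand of $S_N$: the $2\delta$ comes from applying $\delta$-correctness to both adjacent instances (which tacitly requires $i^\star(\bm\mu^n)\ne i^\star(\bm\mu^{n+1})$ for the bound to be informative), the $c$ comes from the high-probability sample bound, and the square-root term comes from Pinsker's inequality applied to the Gaussian KL of the observations.

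First I would establish a one-step inequality of the form
\[
\bP_{\bm\mu^n}(R_\delta \le n+1) \;\le\; 2\delta + \mathrm{TV}\!\bigl(\mathbb{P}_n^{(n+1)},\mathbb{P}_{n+1}^{(n+1)}\bigr),
\]
where $\mathbb{P}_m^{(r)}$ denotes the joint law of the observations collected in the first $r$ batches when the algorithm is run on $\bm\mu^m$. This is the classical $\delta$-correctness double inequality: on the event $\{R_\delta \le n+1\}$, which is measurable with respect to the first $n+1$ batches, the algorithm returns $i^\star(\bm\mu^n)$ with probability at least $\bP_{\bm\mu^n}(R_\delta\le n+1)-\delta$; transporting this event to $\bm\mu^{n+1}$ by the data processing inequality and invoking $\delta$-correctness once more on $\bm\mu^{n+1}$ yields the bound. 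The TV is then controlled by Pinsker and the Gaussian KL identity $\KL(\mathbb{P}_n^{(r)},\mathbb{P}_{n+1}^{(r)}) = \sum_i \bE_{\bm\mu^n}[N_i^{(r)}](\mu_i^{n+1}-\mu_i^n)^2/(2\sigma^2)$, which I would upper bound by $\bE_{\bm\mu^n}[T_r]\cdot \sup_{w\in\Sigma_K}\sum_i w_i(\mu_i^{n+1}-\mu_i^n)^2/(2\sigma^2)$ with $T_r:=\sum_i N_i^{(r)}$.

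Because the sample-complexity hypothesis controls $\tau_\delta$ in probability but not in expectation, I would replace the random $\bE_{\bm\mu^n}[T_r]$ by the deterministic budget $B_n:=\gamma T^\star(\bm\mu^n)\ln(1/\delta)$ through a coupling with a truncated copy of the algorithm that is forced to abort after $B_n$ pulls: on $\bm\mu^n$ this truncated algorithm agrees with the original on an event of probability at least $1-c$, which is the origin of the $c$ per iteration. Iterating the one-step inequality across $n=0,\dots,N-1$, by decomposing $\bP_{\bm\mu^N}(R_\delta\le N)=\sum_{n=0}^{N-1}\bP_{\bm\mu^N}(R_\delta=n+1)$ and transporting each summand back to $\bm\mu^n$ along the sequence through the triangle inequality for TV, yields a sum of $N$ Pinsker contributions that makes up $S_N$. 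The geometric scaling $T^\star(\bm\mu^n)=T^\star(\bm\mu^0)\zeta^{-n}$ collapses the accumulated budgets $\sum_{m\le n}B_m$ into a telescoping sum of order $T^\star(\bm\mu^n)/(\zeta^{-1}-1)$, which is the source of the $1/(\zeta^{-1}-1)$ factor inside the square-root prefactor. The main technical obstacle I expect is the truncation step: the Gaussian KL is naturally expressed as an expectation whereas the sample hypothesis is only a probability bound, so the auxiliary truncated algorithm must be introduced and the probability-$c$ event on which it disagrees with the original must be tracked carefully; the additive $1$ inside each summand of $S_N$ should come from bounding the TV trivially by $1$ on that bad event of disagreement, once the telescoping is carried out.
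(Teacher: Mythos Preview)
There is a genuine gap. Your one-step inequality $\bP_{\bm\mu^n}(R_\delta\le n+1)\le 2\delta+\mathrm{TV}$ hinges on $i^\star(\bm\mu^n)\ne i^\star(\bm\mu^{n+1})$, as you note. But the lemma makes no such assumption on the sequence, and in fact in the paper's applications (Lemma~\ref{lem:bar}) the sequence is an affine path along which all instances share the \emph{same} correct answer. So your argument cannot prove the lemma as stated.

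The paper's proof uses \emph{two} change-of-measure steps per iteration, not one. First (Lemma~\ref{lem:26f}), on instance $\bm\mu^n$ it shows that the algorithm cannot stop too early by comparing $\bm\mu^n$ with an instance $\bm\nu'\in Alt_{\bm\mu^n}$: this is where the $2\delta$ comes from (correctness on $\bm\mu^n$ and on $\bm\nu'$), and Pinsker against the \emph{optimal} alternative gives a term $\sqrt{X_{n-1}/(2T^\star(\bm\mu^n))}$, which after the geometric scaling becomes exactly the additive ``$1$'' inside each summand of $S_N$. Second (Lemma~\ref{lem:27f}), it transfers the joint event $\{R_\delta\ge n+1,\ \tau_\delta^n\le X_{n-1}\}$ from $\bm\mu^n$ to $\bm\mu^{n+1}$, paying $c$ (the high-probability sample bound) plus the TV between the two sequence instances, which is the second term in each summand. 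So the ``$1$'' is \emph{not} a trivial TV bound on a bad truncation event; it is the Pinsker cost of the alternative that forces a new batch. Your truncation construction is also unnecessary: by carrying the deterministic sample-count constraint $\{\tau_\delta^r\le X_{r-1}\}$ through the induction, every event in the chain is measurable with respect to a fixed number $X_{r-1}$ of observations, so the KL is automatically bounded by $X_{r-1}$ times the per-sample divergence and no auxiliary truncated algorithm is needed.
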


In order to get a lower bound on the number of batches, it remains to construct the right instance sequences for each problem.
It should start with an easy instance $\bm\mu^0$, and end with $\bm\mu^N=\bm\mu$ the instance whose complexity we want to study, with each succeeding instance being quantifiably harder than the previous one. See an illustration in Figure~\ref{fig:seqinst}.

\begin{figure}[!ht] \centering
	\scalebox{0.9}{
	\begin{tikzpicture}[scale=0.9]
		
		% Draw axes
		\draw[->] (0,2) -- (5,2) node[right] {$\mu_2$}; % x-axis
		\draw[->] (0,3) -- (0,6) node[above] {$\mu_1$}; % y-axis
		\draw[dashed] (0,2) -- (0,3);
		% Draw the line y = x
		\draw (2,2) -- (5,5) node[right] {$\mu_1=\mu_2$};
		
		% Define points
		\coordinate (P1) at (1.5, 5.5);
		\coordinate (P2) at (2.5, 5.3);
		\coordinate (I') at (3.5,3.5);
		\coordinate (I2) at (3.3,4.9);
		\coordinate (I2') at (3.9,3.9);
		
		% Label points
		\fill (P1) circle (2pt) node[left] {${\bm\mu}^n$};
		\fill (P2) circle (2pt) node[right] {${\bm\mu}^{n+1}$};
		\fill (I') circle (2pt);
		\fill (I2) circle (1.5pt) node[right] {${\bm\mu}^{n+2}$};
		\fill (I2') circle (1.5pt);
		
		% Draw an arrow from (x,y) to (a,b)
		\draw[<-,thick,red] (P1) -- (P2) node[midway,above right] {$\sup_{w\in \Delta_K}\sum_{i\in[K]}w_i \frac{(\mu_i^{n+1}-\mu_i^n)^2}{2\sigma^2}$};
		\draw[<-,dashed,red] (P2) -- (I2);
		\draw[<-,thick,blue] (P1) -- (I') node[midway,below left] {$(T^\star(\bm\mu^n))^{-1}$};
		\draw[<-,dashed,blue] (P2) -- (I2');
		
	\end{tikzpicture}}
	\caption{Illustration of a sequence of instances}
	\label{fig:seqinst}
\end{figure}
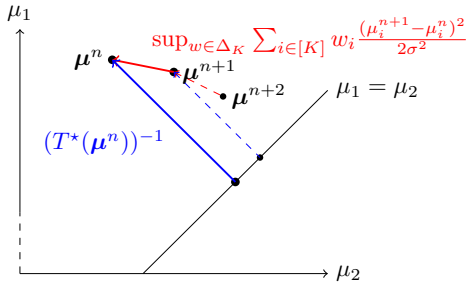

An ideal sequence would minimize the sum $S_N$, thus allowing us to get to a larger $N$.
As getting such a minimizing sequence is difficult, we use simpler sequences $\bm\mu^n = \bm y +x^n(\bm\mu^0-\bm y)$ with $\bm y$ the constant vector $y_i=y\in \R$.
In order to be able to do so, an important ingredient is the following condition:

\begin{assumption}\label{asm:aff}
	For any mean vector $\bm\mu$, there exists $y\in \R$ such that for all $x \in (0, 1]$ and $\bm\mu' = x\bm\mu + (1-x)\bm y$ (where $\bm y$ is the constant vector $y$), $(T^\star(\bm\mu'))^{-1}=x^2(T^\star(\bm\mu))^{-1}$.
\end{assumption}

Top-k and thresholding bandits both satisfy that assumption.
With that condition, it is possible to simplify the bound of Lemma~\ref{th:theorec}. We give a general result for the exploration problems satisfying this condition in Appendix~\ref{app:lb}. We then apply this result to Top-k and TBP.	
	
\begin{restatable}[]{theorem}{thlbbaiexp}\label{th:lbbaiexp}
	For Top-$k$ and TBP, for a $\delta$-correct algorithm on Gaussian instances with variance $\sigma^2$ of complexity $T^\star\in(T_{\min},T_{\max})$, we have for any such Gaussian instance $\bm\mu$ of complexity $T^\star(\bm\mu)\in (T_{\min},T_{\max})$ that 
	\begin{align*}
	\bE_{\bm\mu}[R_\delta]
	&\ge \min\Bigg\{ \frac{\ln \frac{T^\star(\bm\mu)}{T_{\min}}}{2\ln\left( \left(\ln \frac{T^\star(\bm\mu)}{T_{\min}}\right)^2 \max\{e,C_\delta\}  \right)} \: ,\frac{1}{6}\ln \frac{T^\star(\bm\mu)}{T_{\min}},\frac{1}{6\delta}\Bigg\}
	\end{align*}
	with
	\begin{align*}
	&C_\delta = 1+4\gamma\ln\left(\frac{1}{\delta}\right)\ln \frac{T^\star(\bm\mu)}{T_{\min}}\left(1+\sqrt{\frac{T^\star(\bm\mu)\Delta^2}{\sigma^2}}\right)^2
	\: , \\
	&\gamma = \sup_{\bm\nu: T^\star(\bm\mu)\in (T_{\min},T_{\max})}\frac{\bE_{\bm\nu}[\tau_\delta]}{\ln(1/\delta)T^\star(\bm\mu)}
	\: .
	\end{align*}
	and where $\Delta = \frac{\max_i \mu_i -\min_i\mu_i}{2}$ in the Top-$k$ setting and $\Delta = \max_i |\mu_i - \tau|$ in thresholding bandits.
\end{restatable}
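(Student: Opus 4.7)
The plan is to apply Lemma~\ref{th:theorec} to a specific scaling sequence of instances constructed from Assumption~\ref{asm:aff}. I first verify this assumption for both problems: for TBP take $y = \tau$ so that $\mu_i' - \tau = x(\mu_i - \tau)$ keeps each arm on the correct side of the threshold while homogeneously rescaling the gaps to $\tau$; for Top-$k$ any $y$ works (for concreteness $y = 0$), since scaling around $y$ preserves the ordering of the $\mu_i$ and rescales all pairwise gaps by $x$. In both cases the closest $\bm\lambda \in Alt_{\bm\mu'}$ has squared distance scaling as $x^2$, giving $(T^\star(\bm\mu'))^{-1} = x^2 (T^\star(\bm\mu))^{-1}$.

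Given the target $\bm\mu$ together with parameters $\zeta \in (0,1)$ and $N \in \mathbb{N}$, I set $\bm\mu^n = \bm y + \zeta^{(n-N)/2}(\bm\mu - \bm y)$, so that $\bm\mu^N = \bm\mu$ and $T^\star(\bm\mu^n) = T^\star(\bm\mu)\zeta^{N-n}$. Keeping $\bm\mu^0$ admissible, i.e.\ $T^\star(\bm\mu^0) > T_{\min}$, forces $N\ln(1/\zeta) \le \ln(T^\star(\bm\mu)/T_{\min})$. Since $\mu_i^{n+1} - \mu_i^n = \zeta^{(n-N)/2}(\zeta^{1/2}-1)(\mu_i - y)$, a direct computation shows the summand in $S_N$ is independent of $n$, collapsing to $1 + (1-\zeta^{1/2})\sqrt{T^\star(\bm\mu)\Delta^2/(2\sigma^2)}$, and so $S_N = N\,\bigl(1 + (1-\zeta^{1/2})\sqrt{T^\star(\bm\mu)\Delta^2/(2\sigma^2)}\bigr)$.

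To invoke Lemma~\ref{th:theorec} I need a tail bound on $\tau_\delta$. The definition of $\gamma$ combined with Markov's inequality gives $\bP_{\bm\mu^n}(\tau_\delta > \kappa\gamma T^\star(\bm\mu^n)\ln(1/\delta)) \le 1/\kappa$ for any $\kappa > 0$, so I apply the lemma with its $\gamma$ replaced by $\kappa\gamma$ and $c = 1/\kappa$, choosing $\kappa$ proportional to $N$ so that $Nc$ remains a fixed constant. The lemma's bound then takes the schematic form $\bP(R_\delta > N) \ge 1 - O(1) - 2N\delta - N\sqrt{\kappa\gamma\ln(1/\delta)/(\zeta^{-1}-1)}\,\bigl(1 + (1-\zeta^{1/2})\sqrt{T^\star\Delta^2/\sigma^2}\bigr)$. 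Demanding this to exceed a universal constant splits into three essentially independent constraints on $N$, producing the three branches of the stated minimum: controlling $N\delta$ yields $1/(6\delta)$; the admissibility constraint with $\ln(1/\zeta)$ fixed to a constant yields $\frac{1}{6}\ln(T^\star/T_{\min})$; minimizing the square-root term over $\zeta$ (using $(1-\zeta^{1/2})^2/(\zeta^{-1}-1) = \zeta(1-\zeta^{1/2})/(1+\zeta^{1/2}) \le 1$) forces $\zeta^{-1}$ to be of order $C_\delta \ln^2(T^\star/T_{\min})$, yielding the first branch. The conclusion then follows from $\bE_{\bm\mu}[R_\delta] \ge N\,\bP_{\bm\mu}(R_\delta > N)$.

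The main obstacle is the final step: simultaneously tuning $\zeta$ and $\kappa$ (and hence $C_\delta$) while tracking constants so that the square-root branch produces exactly the denominator $2\ln(\max\{e, C_\delta\}\ln^2(T^\star/T_{\min}))$, with the $\max\{e,\cdot\}$ keeping the logarithm bounded below even when $C_\delta$ is small. The other two branches follow from less delicate choices of $\zeta$, but all three require the constants to align consistently with the $1/6$ and $1/2$ factors in the statement.
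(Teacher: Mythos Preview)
Your approach is essentially the same as the paper's: verify Assumption~\ref{asm:aff} for both problems, build the affine scaling sequence terminating at $\bm\mu$, pass from the expectation hypothesis to a tail bound via Markov, invoke Lemma~\ref{th:theorec}, and optimise over $\zeta$ and the Markov parameter to produce the three branches, finishing with $\bE[R_\delta]\ge N\,\bP(R_\delta>N)\ge N/2$. The paper packages the optimisation into two intermediate lemmas (one with a tail constraint, one with an expectation constraint) and fixes $c=\max\{\delta,1/\ln(T^\star/T_{\min})\}$ up front, whereas you tie $\kappa$ to $N$; both parameterisations collapse to the same bound once you use $N\le \ln(T^\star/T_{\min})$.

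One point to correct: for Top-$k$ you write ``any $y$ works (for concreteness $y=0$)''. That is true for Assumption~\ref{asm:aff}, but the theorem's constant $\Delta=(\max_i\mu_i-\min_i\mu_i)/2$ is precisely $\max_i|\mu_i-y|$ for the specific choice $y=(\max_i\mu_i+\min_i\mu_i)/2$. With $y=0$ you would obtain $\Delta=\max_i|\mu_i|$, which is a different (and generally worse) constant. The paper makes this choice explicitly to minimise $\Delta$; you should do the same to match the stated bound.
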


We give all details of the proofs in Appendix~\ref{app:lb}.
For $\delta$ small enough, the batch complexity is or order
\begin{align*}
\Omega \left\{\frac{\ln \frac{T^\star(\bm\mu)}{T_{\min}}}{\ln\left(\ln \frac{T^\star(\bm\mu)}{T_{\min}}\right) +\ln \left(\gamma \ln(\delta^{-1}) K\frac{\max_i \Delta_i^2}{\min_i \Delta_i^2}\right) }\right\}
\: .
\end{align*}
This is the first lower bound on batch complexity as a function of the sample complexity for all pure exploration problem satisfying Assumption~\ref{asm:aff}.
Our algorithm will have a batch complexity of order $\ln \frac{T^\star(\bm\mu)}{T_{\min}}$, almost matching the lower bound.

For collaborative bandits (where periods without communication are the analog of batches), \citet{taoCollaborativeLearningLimited2019} have worked on algorithms $\mathcal{A}_b$ satisfying $\bE_{\mathcal{A}_b}[\tau_\delta] =\cO\left( \inf_{\mathcal{A}}\bE_{\mathcal{A}}[\tau_\delta]\right)$, where the infimum is over fully sequential algorithms.
Their results can be adapted to the batched setting to show that there are two-armed instances on which those algorithms must satisfy $\bE[R_\delta]=\Omega\left( \frac{\ln\Delta_2^{-1}}{\ln(\ln\Delta_2^{-1})}\right)$.
Our result is valid for more pure exploration problems than just BAI; it is a bound on general $K$-armed Gaussian instances rather than specific two-armed instances; and gives instance-dependent bounds instead of merely the existence of an unspecified hard instance.

Theorem~\ref{th:lbbaiexp} does not contradict the guarantees of Tri-BBAI \citep{jinOptimalBatchedBest2023}. 
Their algorithm always uses 3 batches, but the sample complexity is only controlled in the regime $\delta\rightarrow 0$, in which our lower bound goes to $0$.
Since $\bE_{\bm\nu}[\tau_\delta]$ is not controlled otherwise, $\gamma$ can be large, making our lower bound small and consistent with $3$ batches.

\textbf{Necessity of $T_{\min}$}. The bound depends on $T_{\min}$, which is the minimal complexity for which the sample complexity of the algorithm is bounded by $\gamma T^\star(\bm\mu)\log(1/\delta)$.
Since no algorithm uses less than $1$ sample, every algorithm has such a $T_{\min}$ greater than $(\log(1/\delta))^{-1}$, and the lower bound is then of order $\log (T^\star(\bm\mu) \log(1/\delta))$.
$T_{\min}$ is a tradeoff between prior knowledge and batch complexity.
In the extreme situation in which $T^\star(\bm\mu)$ is known and the algorithm is $\delta$-correct only on instances of that complexity, the lower bound is 0.
Our algorithm will use at most 5 batches in that case: any lower bound has to be a small constant.

% !TeX root = ../all.tex

\section{A General Algorithm}\label{sect:alg}

We present an algorithm for pure exploration in bandits with upper bounds on both its batch and sample complexities.
We take inspiration from the Track-and-Stop method \citep{garivierOptimalBestArm2016}, which is fully sequential (it does not use batches) and has asymptotically optimal sample complexity.
The principle of Track-and-Stop for Gaussians is to try to sample according to $w^\star(\bm\mu)=\argmax_{\bm w\in \Delta_K} \inf_{\bm{\lambda} \in Alt_{\bm{\mu}}} \sum_i w_i (\mu_i -\lambda_i)^2/(2\sigma^2)$, the vector of ideal sampling proportions at $\bm\mu$. At time $t$, having sampled arm $i$ approximately $w^\star_i t$ times leads to an algorithm that has minimal asymptotic sample complexity.
Of course $w^\star(\bm\mu)$ is unknown: Track-and-Stop estimates $\bm\mu$ by its empirical mean $\bm{\hat{\mu}}$, then approximates $w^\star(\bm\mu)$ by $w^\star(\bm{\hat{\mu}})$ and uses those proportions.
Some amount of uniform exploration is added to ensure convergence of the estimates.

We introduce a batched algorithm that first samples uniformly until it has estimated the sampling proportions and complexity well enough, then uses a last phase in which it samples like Track-and-Stop.

\subsection{Stopping Rule}

To check whether we can stop, a commonly used method in parametric pure exploration is the Generalized Likelihood Ratio (GLR) test \citep{garivierOptimalBestArm2016}.
Since we consider the non-parametric class of sub-Gaussian distributions, we use a Gaussian version of that test.
The stopping rule of the algorithm is to stop at the end of a phase if
\begin{equation}\label{eq:stop}
	\inf_{\bm\lambda \in Alt_{\hat{\bm\mu}^t} } \sum_i N_i^t\frac{(\hat{\mu}_i^t-\lambda_i)^2}{2\sigma^2} >\beta(t,\delta)
	\: .
\end{equation}

\begin{lemma}[\citep{garivierOptimalBestArm2016}]\label{lem:beta}
	Any algorithm using the stopping rule \eqref{eq:stop} with a threshold $\beta(t, \delta)$ satisfying
	$\mathbb{P}\left(\exists t,\sum_{i=1}^K N_i^t \frac{(\hat{\mu}_i^t-\mu_i)^2}{2\sigma^2} >\beta(t,\delta)\right) \le \delta$ and returning $i^\star(\hat{\bm\mu}^t)$ is $\delta$-correct.	
\end{lemma}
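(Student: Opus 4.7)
The plan is to reduce the error probability directly to the concentration-type hypothesis on $\beta$ via a one-line set inclusion. The key observation is that making a mistake forces the true mean vector $\bm\mu$ to lie in the alternative set $Alt_{\hat{\bm\mu}^\tau}$ of the empirical means at the stopping time.

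First, let $\tau$ be the (random) stopping time, which occurs at the end of some batch. Consider the error event $E = \{\hat{i} \ne i^\star(\bm\mu)\}$. Since the algorithm returns $\hat{i} = i^\star(\hat{\bm\mu}^\tau)$, the event $E$ implies $i^\star(\hat{\bm\mu}^\tau) \ne i^\star(\bm\mu)$. By the very definition of $Alt_{\hat{\bm\mu}^\tau}$ (the set of mean vectors whose correct answer differs from that of $\hat{\bm\mu}^\tau$), this is equivalent to $\bm\mu \in Alt_{\hat{\bm\mu}^\tau}$.

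Second, since the algorithm stopped at $\tau$, the stopping rule \eqref{eq:stop} holds there:
\[
\inf_{\bm\lambda \in Alt_{\hat{\bm\mu}^\tau}} \sum_i N_i^\tau \frac{(\hat{\mu}_i^\tau - \lambda_i)^2}{2\sigma^2} > \beta(\tau, \delta).
\]
On $E$, the true mean $\bm\mu$ is a feasible point of this infimum, so the infimum is upper bounded by the value at $\bm\lambda = \bm\mu$. Combining both facts, on $E$ we have
\[
\sum_i N_i^\tau \frac{(\hat{\mu}_i^\tau - \mu_i)^2}{2\sigma^2} > \beta(\tau, \delta),
\]
which in particular means there exists a time $t$ (namely $t=\tau$) at which the inequality holds.

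Therefore $E \subseteq \left\{ \exists t,\ \sum_i N_i^t (\hat{\mu}_i^t - \mu_i)^2/(2\sigma^2) > \beta(t,\delta) \right\}$, and the hypothesis on $\beta$ directly gives $\mathbb{P}_{\bm\nu}(E) \le \delta$, establishing $\delta$-correctness. There is essentially no obstacle in this argument; the only subtlety is confirming that $\bm\mu \in Alt_{\hat{\bm\mu}^\tau}$ on the error event, which follows directly from the definition of the alternative set as those parameters with a different correct answer.
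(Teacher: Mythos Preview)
Your proof is correct and is precisely the standard argument behind this lemma; the paper itself does not give a proof but simply cites \citep{garivierOptimalBestArm2016}, where the same set-inclusion reasoning (error event $\Rightarrow \bm\mu\in Alt_{\hat{\bm\mu}^\tau}$ $\Rightarrow$ the concentration event at $t=\tau$) is used. There is nothing to add.
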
 

We provide below a threshold that works for sub-Gaussian distributions in any pure exploration problem. For particular problems like BAI it should be possible to derive improved thresholds, as was done in parametric cases by \citet{kaufmannMixtureMartingalesRevisited2021}.
Our threshold is derived using the method of mixtures, as in that paper and other bandit articles \citep{abbasi2011improved}. It uses the techniques of \citep{degenneImpactStructureDesign2019} to fine-tune the constants.
While the BAI literature contains many similar thresholds for parametric settings, we could not find that result for sub-Gaussian distributions.

Let $W_{-1}$ be the negative branch of the Lambert $W$ function and let $\overline{W}:(1,+\infty) \to \mathbb{R}$ be the function defined by $\overline{W}(x) = -W_{-1}(-e^{-x})$. 
It satisfies $x + \ln x \le \overline{W}(x) \le x + \ln x + 1/2 \le 2x$.

\begin{lemma}\label{lem:beta_subG}
The following threshold satisfies the condition of Lemma~\ref{lem:beta} for $\sigma^2$-sub-Gaussian distributions:
\begin{align*}
\beta(t, \delta)
&= \frac{K}{2} \overline{W}\left(\frac{2}{K}\ln \frac{1}{\delta} + 4\ln \left(\ln \frac{et}{K}\right) + 2\ln \frac{e\pi^2}{6} \right)
\: .
\end{align*}
\end{lemma}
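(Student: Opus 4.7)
The plan is to use the classical method of mixtures applied to a $K$-dimensional exponential supermartingale, combined with a peeling over a scale parameter, and a Lambert-$W$ inversion to obtain the stated functional form. First I would construct, for every $\lambda=(\lambda_1,\ldots,\lambda_K)\in\mathbb{R}^K$, the process
\begin{equation*}
M_t(\lambda) = \exp\left(\sum_{i=1}^K \lambda_i N_i^t (\hat\mu_i^t - \mu_i) - \frac{\sigma^2}{2} \sum_{i=1}^K \lambda_i^2 N_i^t\right).
\end{equation*}
The $\sigma^2$-sub-Gaussianity of the arms yields $\mathbb{E}[\exp(\lambda_j(X_{t+1,j}-\mu_j))\mid\mathcal{F}_t]\le\exp(\sigma^2\lambda_j^2/2)$ whenever arm $j$ is sampled at step $t+1$, which makes $(M_t(\lambda))_{t\ge 0}$ a nonnegative supermartingale of expectation at most $1$, even under an adaptive sampling rule.

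Next I would mix against a product Gaussian prior $\lambda_i\sim\mathcal{N}(0,1/(c\sigma^2))$, $i=1,\ldots,K$, for a scale parameter $c>0$. By Fubini and an explicit Gaussian integration this yields another nonnegative supermartingale
\begin{equation*}
\bar M_t(c) = \prod_{i=1}^K \sqrt{\frac{c}{N_i^t + c}} \exp\left(\sum_{i=1}^K \frac{(N_i^t)^2(\hat\mu_i^t - \mu_i)^2}{2\sigma^2 (N_i^t + c)}\right)
\end{equation*}
with $\mathbb{E}[\bar M_t(c)]\le 1$, so that Ville's maximal inequality gives time-uniform control at each fixed $c$. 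To convert this into a bound on $Z_t:=\sum_i N_i^t(\hat\mu_i^t-\mu_i)^2/(2\sigma^2)$ rather than on its mixture surrogate, I would peel the scale: apply Ville on a geometric grid $c_\ell$ ($\ell\ge 1$) with failure budget $\delta_\ell = 6\delta/(\pi^2\ell^2)$, so that $\sum_\ell\delta_\ell=\delta$, and union-bound. At a given $t$ one selects the level $\ell^\star(t)$ whose scale matches $t/K$; Jensen's inequality then controls the mixture correction $\tfrac12\sum_i\ln((N_i^t+c_{\ell^\star})/c_{\ell^\star})$ by $\tfrac{K}{2}\ln(t/(Kc_{\ell^\star})+1)$, and a careful rearrangement compares $\sum_i(N_i^t)^2(\hat\mu_i^t-\mu_i)^2/(N_i^t+c_{\ell^\star})$ to $Z_t$ up to a lower-order correction.

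The last step is to invert the resulting implicit bound, which I expect to take the shape $y - (K/2)\ln y \le u(t,\delta)$ for $y$ proportional to $Z_t$, with $u(t,\delta)=\ln(1/\delta) + 2K\ln\ln(et/K) + K\ln(e\pi^2/6)$; the $\ln\ln(et/K)$ comes from $\ln\ell^\star(t)\asymp\ln\ln(t/K)$, the $\pi^2/6$ from the Basel weights, and the extra $e$-factors inside the logarithm ensure the argument of $\bar W$ exceeds $1$. Substituting $z=2y/K$ reduces the inequality to $z-\ln z\le 2u(t,\delta)/K$, whose solution (via the negative branch of the Lambert $W$) is precisely $z\le\bar W(2u(t,\delta)/K)$, giving $Z_t\le(K/2)\bar W(2u(t,\delta)/K)=\beta(t,\delta)$. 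The main technical obstacle is the bookkeeping in the peeling step: because $N_i^t/(N_i^t+c_\ell)$ can be as small as $1/2$, a naive argument loses a factor of two in front of $\ln(1/\delta)$, and obtaining a leading coefficient of exactly $1$ (as required by the stated threshold) is where the fine-tuning techniques of \citet{degenneImpactStructureDesign2019} are needed to align the peeling grid, the Basel weights, and the $\bar W$ inversion so that all constants fall into place.
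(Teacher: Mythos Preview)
Your high-level plan (Gaussian mixture supermartingale, Ville, Basel-weighted union bound, Lambert-$W$ step) matches the paper's, but the specific peeling you describe would not produce the stated threshold. A single geometric grid on one scale $c$ cannot cope with heterogeneous pull counts: with $c_{\ell^\star}\approx t/K$ as you propose, an arm with $N_i^t=1$ gives $N_i^t/(N_i^t+c_{\ell^\star})\approx K/t$, so the multiplicative loss is not ``at most a factor of two'' but unbounded; conversely, taking $c$ small enough to control that ratio for every arm makes the additive term $\tfrac{K}{2}\ln(t/(Kc)+1)$ of order $K\ln t$, not $K\ln\ln t$. The paper instead peels over a $K$-dimensional grid $(n_1,\ldots,n_K)\in\mathbb N^K$ on the pull counts: the prior on $\lambda_k$ is $\mathcal N(0,e^{-n_k}\eta^{-1})$ and at time $t$ one picks $n_k$ with $e^{n_k}\le N_{t,k}\le e^{n_k+1}$. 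This makes the multiplicative factor at most $(1+\eta)$ for a \emph{free} parameter $\eta$, independent of the $N_{t,k}$. The union-bound cost is then $\sum_k\ln(1/w_{n_k})$, which is where $\prod_k(1+\ln N_{t,k})^2$---and hence, after Jensen, the term $4\ln\ln(et/K)$---comes from; a single-index union bound would contribute only one $2\ln\ell^\star$, so the $u(t,\delta)$ you wrote down is not consistent with your own construction.

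The $\overline{W}$ also enters differently from your sketch: it is not obtained by inverting an implicit inequality $y-\tfrac{K}{2}\ln y\le u$. After the per-arm peeling one has
\[
\sum_k N_{t,k}\frac{(\hat\mu_{t,k}-\mu_k)^2}{2\sigma^2}\le(1+\eta)\Bigl(x+\tfrac{K}{2}\ln(1+\eta^{-1})\Bigr),\qquad x=\ln\frac{(\sqrt{e}\,\pi^2/6)^K\prod_k(n_k+1)^2}{\delta},
\]
and \emph{minimising} the right-hand side over $\eta>0$ (this is exactly Lemma~A.3 of \citet{degenneImpactStructureDesign2019}) yields $\tfrac{K}{2}\overline{W}(1+2x/K)$ directly, with leading coefficient $1$ on $\ln(1/\delta)$ for free---no implicit-equation inversion is needed.
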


See Appendix~\ref{app:concentration} for the proof.
Roughly, ignoring additive constants, $\beta(t, \delta) \approx \ln(1/\delta) + 2K \ln\ln t$~.
The linear dependence in $K$ may be unavoidable in general (for the problem of telling if the means belong to the unit ball, if $\bm\mu$ is the center) but could be improved in problems like BAI. The threshold for exponential families of \citep{kaufmannMixtureMartingalesRevisited2021} depends logarithmically on the number of arms.

\subsection{Known Confidence Set}

In order to design an algorithm, we first ask the following questions: given enough information, could we return a correct answer within one batch? How would we sample and what would be the sample complexity?

Suppose that we know a set $B$ which likely contains $\bm\mu$ and will likely contain the empirical mean vector $\hat{\bm\mu}_t$ for any amount of sampling we perform.
Since we will be using only one batch, we have to select the number of samples for each arm in advance.
We want that number of samples to be sufficient to enable the stopping rule, for any $\bm\mu' \in B$.
Our solution will be based on two worst-case definitions.

\begin{definition}
	For any $B$ a set of instances, define (with $0^{-1} = +\infty$)
	\begin{align*}
	\overline{w}^\star(B) &= \argmax_{w\in \Delta_K} \inf_{\bm\nu \in B} \inf_{\bm\lambda\in Alt_{\bm\nu}} \sum_i w_i \frac{(\nu_i - \lambda_i)^2}{2 \sigma^2}
	\: , \\
	\overline{T}^\star(B) &= \left(\max_{w\in \Delta_K} \inf_{\bm\nu \in B} \inf_{\bm\lambda\in Alt_{\bm\nu}} \sum_i w_i \frac{(\nu_i - \lambda_i)^2}{2 \sigma^2}\right)^{-1}
	\: .
	\end{align*}
\end{definition}

Note that $\overline{T}^\star(B) \ge \max_{\bm\mu\in B} T^\star(\bm\mu)$ , but those two quantities may not be equal. To see this, imagine an instance $\bm\nu$ such that $w_i(\bm\mu)\neq w_j(\bm\mu)$, and instance $\bm\nu'$ such that arms $i$ and $j$ are switched. Then, $T^\star(\bm\mu)=T^\star(\bm\mu')$, but it is strictly more difficult to sample so that \textit{both} instances are solved at the same time, so $\overline{T}^\star(\{\bm\mu,\bm\mu'\})>T^\star(\bm\mu)$.

\begin{lemma}\label{lem:sufficientsampling}
If $\overline{T}^\star(B)< +\infty$, then if after sampling each arm $N_i \ge \gamma \overline{w}_i^\star(B) \overline{T}^\star(B)$ times the empirical estimate $\bm{\hat{\mu}}$ is in $B$, then
%\begin{align*}
$\inf_{\bm{\lambda}\in Alt_{\bm{\hat{\mu}}}} \sum_i N_i d(\hat{\mu}_i,\lambda_i)
\ge \gamma
$\: .
%\end{align*}
\end{lemma}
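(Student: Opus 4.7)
The proof should be a short chain of inequalities built directly from the definitions of $\overline{w}^\star(B)$ and $\overline{T}^\star(B)$, using the hypothesis $\hat{\bm\mu}\in B$ to pass from a worst-case bound over $B$ to a bound at the specific point $\hat{\bm\mu}$.

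First I would unpack the definition. By definition of $\overline{w}^\star(B)$ and $\overline{T}^\star(B)$, the weight vector $\overline{w}^\star(B)\in\Delta_K$ achieves
\[
\inf_{\bm\nu\in B}\inf_{\bm\lambda\in Alt_{\bm\nu}}\sum_i \overline{w}_i^\star(B)\,\frac{(\nu_i-\lambda_i)^2}{2\sigma^2} \;=\; \frac{1}{\overline{T}^\star(B)}.
\]
Since the assumption $\overline{T}^\star(B)<+\infty$ guarantees the right-hand side is positive, and since by hypothesis $\hat{\bm\mu}\in B$, restricting the outer infimum to the single point $\bm\nu=\hat{\bm\mu}$ yields
\[
\inf_{\bm\lambda\in Alt_{\hat{\bm\mu}}}\sum_i \overline{w}_i^\star(B)\,\frac{(\hat{\mu}_i-\lambda_i)^2}{2\sigma^2} \;\ge\; \frac{1}{\overline{T}^\star(B)}.
\]

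Next I would multiply this inequality through by the pull counts. From the sampling hypothesis $N_i\ge \gamma\,\overline{w}_i^\star(B)\,\overline{T}^\star(B)$, and because each term $(\hat{\mu}_i-\lambda_i)^2/(2\sigma^2)$ is nonnegative, we get termwise
\[
N_i\,\frac{(\hat{\mu}_i-\lambda_i)^2}{2\sigma^2} \;\ge\; \gamma\,\overline{T}^\star(B)\,\overline{w}_i^\star(B)\,\frac{(\hat{\mu}_i-\lambda_i)^2}{2\sigma^2},
\]
so summing over $i$ and taking the infimum over $\bm\lambda\in Alt_{\hat{\bm\mu}}$ preserves the inequality, giving
\[
\inf_{\bm\lambda\in Alt_{\hat{\bm\mu}}}\sum_i N_i\,\frac{(\hat{\mu}_i-\lambda_i)^2}{2\sigma^2} \;\ge\; \gamma\,\overline{T}^\star(B)\cdot \frac{1}{\overline{T}^\star(B)} \;=\; \gamma.
\]
This is precisely the claim, interpreting $d(\hat{\mu}_i,\lambda_i)$ as the sub-Gaussian divergence $(\hat{\mu}_i-\lambda_i)^2/(2\sigma^2)$ used throughout the section.

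There is no real obstacle here: the result is essentially a tautology from the max-min definition of $\overline{T}^\star(B)$, the nonnegativity of the per-arm divergence (which lets the termwise lower bound $N_i\ge\gamma\overline{w}_i^\star(B)\overline{T}^\star(B)$ be applied inside a sum that is then minimized over $\bm\lambda$), and the fact that $\hat{\bm\mu}\in B$ allows specializing the inner worst case over $B$. The only point deserving a brief remark is why taking the infimum on both sides after the termwise inequality is legal, which follows because the bound $N_i\ge\gamma\overline{w}_i^\star(B)\overline{T}^\star(B)$ does not depend on $\bm\lambda$.
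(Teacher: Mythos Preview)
Your proof is correct and follows essentially the same approach as the paper: apply the sampling hypothesis $N_i\ge\gamma\,\overline{w}_i^\star(B)\,\overline{T}^\star(B)$ termwise, then use $\hat{\bm\mu}\in B$ to pass from the worst case over $B$ to the point $\hat{\bm\mu}$, and conclude via the definition of $\overline{T}^\star(B)$. The only cosmetic difference is the order in which you apply the two hypotheses, which does not matter.
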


\begin{proof}
By the condition on $N_i$, then the fact that $\hat{\bm\mu} \in B$ and finally definitions of $\overline{w}^\star$ and $\overline{T}^\star$,
\begin{align*}
&\inf_{\bm\lambda\in Alt_{\bm{\hat{\mu}}}}\sum_i N_i d(\hat{\mu}_i,\lambda_i)
\\
&\quad \ge \gamma \overline{T}^\star(B)\inf_{\bm{\lambda} \in Alt_{\bm{\hat{\mu}}}}\sum_i \overline{w}_i^\star(B)d(\hat{\mu}_i,\lambda_i)
\\
&\quad \ge \gamma \overline{T}^\star(B)\inf_{\bm{\nu}\in B}\inf_{\bm{\lambda} \in Alt_{\bm{\nu}}}\sum_i \overline{w}_i^\star(B)d(\nu_i,\lambda_i)
= \gamma
\: .
\end{align*}
\end{proof}

This entails that picking $\gamma \ge \beta(t,\delta)$ allows us to meet the stopping condition after that one batch, if indeed $\hat{\bm\mu} \in B$.
The number of samples used in the batch would be $\overline{T}^\star(B) \gamma \ge T^\star(\bm \mu) \beta(t,\delta)$.
A good set $B$ is therefore a confidence zone for $\bm{\hat{\mu}}$, and additionally has a small $\frac{\overline{T}^\star(B)}{T^\star(\bm{\hat{\mu}})}$ ratio to not oversample.

\subsection{The Algorithm}

Our algorithm is shown in pseudo-code in Algorithm~\ref{alg:phased}. It works in phases.
Each phase contains at most two batches, although the second batch is rarely done.
We define a ``phase complexity'' $T_r = 2^r T_0$ for $T_0$ a parameter of the algorithm.
In each phase $r$, in a first batch we start by sampling arms uniformly until the number of samples for each arm is at least $2^r l_{1,r}$, where $l_{1,r}$ will be set later.

Then we use the empirical mean at that point $\tilde{\bm\mu}^r$ to find a set $B_r$ that likely contains $\mu$.
$\hat{B}_r$ is defined as a ball in infinity norm centered at $\tilde{\bm\mu}^r$ and of radius $\varepsilon_r$ which we will choose later.
We write $\mathcal B_\infty(\tilde{\bm\mu}^r, \varepsilon_r) \coloneqq \{\bm\mu' \mid \Vert \tilde{\bm\mu}^r - \bm\mu' \Vert \le \varepsilon_r\}$ and set $\hat{B}_r = \mathcal B_\infty(\tilde{\bm\mu}^r, \varepsilon_r)$.

We then check whether the worst-case complexity $\overline{T}^\star(B_r)$ exceeds the phase complexity $T_r$.
If it does, we skip to the next phase: we need more exploration to get a tighter set.
If it does not, we enter the second batch of the phase and sample according to $\gamma_r \overline{w}^\star(B_r)\overline{T}^\star(B_r)$.
$\gamma_r$ is chosen large enough such that under a typical event, the algorithm then stops.
The expected number of batches will be the number of phases required to reach the right complexity range plus a small constant.

 \begin{algorithm}[ht]
 	\caption{Phased Explore then Track (PET)}
 	\label{alg:phased}
 \begin{algorithmic}[1]
 	\hrule
 		\STATE\textbf{Input:} Starting complexity $T_0 \ge 1$
 		\STATE $r\gets 0$, $t \gets 0$, $\forall i \ N_i \gets 0$
 		\WHILE{not \texttt{stop}} \label{algl:chern}
 		\STATE Pull every arm $i$ until $N_i \ge 2^r l_{1,r}$ and compute the empirical mean vector $\tilde{\bm\mu}^r$.
 		\STATE $\hat{B}_r \gets \{\bm\mu' \mid \Vert \tilde{\bm\mu}^r - \bm\mu' \Vert \le \varepsilon_r\}$
		\vspace{-1.1em}
 		\begin{align*}
 		&\overline{w}^\star
 		\gets \argmax_{w\in \Delta_K} \inf_{\bm\nu \in \hat{B}_r} \inf_{\bm\lambda\in Alt_{\bm\nu}} \sum_i w_i \frac{(\nu_i -\lambda_i)^2}{2 \sigma^2}
 		\: , \\
 		&\overline{T}^\star(\hat{B}_r)^{-1}
 		\gets \inf_{\bm\nu \in \hat{B}_r} \inf_{\bm\lambda\in Alt_{\bm\nu}} \sum_i \overline{w}^\star_i \frac{(\nu_i -\lambda_i)^2}{2 \sigma^2}
 		\: .
 		\end{align*}
 		\vspace{-1.5em}
 		
 		\IF{$\overline{T}^\star(\hat{B}_r) \leq T_r$} \label{algl:test}
 		
 		\STATE Pull each arm $i$ for $\lceil \gamma_r \overline{w}_i^\star \overline{T}^\star\rceil$ times and compute its empirical average $\hat{\mu}_i^r$ 
 		\ENDIF

 		\IF{$\inf_{\bm\lambda\in Alt_{\bm{\hat{\mu}}^{r}}}\sum_i N_i \frac{(\hat{\mu}^{r}_i -\lambda_i)^2}{2 \sigma^2} > \beta(t, \delta)$}
 		\STATE $i^\star \gets \arg\max_i\hat{\mu}^r_i$

 		\STATE \texttt{stop} $\gets$ \texttt{true}
 		\ENDIF
 	\ENDWHILE
	\RETURN $i^\star$
	\hrule
\end{algorithmic}
\end{algorithm} 

There are several quantities appearing in the algorithm, which we now specify.
$N_i$ is the number of samples observed for arm $i$ and $t = \sum_{i=1}^K N_i$.
$\beta(t, \delta)$ is the threshold of Lemma~\ref{lem:beta_subG}. %The phase complexity is $T_r = 2^r T_0$.
The uniform exploration amount is parametrized by $l_{1,r} = 32 T_0 \ln(2\sqrt{2K}T_r)$.
The confidence region width is $\varepsilon_r = \sqrt{\frac{2 \sigma^2}{2^r l_{1,r}} \ln\frac{2K}{p_r}}$ with $\quad p_r = T_{r+1}^{-2}$~.
Finally, $\gamma_r$ is the solution to $\gamma = \beta(\bar{t}_r, \delta)$, in which $\bar{t}_r = K 2^r l_{1,r} + \gamma T_r$.
$\bar{t}_r$ is an upper bound for the sample complexity until the end of phase $r$ for any $r$.
It is obtained by considering that in the worst case, both batches are entered at each phase up to $r$.

\begin{theorem}
	Algorithm~\ref{alg:phased} is $\delta$-correct.
\end{theorem}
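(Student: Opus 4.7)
The plan is to derive $\delta$-correctness as a direct consequence of Lemma~\ref{lem:beta} together with the sub-Gaussian concentration threshold given by Lemma~\ref{lem:beta_subG}. First I would verify that PET is an instance of the generic recipe covered by Lemma~\ref{lem:beta}: it stops precisely when the GLR statistic from~\eqref{eq:stop} exceeds $\beta(t,\delta)$, and it returns the plug-in answer $i^\star(\hat{\bm\mu}^r)$ (the $\arg\max$ line in the pseudocode is the BAI specialization of this generic rule), where $\hat{\bm\mu}^r$ is the empirical mean formed from \emph{all} $N_i$ observations collected up to that point.

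Next I would introduce the concentration event
\begin{align*}
\mathcal{E} = \left\{ \forall t \ge 1,\ \sum_{i=1}^K N_i^t \, \frac{(\hat{\mu}_i^t - \mu_i)^2}{2\sigma^2} \le \beta(t,\delta) \right\}
\end{align*}
and invoke Lemma~\ref{lem:beta_subG} to obtain $\mathbb{P}_{\bm\nu}(\mathcal{E}^c) \le \delta$. The standard GLR argument then delivers correctness on $\mathcal{E}$: if PET stops at some time $\tau_\delta$ and returns $\hat{i} \ne i^\star(\bm\mu)$, then $\bm\mu \in Alt_{\hat{\bm\mu}^{\tau_\delta}}$, so
\begin{align*}
\sum_i N_i^{\tau_\delta} \, \frac{(\hat{\mu}_i^{\tau_\delta} - \mu_i)^2}{2\sigma^2}
\ \ge\ \inf_{\bm\lambda \in Alt_{\hat{\bm\mu}^{\tau_\delta}}} \sum_i N_i^{\tau_\delta} \, \frac{(\hat{\mu}_i^{\tau_\delta} - \lambda_i)^2}{2\sigma^2}
\ >\ \beta(\tau_\delta,\delta) \: ,
\end{align*}
which contradicts the definition of $\mathcal{E}$. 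Hence on $\mathcal{E}$ the returned answer is correct.

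The only subtlety I would flag is that PET evaluates the stopping statistic only at the end of certain batches, not at every timestep, but this is harmless: the event $\mathcal{E}$ is quantified over all $t \ge 1$, so it holds in particular at the (random) checkpoints used by the algorithm, and restricting the candidate stopping times can only decrease the error probability. Therefore $\mathbb{P}_{\bm\nu}(\hat{i} \ne i^\star(\bm\mu)) \le \mathbb{P}_{\bm\nu}(\mathcal{E}^c) \le \delta$. There is no genuinely hard step: all the non-trivial work sits in the threshold construction of Lemma~\ref{lem:beta_subG}, and the phased / batched structure of PET plays no role beyond determining when the stopping rule is checked.
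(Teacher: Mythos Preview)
Your proposal is correct and matches the paper's own argument exactly: the paper's proof is a one-line reference to Lemma~\ref{lem:beta} and Lemma~\ref{lem:beta_subG}, and you have simply unpacked that reference (including the observation that batching only restricts the set of checkpoints, which is harmless). There is nothing to add.
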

This is a consequence of Lemma~\ref{lem:beta} and~\ref{lem:beta_subG}.

In order to describe the number of batches and of samples used by the algorithm, we introduce a ``good'' event, under which the algorithm behaves as expected.
\begin{definition}
	Let $\cE_r$ be the event that in phase $r$, $\bm{\hat{\mu}}^r$ and $\bm\mu$ belong to $\hat{B}_r$.
\end{definition}

That event will happen with high probability and when it does, we can bound the batch and sample complexity.

\begin{lemma}\label{th:alggen} For any pure exploration problem, PET (Algorithm~\ref{alg:phased}) is such that at phase $r$, if $\mathcal E_r$ holds then
either $\overline{T}^\star(\hat{B}_r) > T_r$ and the second batch is not entered, or the algorithm stops.

Let $R^* \coloneqq \min \{r \mid \forall r' \ge r, \ \mathcal E_{r'} \implies \overline{T}^\star(\hat{B}_{r'}) \le T_{r'}\}$ be the first phase after which when the good event happens, the estimated worst-case complexity is small enough.
The batch complexity $R_\delta$ and sample complexity $\tau_\delta$ of PET satisfy
\begin{align*}
R_\delta
&\le 2R^*- \log_2 \frac{T^\star(\bm\mu)}{T_0} + 1 + 2 \sum_{r=1}^{+\infty} \mathbb{I}(\mathcal E_{r}^c)
\\
\tau_\delta
&\le \bar{t}_{R^*} + \sum_{r=R^*}^{+\infty} \mathbb{I}(\mathcal E_r^c) \bar{t}_{r+1}
\: .
\end{align*}
\end{lemma}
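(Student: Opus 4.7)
The plan is to proceed in three steps: establish a phase-level dichotomy under $\mathcal{E}_r$, bound the stopping phase $R$ in terms of $R^*$ and the failures $\mathcal{E}_r^c$, and then tally batches and samples. The dichotomy is almost immediate from Lemma~\ref{lem:sufficientsampling}: fix a phase $r$ with $\mathcal{E}_r$ holding; either the test on line~\ref{algl:test} fails, in which case $\overline{T}^\star(\hat B_r) > T_r$ and the second batch is skipped, or the test passes and the second batch sets $N_i \ge \gamma_r \overline{w}_i^\star \overline{T}^\star(\hat B_r)$ for every $i$. In that latter case, because $\mathcal{E}_r$ places $\hat{\bm\mu}^r \in \hat B_r$, Lemma~\ref{lem:sufficientsampling} applied with $\gamma = \gamma_r$ gives $\inf_{\bm\lambda\in Alt_{\hat{\bm\mu}^r}}\sum_i N_i (\hat\mu_i^r-\lambda_i)^2/(2\sigma^2) \ge \gamma_r$. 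Combining this with $\gamma_r = \beta(\bar t_r,\delta) \ge \beta(t,\delta)$---valid because $\beta$ is nondecreasing in $t$ and the cumulative count at the end of phase $r$ is at most $\bar t_r$ by its very construction---the stopping rule triggers.

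Next I would use that $\bm\mu \in \hat B_r$ on $\mathcal{E}_r$ implies $\overline{T}^\star(\hat B_r) \ge T^\star(\bm\mu)$, so every phase $r < r_\mu := \lceil \log_2(T^\star(\bm\mu)/T_0)\rceil$ on which $\mathcal{E}_r$ holds uses only its first batch (in particular $R^* \ge r_\mu$). By the definition of $R^*$, every $r \ge R^*$ with $\mathcal{E}_r$ satisfies $\overline{T}^\star(\hat B_r) \le T_r$ and hence stops by the dichotomy. Letting $R$ denote the stopping phase, this gives $R \le \min\{r \ge R^* : \mathcal{E}_r\text{ holds}\} \le R^* + \sum_{r \ge R^*}\mathbb{I}(\mathcal{E}_r^c)$. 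For the tally I would then bound each phase crudely by $2$ batches to obtain $R_\delta \le 2R$, and credit back a saving of $1$ for each phase $r<r_\mu$ on which $\mathcal{E}_r$ held. Using $r_\mu \le \log_2(T^\star(\bm\mu)/T_0)+1$ and the disjointness of $\{r<r_\mu\}$ and $\{r \ge R^* \ge r_\mu\}$,
\[
R_\delta \le 2R - r_\mu + \sum_{r<r_\mu}\mathbb{I}(\mathcal{E}_r^c) \le 2R^* - \log_2(T^\star(\bm\mu)/T_0) + 1 + 2\sum_{r=1}^\infty \mathbb{I}(\mathcal{E}_r^c).
\]
For samples, $\tau_\delta \le \bar t_R$ combined with $R \le R^* + \sum_{r \ge R^*}\mathbb{I}(\mathcal{E}_r^c)$ and the loose telescoping $\bar t_{R^*+k} \le \bar t_{R^*} + \sum_{j=1}^k \bar t_{R^*+j}$ yields $\tau_\delta \le \bar t_{R^*} + \sum_{r \ge R^*}\mathbb{I}(\mathcal{E}_r^c)\bar t_{r+1}$.

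The hard part will be the bookkeeping: both $R$ and $R^*$ are random and their values depend on the entire sequence of events $\mathcal{E}_r$, and the monotonicity step in the dichotomy needs the cumulative sample count at the end of phase $r$ to be bounded by $\bar t_r = K\cdot 2^r l_{1,r} + \gamma_r T_r$. Establishing this last point is really only the observation that in the worst case every phase up to $r$ runs both its batches in full; once that is in hand, what remains is the ``worst-case $2R$ batches minus $r_\mu$ saved early phases plus $2\sum \mathbb{I}(\mathcal{E}_r^c)$ penalty'' accounting sketched above, which is purely algebraic.
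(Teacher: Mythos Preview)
Your dichotomy step is identical to the paper's. For the complexity bounds, however, you take a genuinely different route. The paper introduces the event $\mathcal{C}_r$ that the algorithm reaches phase $r$ and does not stop there, derives $\mathcal{C}_r \subseteq \mathcal{E}_r^c \cup \{\overline{T}^\star(\hat{B}_r) > T_r\}$ from the dichotomy, and then expresses $R_\delta$ and $\tau_\delta$ directly as sums of indicators $\mathbb{I}(\mathcal{C}_{r-1})$ and $\mathbb{I}(\mathcal{C}_{r-1}\wedge\{\overline{T}^\star(\hat B_r)\le T_r\})$; splitting these sums at $R^*$ and using $\mathcal{C}_r \subseteq \mathcal{E}_r^c$ for $r \ge R^*$ gives the result without ever naming the stopping phase. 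You instead first bound the stopping phase $R$ by $R^* + \sum_{r\ge R^*}\mathbb{I}(\mathcal{E}_r^c)$ and then tally batches via a crude $2R$ count with a credit-back for the single-batch early phases. Both arrive at the stated bounds; the paper's indicator accounting is tidier because phases that are never reached automatically contribute zero, whereas your credit-back step quietly assumes every phase $r<r_\mu$ is executed.

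Two small slips to clean up. First, the inequality you invoke, $r_\mu \le \log_2(T^\star(\bm\mu)/T_0)+1$, is the wrong direction for upper-bounding $-r_\mu$; what you actually need (and have, since $r_\mu$ is a ceiling) is $r_\mu \ge \log_2(T^\star(\bm\mu)/T_0)$, which in fact yields the bound without the extra $+1$. Second, your disjointness step combining $2\sum_{r\ge R^*}\mathbb{I}(\mathcal{E}_r^c)$ and $\sum_{r<r_\mu}\mathbb{I}(\mathcal{E}_r^c)$ relies on $R^*\ge r_\mu$, which you assert but do not justify. The paper makes the same assertion; strictly speaking it can fail, but only on outcomes where $\mathcal{E}_r$ fails for every $r\in[R^*,r_\mu)$, and then the at-least-$r_\mu-R^*$ extra failure indicators restore the inequality.
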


Proof in Appendix~\ref{app:ub_proof}. The theorem gives upper bounds on the batch and sample complexities that depend on the random variable $R^*$.
We then need to control $R^*$, which requires an understanding of how $\overline{T}^\star(\hat{B}_{r})$ changes as we explore.

\subsection{Batch and Sample Complexities}

We derive upper bounds on the batch and sample complexities of Algorithm~\ref{alg:phased}.

Our algorithm explores uniformly until the worst-case complexity $\overline{T}^\star(\hat{B}_r)$ is close to $T^\star(\bm\mu)$.
For that to work, the distributions of the arms and the problem geometry need to be such that we can indeed estimate the complexity.
To be able to write a bound on the sample and batch complexities, we further need to quantify the speed of estimation. We introduce an assumption to that effect.
We will check that this assumption is satisfied on various tasks like best arm identification and thresholding bandits.

\begin{assumption}\label{asm:estimation}
There exists a function $b : \mathbb{R}^K \mapsto \mathbb{R}_+$ such that for any $\bm\mu$, for all $\varepsilon \le b(\bm\mu)$ and any $\bm\mu' \in \mathcal B_{\infty}(\bm\mu, \varepsilon) \coloneqq \{\bm\nu \in \mathbb{R}^K \mid \Vert \bm\nu - \bm\mu \Vert_\infty \le \varepsilon\}$, 
%\begin{align*}
$\ln \overline{T}^\star( \mathcal B_{\infty}(\bm\mu, \varepsilon)) - \ln T^\star(\bm\mu')
\le \varepsilon / b(\bm\mu)
\: .$
\end{assumption}

Assumption~\ref{asm:estimation} allows us to know how much uniform exploration is enough to relate the worst case complexity $\overline{T}^\star( \mathcal B_{\infty}(\bm\mu, \varepsilon))$ over a ball and $T^\star(\bm\mu')$ for $\bm\mu'$ in that ball.
We introduce an other assumption, which is simpler to check on many problems.

\begin{assumption}\label{asm:TStar_set}
For all $\bm\mu$, for all $\varepsilon \ge 0$,
\begin{align*}
\overline{T}^\star( \mathcal B_{\infty}(\bm\mu, \varepsilon))
= \sup_{\bm\mu' \in \mathcal B_{\infty}(\bm\mu, \varepsilon)} T^\star(\bm\mu')
\: .
\end{align*}
\end{assumption}

The reason for introducing Assumption~\ref{asm:TStar_set} is that it provides a potentially simpler way to prove Assumption~\ref{asm:estimation}, as we show in the next lemma.

\begin{lemma}\label{lem:asm2_implies_asm1}
For all $\bm\mu$ and $\bm\mu'$ with $\Vert \bm\mu - \bm\mu' \Vert_\infty \le \sqrt{\sigma^2 / (2 T^\star(\bm\mu))}$,
\begin{align*}
\left\vert \ln T^\star(\bm\mu') - \ln T^\star(\bm\mu) \right\vert
\le \sqrt{\frac{8}{\sigma^2}T^\star(\bm\mu)} \ \Vert \bm\mu - \bm\mu' \Vert_\infty
\: .
\end{align*}
As a consequence, if Assumption~\ref{asm:TStar_set} is satisfied, then Assumption~\ref{asm:estimation} is satisfied with $b(\bm\mu) = \sqrt{\sigma^2/(8 T^\star(\bm\mu))}$.
\end{lemma}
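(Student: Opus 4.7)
The plan is to prove the first (log-Lipschitz) estimate by reducing to a Lipschitz bound on $\sqrt{(T^\star)^{-1}}$, and then derive the consequence as an easy corollary under Assumption~\ref{asm:TStar_set}. Write $f(\bm\nu) \coloneqq (T^\star(\bm\nu))^{-1}$. The target intermediate bound is $|\sqrt{f(\bm\mu)}-\sqrt{f(\bm\mu')}| \le \Vert \bm\mu-\bm\mu' \Vert_\infty/\sqrt{2\sigma^2}$, from which the log-Lipschitz statement follows by elementary calculus.

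A preliminary observation is that the hypothesis $\Vert\bm\mu-\bm\mu'\Vert_\infty \le \sqrt{\sigma^2/(2T^\star(\bm\mu))}$ forces $i^\star(\bm\mu)=i^\star(\bm\mu')$, hence $Alt_{\bm\mu}=Alt_{\bm\mu'}$: if instead $\bm\mu'\in Alt_{\bm\mu}$, then plugging $\bm\lambda=\bm\mu'$ into the inner infimum defining $f(\bm\mu)$ with any $w\in\Delta_K$ gives $f(\bm\mu)\le\Vert\bm\mu-\bm\mu'\Vert_\infty^2/(2\sigma^2)$, contradicting the hypothesis. Next, take $w^\star=w^\star(\bm\mu)$ attaining the outer sup in $f(\bm\mu)$ and apply the triangle inequality in the seminorm $\Vert x\Vert_{w^\star,\sigma}=\sqrt{\sum_i w_i^\star x_i^2/(2\sigma^2)}$, which gives, for every $\bm\lambda$,
\begin{align*}
\Vert \bm\mu'-\bm\lambda\Vert_{w^\star,\sigma}
\ge \Vert\bm\mu-\bm\lambda\Vert_{w^\star,\sigma} - \Vert\bm\mu-\bm\mu'\Vert_\infty/\sqrt{2\sigma^2}
\: ,
\end{align*}
using $\sum_i w_i^\star=1$ to control the last term. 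Taking the infimum over the common set $Alt_{\bm\mu}=Alt_{\bm\mu'}$, using monotonicity of $\sqrt{\cdot}$ to move $\inf_{\bm\lambda}$ inside the square root, and finally lower-bounding $\sqrt{f(\bm\mu')}$ by plugging the feasible $w^\star$ into the outer sup over $w$, yields $\sqrt{f(\bm\mu')}\ge\sqrt{f(\bm\mu)}-\Vert\bm\mu-\bm\mu'\Vert_\infty/\sqrt{2\sigma^2}$. The reverse direction is symmetric, starting from $w^\star(\bm\mu')$.

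To convert to the log-Lipschitz bound on $T^\star=1/f$, set $u=\sqrt{T^\star(\bm\mu)}(\sqrt{f(\bm\mu')}-\sqrt{f(\bm\mu)})$, so that $|u|\le \Vert\bm\mu-\bm\mu'\Vert_\infty\sqrt{T^\star(\bm\mu)/(2\sigma^2)}\le 1/2$ by hypothesis. Then $f(\bm\mu')/f(\bm\mu)=(1+u)^2$ and $\ln T^\star(\bm\mu')-\ln T^\star(\bm\mu)=-2\ln(1+u)$; the elementary bound $|\ln(1+u)|\le 2|u|$ for $|u|\le 1/2$ (via $-\ln(1-x)\le x/(1-x)$) then delivers the announced $\sqrt{8T^\star(\bm\mu)/\sigma^2}\,\Vert\bm\mu-\bm\mu'\Vert_\infty$ control. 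For the consequence, Assumption~\ref{asm:TStar_set} allows the rewriting $\ln\overline{T}^\star(\mathcal B_\infty(\bm\mu,\varepsilon))=\sup_{\bm\mu''\in\mathcal B_\infty(\bm\mu,\varepsilon)}\ln T^\star(\bm\mu'')$; for $\varepsilon\le b(\bm\mu)$ the first part applies between every point of the ball and $\bm\mu$, and splitting $\ln T^\star(\bm\mu'')-\ln T^\star(\bm\mu')$ through $\bm\mu$ yields the required $O(\varepsilon/b(\bm\mu))$ bound, i.e.\ exactly Assumption~\ref{asm:estimation}.

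The subtle step is the sup--inf--sqrt manipulation used to Lipschitz-control $\sqrt{f}$: the triangle inequality must be applied pointwise in $\bm\lambda$, $\inf_{\bm\lambda}$ pulled inside $\sqrt{\cdot}$ by monotonicity, and only then $w^\star$ substituted into the outer sup --- each interchange is valid only in the favourable direction, so the order is dictated by the direction of the desired bound. The identification $Alt_{\bm\mu}=Alt_{\bm\mu'}$ is what makes infimizing over a common alternative set meaningful on both sides of the inequality.
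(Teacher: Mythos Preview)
Your proposal is correct and follows essentially the same route as the paper: a Lipschitz bound on $\sqrt{(T^\star)^{-1}}$ via the triangle inequality in the weighted seminorm $\Vert\cdot\Vert_{w^\star}$, followed by the elementary conversion to a bound on $|\ln T^\star|$. The only organizational difference is that the paper proves the $\sqrt{(T^\star)^{-1}}$ Lipschitz estimate for \emph{all} pairs $\bm\mu,\bm\mu'$ (handling $\bm\mu'\in Alt_{\bm\mu}$ by noting that then $\inf_{\lambda\in Alt_{\bm\mu}}\Vert\bm\mu'-\lambda\Vert_{w^\star}=0\le \inf_{\lambda\in Alt_{\bm\mu'}}\Vert\bm\mu'-\lambda\Vert_{w^\star}$), whereas you invoke the distance hypothesis up front to force $Alt_{\bm\mu}=Alt_{\bm\mu'}$; both are valid, the paper's version being slightly more self-contained as a standalone lemma.
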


We can now present the guarantees of our algorithm, for any pure exploration problem for which Assumption~\ref{asm:estimation} holds.

\begin{theorem}\label{thm:compexity_upper_bounds}
Suppose that Assumption~\ref{asm:estimation} is satisfied and let $T^\star_b(\bm\mu) \coloneqq \max\left\{\frac{\sigma^2}{b(\bm\mu)^2}, 2 e T^\star(\bm\mu) \right\}$.
Then PET (Algorithm~\ref{alg:phased}) has expected batch and sample complexities which satisfy
\begin{align*}
\mathbb{E}\left[R_\delta\right]
&\le \log_2 \frac{T^\star_b(\bm\mu)}{T_0} + \log_2 \frac{T^\star_b(\bm\mu)}{T^\star(\bm\mu)} + 2
\: , \\
\mathbb{E}\left[\tau_\delta\right]
&\le 4 \ln \left(\frac{1}{\delta}\right) (T^\star_b(\bm\mu) + T_0^{-1})+ 20K (\ln K + 4) (T^\star_b(\bm\mu) + T_0^{-1})
\\ & \quad + 48 K (T^\star_b(\bm\mu) \ln T^\star_b(\bm\mu) + T_0^{-1} \ln(4T_0))
\: .
\end{align*}
\end{theorem}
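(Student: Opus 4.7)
The plan is to combine Lemma~\ref{th:alggen} with two controls: a deterministic upper bound on the auxiliary random variable $R^\star$, and a tail bound on $\bP(\cE_r^c)$. For the first, fix a phase $r$ on which $\cE_r$ holds. Then $\bm\mu\in \hat B_r=\mathcal B_\infty(\tilde{\bm\mu}^r,\varepsilon_r)$, so by the triangle inequality $\hat B_r\subseteq \mathcal B_\infty(\bm\mu,2\varepsilon_r)$, and monotonicity of $\overline T^\star$ yields $\overline T^\star(\hat B_r)\le \overline T^\star(\mathcal B_\infty(\bm\mu,2\varepsilon_r))$. Unfolding $l_{1,r}=32 T_0\ln(2\sqrt{2K}T_r)$ and $p_r=T_{r+1}^{-2}$ together with the identity $\ln(8KT_r^2)=2\ln(2\sqrt{2K}T_r)$ simplifies the radius to $\varepsilon_r^2=\sigma^2/(8T_r)$. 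Hence whenever $T_r\ge \sigma^2/b(\bm\mu)^2$ one has $2\varepsilon_r\le b(\bm\mu)/\sqrt{2}\le b(\bm\mu)$, and Assumption~\ref{asm:estimation} applied with $\bm\mu'=\bm\mu$ gives $\overline T^\star(\hat B_r)\le T^\star(\bm\mu)\,e^{2\varepsilon_r/b(\bm\mu)}\le T^\star(\bm\mu)\,e^{1/\sqrt{2}}$. Since $e^{1/\sqrt{2}}<2e$, combining with $T_r\ge 2eT^\star(\bm\mu)$ yields $\overline T^\star(\hat B_r)\le T_r$. Both sufficient conditions hold as soon as $T_r\ge T^\star_b(\bm\mu)$, which happens at $r^\star\coloneqq \lceil \log_2(T^\star_b(\bm\mu)/T_0)\rceil$; consequently $R^\star\le r^\star$ pathwise.

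Second, the radius $\varepsilon_r$ is exactly the Hoeffding bound for $2^r l_{1,r}$ sub-Gaussian samples at per-arm confidence $p_r/K$, so a union bound over arms yields $\bP(\|\tilde{\bm\mu}^r-\bm\mu\|_\infty>\varepsilon_r)\le p_r=T_{r+1}^{-2}$, and an analogous argument for $\hat{\bm\mu}^r$ (based on at least as many samples per arm) gives $\bP(\cE_r^c)\le 2T_{r+1}^{-2}$, so $\sum_{r\ge 0}\bP(\cE_r^c)$ is bounded by an absolute constant. Taking expectations in the batch bound of Lemma~\ref{th:alggen},
\begin{align*}
\bE[R_\delta]\le 2\,\bE[R^\star] - \log_2\frac{T^\star(\bm\mu)}{T_0} + 1 + 2\sum_{r\ge 1}\bP(\cE_r^c),
\end{align*}
and substituting $\bE[R^\star]\le r^\star\le \log_2(T^\star_b(\bm\mu)/T_0)+1$ together with the tail bound gives, after rearrangement, the stated expression $\log_2(T^\star_b/T_0)+\log_2(T^\star_b/T^\star)+2$.

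For the sample complexity, the same deterministic control $R^\star\le r^\star$ gives $\bE[\tau_\delta]\le \bar t_{r^\star}+\sum_{r\ge r^\star}\bP(\cE_r^c)\,\bar t_{r+1}$. Unfolding $\bar t_r = 32 K T_r\ln(2\sqrt{2K}T_r)+\gamma_r T_r$, plugging $T_{r^\star}\le 2T^\star_b(\bm\mu)$ and bounding $\gamma_{r^\star}=\beta(\bar t_{r^\star},\delta)\le 2\ln(1/\delta)+O\bigl(K\ln\ln\bar t_{r^\star}+K\bigr)$ via Lemma~\ref{lem:beta_subG} recovers the three dominant terms $4\ln(1/\delta)T^\star_b$, $20K(\ln K+4)T^\star_b$ and $48K\,T^\star_b\ln T^\star_b$; the tail $\sum_{r\ge r^\star}2T_{r+1}^{-2}\bar t_{r+1}$ is a geometric-type series contributing only the lower-order $T_0^{-1}$ remainders in the statement.

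The hard part is the final accounting: carrying the $\tfrac{K}{2}\overline W(\cdot)$ form of the threshold through the composition $\bar t_{r^\star}=K\,2^{r^\star}l_{1,r^\star}+\gamma_{r^\star}T_{r^\star}$ while separating the $\ln(1/\delta)$, $K\ln K$ and $K\ln T^\star_b$ contributions so that the explicit constants $4$, $20$ and $48$ appear as claimed is tedious. The conceptual core instead lies in the first step: the definition $T^\star_b(\bm\mu)=\max\{\sigma^2/b(\bm\mu)^2,2eT^\star(\bm\mu)\}$ is calibrated exactly so that both sufficient conditions (Assumption~\ref{asm:estimation} applicable, and the resulting exponential blow-up dominated by $T_r$) trigger simultaneously, turning the nominally random quantity $R^\star$ into a deterministic logarithm of $T^\star_b/T_0$.
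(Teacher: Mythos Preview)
Your approach is essentially the paper's: reduce via Lemma~\ref{th:alggen} to a deterministic bound on $R^\star$ plus tail control of $\cE_r^c$, exploit the closed form $\varepsilon_r^2=\sigma^2/(8T_r)$ together with Assumption~\ref{asm:estimation} for the former, and sub-Gaussian concentration for the latter. The structure and all ingredients are correct.

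Two places where your constants do not quite land. First, your sufficient condition $T_r\ge T^\star_b(\bm\mu)$ yields only $R^\star\le\lceil\log_2(T^\star_b/T_0)\rceil\le\log_2(T^\star_b/T_0)+1$, and substituting into $2R^\star-\log_2(T^\star/T_0)+1+2\sum p_r$ gives an additive constant exceeding $3$, not the stated $+2$. The paper instead defines $r_0=\min\{r:2\varepsilon_r\le b(\bm\mu)\}$ and $r_1=\min\{r:T_r\ge eT^\star(\bm\mu)\}$, and bounds $T_{r_0}$ and $T_{r_1}$ from the \emph{preceding} phase (the last one that fails the test), obtaining $T_{r_0}\le\sigma^2/b(\bm\mu)^2$ and $T_{r_1}\le 2eT^\star(\bm\mu)$ directly; this gives $r^*=\max\{r_0,r_1\}\le\log_2(T^\star_b/T_0)$ without a ceiling, and hence the $+2$ (together with $T_{r^*}\le T^\star_b$ rather than your $T_{r^\star}\le 2T^\star_b$ for the sample complexity). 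Second, your bound $\gamma_{r^\star}\le 2\ln(1/\delta)+O(K\ln\ln\bar t_{r^\star})$ is the direct estimate on $\beta$ but does not close the fixed point $\gamma_r=\beta(\bar t_r,\delta)$, since $\bar t_r$ itself depends on $\gamma_r$. The paper isolates this in a dedicated lemma (Lemma~\ref{lem:gamma_ub_bis}), upper-bounding $\beta$ by $2\ln(1/\delta)+4K\ln(t/K)+\mathrm{const}\cdot K$ and then applying $\overline W(x)\le 2x$ once more to solve the fixed point, which is the origin of the factor $4$ in $4\ln(1/\delta)(T^\star_b+T_0^{-1})$ and of the explicit $8K\ln T_r$ and $4K(11+\ln K)$ pieces that combine into the $48K$ and $20K(\ln K+4)$ constants.
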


If Assumption~\ref{asm:TStar_set} is satisfied, we get from Lemma~\ref{lem:asm2_implies_asm1} that $T^\star_b(\bm\mu) \le 8 T^\star(\bm\mu)$.
The batch complexity of the algorithm is then bounded by $\log_2 \frac{T^\star(\bm\mu)}{T_0} + 5$.
This should be compared to the $\ln \frac{T^\star(\bm\mu)}{T_{\min}}$ term of the lower bound, where $T_{\min}$ is the smallest complexity on which the algorithm is both $\delta$-correct and has expected sample complexity close to $T^\star(\bm\mu) \log(1/\delta)$.
Since PET uses $T_0$ for the first guess of the sample complexity, it cannot match the sample complexity of an instance $\bm\mu$ with $T^\star(\bm\mu) \le T_0$, hence $T_0$ is the $T_{\min}$ of our algorithm.
Hence, PET matches the $\ln \frac{T^\star(\bm\mu)}{T_{\min}}$ component of the batch complexity lower bound. 
We also remark that if we know exactly $T^\star(\bm\mu)$ in advance, then we can use it as $T_0$ and the algorithm runs in at most 5 batches.

On the sample complexity side, the $\delta$-dependent term is proportional to $T^\star(\bm\mu) \log(1/\delta)$, which is the right dependence in $\delta$, up to the multiplicative constant. The optimal asymptotic complexity as $\delta \to 0$ is exactly $T^\star(\bm\mu) \log(1/\delta)$, with factor 1.
Our constant could be improved: we made the bound simple at the expanse of a few larger constants.
For example the factor 4 in $4 \ln \left(\frac{1}{\delta}\right) (T^\star_b(\bm\mu) + T_0^{-1})$ is the result of using the coarse inequality $x + \log x \le 2 x$ twice.
In the definition of $T_b^\star(\bm\mu)$, the $2$ in $2eT^\star(\bm\mu)$ is due to the choice of doubling $T_r$ at each phase: choosing a multiplication by $(1+\varepsilon)$ instead of $2$ would reduce that. The $e$ factor, as well as the 8 in $b(\bm\mu)$, could also be reduced to $1 + \varepsilon$ at the cost of constant terms elsewhere.

The dominant term of the sample complexity as function of $T^\star(\bm\mu)$ is $48 K T^\star(\bm\mu) \ln T^\star(\bm\mu)$. \citet{jamiesonLilUCBOptimal2014} have shown that for two arms the optimal dependence is $O(T^\star(\bm\mu) \ln\ln T^\star(\bm\mu))$, which means that our algorithm loses a factor $K$.
This is due to the uniform exploration: we sample until every arm's mean is estimated within $\sqrt{T^\star(\bm\mu)/\sigma^2}$.
We conjecture that a more adaptive exploration could improve that dependence.
It is possible in BAI, as demonstrated by \citet{jinOptimalBatchedBest2023}. How to do it for any pure exploration problem is an open question.

\paragraph{Sketch of proof}
By Theorem~\ref{th:alggen}, we can bound the batch and sample complexities by finding an upper bound for $R^* = \min \{r \mid \forall r' \ge r, \ \mathcal E_{r'} \implies \overline{T}^\star(\hat{B}_{r'}) \le T_{r'}\}$ and by bounding the probability of $\mathcal E_r$.

Let $r_0 = \min\{r \mid 2 \varepsilon_r \le b(\bm\mu)\}$, where $b(\bm\mu)$ is defined in Assumption~\ref{asm:estimation}.
Then for any $r \ge r_0$, under $\mathcal E_{r}$,
$\hat{B}_{r} = \mathcal B_{\infty}(\tilde{\bm\mu}_r, \varepsilon_r) \subseteq \mathcal B_{\infty}(\bm\mu, 2\varepsilon_{r_0})$ and thus for any $\bm\mu' \in \hat{B}_{r}$,
$\ln \overline{T}^\star(\hat{B}_{r}) - \ln T^\star(\bm\mu') \le 1$ .
Hence in the event $\mathcal E_{r}$, we get $\overline{T}^\star(\hat{B}_{r}) \le e T^\star(\bm\mu)$ since $\bm\mu \in \hat{B}_r$.

Let $r_1 = \min \{r \mid T_r \ge e T^\star(\bm\mu)\}$.
Then for $r \ge \max\{r_0, r_1\}$, in the event $\mathcal E_{r}$, $T_r \ge e T^\star(\bm\mu) \ge \overline{T}^\star(\hat{B}_{r})$.
We get that $R^* \le r^* \coloneqq \max\{r_0, r_1\}$.

By concentration, since we suppose $\sigma^2$-sub-Gaussian arm distributions, for $\varepsilon_r = \sqrt{\frac{2 \sigma^2}{2^r l_{1,r}} \log \frac{2 K}{p_r}}$ we have $\mathbb{P}(\mathcal E_r) \le p_r$.
We can thus bound the expected batch and sample complexities.
\begin{align*}
\mathbb{E}\left[R_\delta\right]
&\le 2r^* - \log_2 \frac{T^\star(\bm\mu)}{T_0} + 1 + 2\sum_{r = 1}^{+\infty} p_r
\: , \\
\mathbb{E}\left[\tau_\delta\right]
&\le \bar{t}_{r^*} + \sum_{r = 1}^{+\infty} p_{r} \bar{t}_{r+1}
\: .
\end{align*}
With our choices of $p_r$, $\gamma_r$ and $l_{1,r}$, we can finally compute bounds on the sums, $r_0$ and $r_1$ (see Appendix~\ref{app:ub_proof}).

\subsection{Best Arm Identification and Thresholding Bandits}

We have provided a general theorem about Algorithm~\ref{alg:phased} on a generic pure exploration task with sub-Gaussian distributions.
However, that theorem requires that Assumption~\ref{asm:estimation} be satisfied.
We now show that Assumption~\ref{asm:estimation} holds on the Top-k task (including BAI) and on the thresholding bandit problem, by showing that we have Assumption~\ref{asm:TStar_set}.

\begin{restatable}[]{lemma}{lemconstrBbai}\label{lem:constrBbai}
	In Top-$k$, including best arm identification, as well as for thresholding bandits, Assumption~\ref{asm:TStar_set} holds.
	That is, for all $\bm\mu$ and all $\varepsilon \ge 0$,
	\begin{align*}
		\overline{T}^\star( \mathcal B_{\infty}(\bm\mu, \varepsilon))
		= \max_{\bm\mu' \in \mathcal B_{\infty}(\bm\mu, \varepsilon)} T^\star(\bm\mu')
		\: .
	\end{align*}
\end{restatable}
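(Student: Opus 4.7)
The plan is to establish the nontrivial direction $\overline{T}^\star(\mathcal B_\infty(\bm\mu, \varepsilon)) \leq \max_{\bm\mu' \in \mathcal B_\infty(\bm\mu, \varepsilon)} T^\star(\bm\mu')$; the reverse inequality is the standard max-inf $\leq$ inf-max bound already noted just below the definition of $\overline{T}^\star$. The general strategy is to exhibit, for each problem, an explicit worst-case instance $\bm\mu^\ast \in B \coloneqq \mathcal B_\infty(\bm\mu, \varepsilon)$ and show that the associated Track-and-Stop proportions $w^\star(\bm\mu^\ast)$ certify the bound uniformly on $B$, i.e.\ that $\inf_{\bm\lambda \in Alt_{\bm\nu}} \sum_i w^\star_i(\bm\mu^\ast)(\nu_i - \lambda_i)^2/(2\sigma^2) \geq 1/T^\star(\bm\mu^\ast)$ for every $\bm\nu \in B$. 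This immediately gives $\overline{T}^\star(B) \leq T^\star(\bm\mu^\ast) \leq \max_{\bm\mu' \in B} T^\star(\bm\mu')$.

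For the thresholding bandit problem, $Alt_{\bm\nu}$ is the union over $i$ of the half-space where $\lambda_i$ has crossed $\tau$, so the cheapest perturbation moves a single arm to the threshold and the inner infimum equals $\min_i w_i (\nu_i - \tau)^2/(2\sigma^2)$. Since this expression is separable in $\bm\nu$, the operators $\inf_{\bm\nu \in B}$ and $\min_i$ commute, and the value becomes $\min_i w_i \tilde{\Delta}_i^2/(2\sigma^2)$ with $\tilde{\Delta}_i = \max\{|\mu_i - \tau| - \varepsilon, 0\}$. Taking $\mu^\ast_i$ to be the point of $[\mu_i - \varepsilon, \mu_i + \varepsilon]$ closest to $\tau$ puts $\bm\mu^\ast$ in $B$ with $|\mu^\ast_i - \tau| = \tilde{\Delta}_i$, so $T^\star(\bm\mu^\ast)^{-1} = \overline{T}^\star(B)^{-1}$; if $\tau \in [\mu_i - \varepsilon, \mu_i + \varepsilon]$ for some $i$, both sides are $+\infty$ and the identity is trivial.

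For Top-$k$, the inner infimum admits the closed form $\min_{a \in i^\star(\bm\nu),\, b \notin i^\star(\bm\nu)} w_a w_b (\nu_a - \nu_b)^2 /(2\sigma^2 (w_a + w_b))$, obtained by sending $\lambda_a$ and $\lambda_b$ to their common $w$-weighted average. I split on the gap $\Delta_k \coloneqq \mu_{(k)} - \mu_{(k+1)}$ between the $k$-th and $(k+1)$-th ordered means of $\bm\mu$. If $\Delta_k > 2\varepsilon$, every $\bm\nu \in B$ shares the same top-$k$ set $I$ as $\bm\mu$, so I set $\mu^\ast_i = \mu_i - \varepsilon$ for $i \in I$ and $\mu^\ast_i = \mu_i + \varepsilon$ otherwise; for any $\bm\nu \in B$ and any $a \in I$, $b \notin I$, the bounds $\nu_a \geq \mu^\ast_a$ and $\nu_b \leq \mu^\ast_b$ yield $\nu_a - \nu_b \geq \mu^\ast_a - \mu^\ast_b > 0$, so termwise monotonicity of the closed form at $w = w^\star(\bm\mu^\ast)$ gives the required $1/T^\star(\bm\mu^\ast)$ bound. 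If instead $\Delta_k \leq 2\varepsilon$, the ball contains instances with $\nu_{(k)} - \nu_{(k+1)}$ arbitrarily small, forcing both $\sup_{\bm\mu' \in B} T^\star(\bm\mu') = +\infty$ and $\overline{T}^\star(B) = +\infty$.

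The main technical step is the Top-$k$ monotonicity argument: the closed-form inner infimum must be shown to be non-decreasing in each relevant gap $\nu_a - \nu_b$ uniformly across $B$, which is precisely what the construction of $\bm\mu^\ast$ enforces by pushing each such gap to its coordinatewise-worst value in $B$. The TBP case is cleaner because its inner infimum decouples across arms from the start, so the worst instance is assembled coordinate by coordinate.
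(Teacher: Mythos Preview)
Your proof is correct and follows essentially the same strategy as the paper: construct the hardest instance $\bm\mu^\ast$ (resp.\ $\bm b$) by pushing each coordinate of $\bm\mu$ toward the decision boundary by $\varepsilon$, then verify that the optimal proportions $w^\star(\bm\mu^\ast)$ give at least $1/T^\star(\bm\mu^\ast)$ for every $\bm\nu\in B$, which is exactly the premise of the paper's Lemma~\ref{lem:wbwst}. The one noteworthy difference is in the verification step for Top-$k$: the paper performs a three-case analysis on the location of the weighted midpoint $\mu'_{aj}$ relative to $b_a$ and $b_j$, whereas you use the equivalent closed form $\frac{w_a w_b}{w_a+w_b}(\nu_a-\nu_b)^2/(2\sigma^2)$ and argue directly from $\nu_a-\nu_b\ge \mu^\ast_a-\mu^\ast_b>0$---this is shorter and more transparent, but the underlying construction and conclusion are identical.
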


The proof (in Appendix~\ref{app:topk_threshold}) hinges on the fact that, whenever it contains only one answer, $\mathcal{B}_\infty(\bm\mu,\varepsilon)$ contains one instance that is more difficult than all the others, in the sense that sampling optimally to decide the answer of that instance also solves all other instances in the set.
This is a stronger property than the equality of the lemma.
From the existence of that hardest instance, we also get a computationally easy way to compute $\overline{w}^\star(\mathcal{B}_\infty(\bm\mu,\varepsilon))$ and $\overline{T}^\star(\mathcal{B}_\infty(\bm\mu,\varepsilon))$: first compute $\bm b$, and then compute $w^\star(\bm b)$ and $T^\star(\bm b)$.

\begin{figure}[!ht]
	\centering
	\scalebox{0.75}{
	\begin{tikzpicture}
		% Draw axes
		\draw[->] (0,2) -- (5.5,2) node[right] {$\mu_2$}; % x-axis
		\draw[->] (0,3) -- (0,6.5) node[above] {$\mu_1$}; % y-axis
		\draw[dashed] (0,2) -- (0,3);
		% Draw the line y = x
		\draw (2,2) -- (5.5,5.5) node[right] {$\mu_1=\mu_2$};
		% Fill and draw square with vertices (1,1), (2,1), (1,2), (2,2)
		\filldraw[fill=blue!20, draw=blue] (1,3.5) -- (2.5,3.5) -- (2.5,5) -- (1,5) -- cycle;
		\node at (2.5,5) [above] {{\color{blue}$\mathcal{B}_\infty(\bm\mu,\varepsilon)$}};
		\fill (1.75,4.25) circle (2pt) node[right] {$\bm \mu$};
		\fill (2.5,3.5) circle (2pt) node[right] {$\bm b$};
		\draw[->,red,thick] (4,6) -- (4.5,6);
		\draw[->,red,thick] (4,6) -- (4,5.5);
		\node at (4,6) [above] {{\color{red}towards more difficulty}};
	\end{tikzpicture}}
	\caption{$\bm b$ satisfying $\overline{T}^\star(\mathcal{B}_\infty(\bm\mu,\varepsilon))=T^\star(\bm b)$}
	\label{fig:box}
\end{figure}

We display in Figure~\ref{fig:box} a representation of what happens in two-armed BAI.
As long as $\mu_1>\mu_2$, bringing $\mu_1$ down and $\mu_2$ up can only make the problem harder.
If $\varepsilon$ is small enough that all the instances in $\mathcal{B}_\infty(\bm\mu,\varepsilon)$ have the same answer, then the instance on the corner of $\mathcal{B}_\infty(\bm\mu,\varepsilon)$ closest to the line $\mu_1=\mu_2$ is strictly more difficult than all the others.
That means that sampling enough to solve it is enough to solve all the other instances.

% !TeX root = ../all.tex
\begin{figure*}[!ht]
	\centering
	\subfigure[Number of samples before stopping in a random BAI instance, logarithmic scale]{\includegraphics[width=0.3\textwidth]{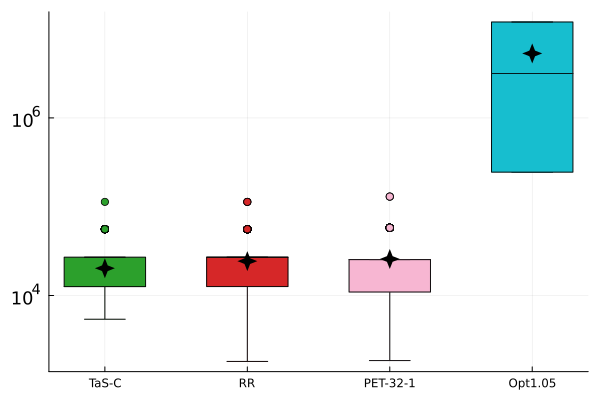}\label{fig:exp}
	}\hspace{1em}
	\subfigure[Number of rounds before stopping in a random BAI instance]{\includegraphics[width=0.3\textwidth]{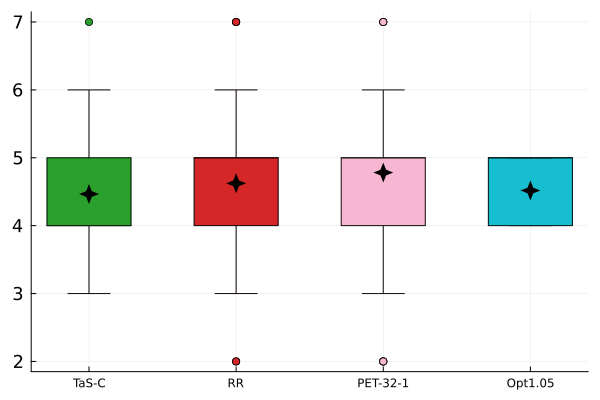}
		\label{fig:expr}} \hspace{1em}
	\subfigure[Number of samples before stopping in the min. threshold setting, hard instance]{\includegraphics[width=0.3\textwidth]{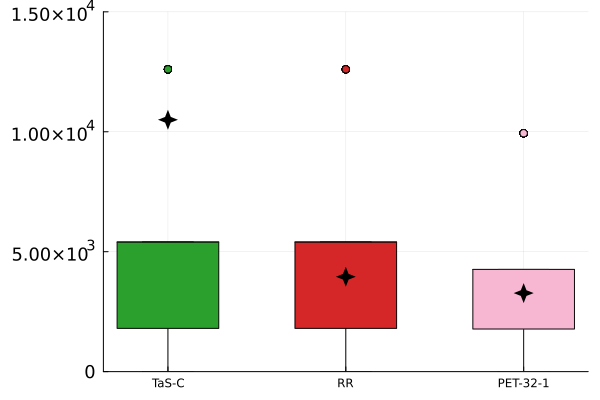}
		
		\label{fig:exptas}}\label{fig:experiments}\caption{Experimental results, $\delta=0.05$, $N=1000$ runs}\end{figure*}
	
\subsection{Experiments on the BAI setting}

Our algorithm PET is near-optimal in round and sample complexities for many pure exploration problems, and has theoretical guarantees for any pure exploration problem. To ascertain its practical performances, we compare it to baselines and state of the art algorithms for best arm identification and thresholding bandits.	

Each experiment is repeated over 1000 runs. All reward distributions are Gaussian with variance 1 and we use the confidence level $\delta = 0.05$, which is chosen for its relevance to statistical practice. We compare
\begin{itemize}[noitemsep]
	\item Round Robin (or uniform sampling), where the stopping rule is checked only at timesteps $(900\times2^r)_{r \ge 1}$;
	\item Track-and-Stop (TaS) \citep{garivierOptimalBestArm2016}, where the empirical value of $w$ is updated only at timesteps $(900\times 2^r)_r$, and the stopping rule is only checked at those times;
	\item Our algorithm PET, with $T_0 = 1$;
	\item Opt-BBAI \citep{jinOptimalBatchedBest2023} with $\alpha = 1.05$ and the quantities described in their Theorem 4.2.
\end{itemize}
The initial batch sizes for TaS and Round Robin were chosen to approximate the initial batch size of our algorithm, to not disadvantage them in terms of round complexity. We modified TaS in order to turn it into a batch algorithm. Note that there is no formal guarantee for the batch or sample complexity of that modification of TaS, but we use it as a sensible baseline. 

For the BAI experiment, we run each algorithm on $10$-arm instances where the best arm has mean $1$, and each other arm $i$ has mean uniformly sampled between $0.6$ and $0.9$.
See Figure~\ref{fig:exp} for the box plots of the sample complexities. The mean is indicated by a black cross.
While both our algorithm and Opt-BBAI use similarly few batches, PET outperforms Opt-BBAI for the sample complexity.
That algorithm is asymptotically optimal as $\delta\rightarrow 0$ but it uses batches that seem to be too large for moderate values of $\delta$ like the $0.05$ we use.

While the batch modification of TaS might seem to be a good alternative for the BAI experiment, there are instances of the thresholding setting where it performs sub-optimally.
That effect that was first observed in \citep{degenneNonAsymptoticPureExploration2019} for the fully online TaS and reflects that, contrary to our results, the sample complexity guarantees of TaS are only asymptotic. 
We run the algorithms on a thresholding bandit with threshold 0.6 and two arms with means 0.5 and 0.6 and observe that batched TaS has high average sample complexity (see Figure~\ref{fig:exptas}; the mean is the black cross), while PET does not.

% !TeX root = ../all.tex

\section{Perspectives}

We proved instance-dependent lower bounds for the batch complexity of any $\delta$-correct pure exploration algorithm.
These lower bounds get larger as the sample complexity of the algorithm decreases.
We introduced the Phased Explore then Track algorithm, for which we proved an upper bound for the sample complexity close to the lower bound for fully adaptive methods, as well as an upper bound for the batch complexity that is close to our lower bound.

The main open question raised by our work is how to explore in a better way than uniformly for a general pure exploration problem.
The goal of the exploration is to find a set $\hat{B}_r$ such that the worst case complexity $\overline{T}^\star(\hat{B}_r)$ is close to $T^\star(\bm\mu)$, which means estimating both $T^\star(\bm\mu)$ and $w^\star(\bm\mu)$.
Uniform exploration leads to a close to optimal batch complexity and to a sample complexity which has a good dependence in $\log(1/\delta)$.
However, it comes at the cost of $K T^\star(\bm\mu)$ samples used to explore every arm, which should ideally be around $T^\star(\bm\mu)$ instead.
For BAI and Top-k, the elimination strategies of \citep{hillelDistributedExplorationMultiArmed2013,jinOptimalBatchedBest2023} achieve that improvement, but the elimination criterion uses the particular link between the gaps and $T^\star$ in Top-k.
We would need to find a way to extend the elimination approach to other problems, for which there might not even be an obvious notion of gaps.
%It would be interesting to investigate if that adaptive exploration over $K$ arms can be done without introducing a dependence in $K$ in the batch complexity.
%In particular, if $T^\star(\bm\mu)$ is known in advance, our current algorithm uses at most 5 batches. Can we explore more adaptively in so few batches?

\section*{Acknowledgements}
	The authors acknowledge the funding of the French National Research Agency under the project FATE (ANR22-CE23-0016-01) and the PEPR IA FOUNDRY project (ANR-23-PEIA-0003). 
	The authors are members of the Inria team Scool.

\bibliographystyle{apalike}
\bibliography{bibli}

%%%%%%%%%%%%%%%%%%%%%%%%%%%%%%%%%%%%%%%%%%%%%%%%%%%%%%%%%%%%%%%%%%%%%%%%%%%%%%%
%%%%%%%%%%%%%%%%%%%%%%%%%%%%%%%%%%%%%%%%%%%%%%%%%%%%%%%%%%%%%%%%%%%%%%%%%%%%%%%
% APPENDIX
%%%%%%%%%%%%%%%%%%%%%%%%%%%%%%%%%%%%%%%%%%%%%%%%%%%%%%%%%%%%%%%%%%%%%%%%%%%%%%%
%%%%%%%%%%%%%%%%%%%%%%%%%%%%%%%%%%%%%%%%%%%%%%%%%%%%%%%%%%%%%%%%%%%%%%%%%%%%%%%
\newpage
\appendix
\onecolumn
% !TeX root = ../all.tex

\section{Proofs of the lower bounds}\label{app:lb}
\subsection{Preliminary lemmas}

For the sake of completeness, we start by restating and proving some results from \citep{taoCollaborativeLearningLimited2019} in slightly more general language.
\begin{definition}
	For some integers $r$ and $n$, define $\tau_\delta^r$ the number of samples before the end of round $r$.
\end{definition}
\begin{lemma}[Generalization of Lemma 27 of \citep{taoCollaborativeLearningLimited2019}]\label{lem:27f}
	For an algorithm, two instances ${\bm\nu}$ and ${\bm\nu}'$ and $r\in\bN$, \[\bP_{{\bm\nu}'}\{R_\delta\geq r+1,\tau_\delta^{r+1} \leq n+m\}\geq \bP_{\bm\nu}\{R_\delta \geq r+1,\tau_\delta^r\leq m\}-\bP_{\bm\nu} \{\tau_\delta>n\} -\Vert\cD_{\bm\nu}^m -\cD_{{\bm\nu}'}^m\Vert_{TV}\] where $\cD^m_{\bm\nu}$ is the distribution of rewards the algorithm got from $\bm\nu$ over $m$ steps.
\end{lemma}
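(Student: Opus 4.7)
The plan is to find an auxiliary event, measurable with respect to only the first $m$ samples, that sits between the event on the left of the inequality and the event on the right, and then transfer probabilities from $\bm\nu$ to $\bm\nu'$ via the total variation bound.

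Concretely, let $A = \{R_\delta\geq r+1,\tau_\delta^r\leq m\}$, and let $s_{r+1}$ denote the number of samples the algorithm commits to in batch $r+1$. Since the algorithm is batched, $s_{r+1}$ is a deterministic function of the rewards observed during rounds $1, \ldots, r$, and on $A$ those rewards form a prefix of the first $m$ samples, so both $A$ and the restriction $s_{r+1} \mathbb{I}_{A}$ are measurable with respect to $\cD^m$. Define $A'' = A \cap \{s_{r+1}\leq n\}$, which is likewise $\cD^m$-measurable.

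I then chain three inequalities. First, $A\setminus A'' \subseteq \{\tau_\delta > n\}$ under $\bP_{\bm\nu}$: on $A\setminus A''$ we have $R_\delta\ge r+1$, so batch $r+1$ runs to completion without early stopping, and hence $\tau_\delta\ge \tau_\delta^{r+1}=\tau_\delta^r+s_{r+1}>n$. This gives $\bP_{\bm\nu}(A'')\ge \bP_{\bm\nu}(A)-\bP_{\bm\nu}(\tau_\delta>n)$. Second, on $A''$ we have $R_\delta \ge r+1$ and $\tau_\delta^{r+1} = \tau_\delta^r + s_{r+1}\le m+n$, so $A'' \subseteq \{R_\delta\ge r+1,\tau_\delta^{r+1}\le n+m\}$ under either measure. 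Third, the $\cD^m$-measurability of $A''$ together with the definition of total variation yields $\bP_{\bm\nu'}(A'')\ge \bP_{\bm\nu}(A'')-\Vert\cD_{\bm\nu}^m-\cD_{{\bm\nu}'}^m\Vert_{TV}$. Combining gives the claim.

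The main conceptual point is the construction of $A''$: it must simultaneously (i) imply the target conclusion, (ii) differ from $A$ only on a sub-event of $\{\tau_\delta>n\}$, and (iii) remain measurable with respect to the first $m$ samples so that the total variation bound applies. The structural fact that makes this possible is precisely that a batched algorithm fixes $s_{r+1}$ at the beginning of round $r+1$ as a function of the rewards from rounds $1, \ldots, r$; in a fully sequential algorithm this reduction would be much more delicate, since the algorithm could adaptively decide to stop or lengthen round $r+1$ based on samples taken within it.
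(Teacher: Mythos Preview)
Your proof is correct and follows essentially the same route as the paper's: the intermediate event $A'' = \{R_\delta \ge r+1,\ \tau_\delta^r \le m,\ s_{r+1}\le n\}$ is exactly the paper's event $(R_\delta \geq r+1,\tau_\delta^{r} \leq m,\tau_\delta^{r+1}-\tau_\delta^r < n)$ up to the strict-versus-nonstrict inequality, and your three chained inequalities match the paper's Equations~\eqref{eq:mpntomn}, \eqref{eq:distprob}, and \eqref{eq:ajoutround}. The only cosmetic differences are notation ($s_{r+1}$ for $\tau_\delta^{r+1}-\tau_\delta^r$) and the order in which the three steps are presented.
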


\begin{proof}
	Fix a deterministic algorithm.

	First of all, \begin{equation}\label{eq:mpntomn}(R_\delta \geq r+1,\tau_\delta^{r} \leq m,\tau_\delta^{r+1}-\tau_\delta^r < n) \subseteq (R_\delta \geq r+1,\tau_\delta^{r+1}\leq n+m)\end{equation}
	
	And, since $(R_\delta \geq r+1,\tau_\delta^{r} \leq m,\tau_\delta^{r+1}-\tau_\delta^r < n)$ is determined by the first $m$ rewards (at the end of round $r$ using less than $m$ samples, the algorithm must choose the length of round $r+1$), \begin{equation}\label{eq:distprob} \bP_{{\bm\nu}'}\{R_\delta \geq r+1,\tau_\delta^{r} \leq m,\tau_\delta^{r+1}-\tau_\delta^r < n\} \geq \bP_{\bm\nu} \{R_\delta \geq r+1,\tau_\delta^{r} \leq m,\tau_\delta^{r+1}-\tau_\delta^r < n\}  -\Vert\cD_{\bm\nu}^m -\cD_{{\bm\nu}'}^m\Vert_{TV}\end{equation}

On the other hand, \begin{align*}
	(R_\delta \geq r+1,\tau_\delta^r\leq m)\setminus (R_\delta\geq r+1,\tau_\delta^r\leq m,\tau_\delta^{r+1}-\tau_\delta^r < n) &= (R_\delta\geq r+1,\tau_\delta^r\leq m, \tau_\delta^{r+1}-\tau_\delta^r \geq n) \\
	&\subseteq (\tau_\delta >n)
\end{align*} hence \begin{equation}\label{eq:ajoutround}\bP_{\bm\nu}\{R_\delta\geq r+1,\tau_\delta^r\leq m,\tau_\delta^{r+1}-\tau_\delta^r < n\}\geq \bP_{\bm\nu}\{R_\delta \geq r+1,\tau_\delta^r\leq m\}-\bP_{\bm\nu}\{\tau_\delta >n\}\end{equation}

	Hence, using Equations \eqref{eq:mpntomn},~\eqref{eq:distprob} then~\eqref{eq:ajoutround}, \begin{align*}
		\bP_{{\bm\nu}'}\{R_\delta \geq r+1,\tau_\delta^{r+1}\leq n+m\}&\geq \bP_{\bm\nu'} \{R_\delta \geq r+1,\tau_\delta^{r} \leq m,\tau_\delta^{r+1}-\tau_\delta^r < n\}\\
		&\geq \bP_{\bm\nu} \{R_\delta \geq r+1,\tau_\delta^{r} \leq m,\tau_\delta^{r+1}-\tau_\delta^r < n\} -\Vert\cD_{\bm\nu}^m -\cD_{{\bm\nu}'}^m\Vert_{TV} \\
		&\geq \bP_{\bm\nu}\{R_\delta \geq r+1,\tau_\delta^r\leq m\}-\bP_{\bm\nu} \{\tau_\delta>n\} -\Vert\cD_{\bm\nu}^m -\cD_{{\bm\nu}'}^m\Vert_{TV}
	\end{align*}
	
\end{proof}

\begin{lemma}[Generalization of Lemma 26 of \citep{taoCollaborativeLearningLimited2019}]\label{lem:26f_aux}
	For any $\delta$-correct algorithm, for all $m,r\in\bN$ and any two bandit instances ${\bm\nu}, {\bm\nu}'$, 
	we have
	\begin{align*}
	\bP_{\bm\nu}\{R_\delta\geq r+1,\tau_\delta^r\leq m\}
	\ge \bP_{\bm\nu}\{R_\delta\geq r,\tau_\delta^r\leq m\} - 2\delta - \Vert \cD_{\bm\nu}^m - \cD_{{\bm\nu}'}^m \Vert_{TV}
	\: .
	\end{align*}
\end{lemma}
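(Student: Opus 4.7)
The plan is to bound the quantity $\bP_{\bm\nu}\{R_\delta \geq r, \tau_\delta^r \leq m\} - \bP_{\bm\nu}\{R_\delta \geq r+1, \tau_\delta^r \leq m\}$, which is exactly $\bP_{\bm\nu}\{R_\delta = r, \tau_\delta^r \leq m\}$. The guiding observation is that on this event the algorithm has stopped no later than the end of round $r$ while consuming at most $m$ samples in total, so both the stopping decision and the returned answer $\hat{i}$ are deterministic functions of the first $m$ observations. This measurability is what opens the door to a change of measure between $\bm\nu$ and $\bm\nu'$.

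First I would decompose the event $\{R_\delta = r, \tau_\delta^r \leq m\}$ according to whether the returned answer matches $i^\star(\bm\nu)$ or not. The sub-event $\{R_\delta = r, \tau_\delta^r \leq m, \hat i \neq i^\star(\bm\nu)\}$ is contained in the overall error event $\{\hat i \neq i^\star(\bm\nu)\}$, whose probability is at most $\delta$ by $\delta$-correctness on $\bm\nu$. This disposes of one of the two $\delta$ contributions.

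Next, for the remaining sub-event $A = \{R_\delta = r, \tau_\delta^r \leq m, \hat i = i^\star(\bm\nu)\}$, I would invoke the measurability with respect to the first $m$ samples and apply the definition of total variation distance to write $\bP_{\bm\nu}(A) \leq \bP_{\bm\nu'}(A) + \|\cD_{\bm\nu}^m - \cD_{\bm\nu'}^m\|_{TV}$. Then, in the case $i^\star(\bm\nu) \neq i^\star(\bm\nu')$ (which is the interesting regime, guaranteed by the sequence-of-instances construction used in Lemma~\ref{th:theorec}), the event $A$ under $\bm\nu'$ entails $\hat i = i^\star(\bm\nu) \neq i^\star(\bm\nu')$, so $\bP_{\bm\nu'}(A) \leq \bP_{\bm\nu'}\{\hat i \neq i^\star(\bm\nu')\} \leq \delta$ again by $\delta$-correctness. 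Summing with the previous bound gives $\bP_{\bm\nu}\{R_\delta = r, \tau_\delta^r \leq m\} \leq 2\delta + \|\cD_{\bm\nu}^m - \cD_{\bm\nu'}^m\|_{TV}$, which rearranges into the stated inequality.

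The main obstacle is making the measurability argument rigorous: one has to check carefully that on $\{R_\delta = r, \tau_\delta^r \leq m\}$ the entire random trajectory of the algorithm up to and including the stopping decision and the output $\hat i$ is a function only of the first $m$ coordinates of the sample sequence. This uses the batched protocol: the sampling allocation of round $r$ is fixed at the end of round $r-1$, and the total budget used by that point is $\tau_\delta^r \leq m$, so no observation beyond the first $m$ influences what the algorithm has done up to its stopping. Once this point is properly justified, the rest is a routine application of $\delta$-correctness twice and the standard coupling bound for total variation.
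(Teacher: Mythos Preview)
Your proposal is correct and follows essentially the same route as the paper: decompose $\{R_\delta = r,\, \tau_\delta^r \le m\}$ according to whether $\hat i = i^\star(\bm\nu)$, bound the wrong-answer part by $\delta$ via $\delta$-correctness on $\bm\nu$, and bound the correct-answer part by a change of measure to $\bm\nu'$ plus $\delta$-correctness on $\bm\nu'$. You are also right to flag that the step $\bP_{\bm\nu'}(A)\le\delta$ needs $i^\star(\bm\nu)\ne i^\star(\bm\nu')$; the paper's proof likewise assumes $\bm\nu'\in Alt_{\bm\nu}$ at that point, even though the lemma statement says ``any two instances''.
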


\begin{proof}
Consider the event $\mathcal{F}_1=(R_\delta=r,\tau_\delta \leq m)$.
Denote by $\mathcal{F}_2$ the event that the algorithm returns the best arm of instance ${\bm\nu}$.
Then \(\bP_{\bm\nu}\{\mathcal{F}_1\}=\bP_{\bm\nu}\{\mathcal{F}_1\wedge \mathcal{F}_2\}+\bP_{\bm\nu}\{\mathcal{F}_1\wedge \overline{\mathcal{F}_2}\}\)
	
With $\mathcal{D}_{\bm\nu}^m$ the distribution of rewards over $m$ samples and some ${\bm\nu}'\in Alt_{\bm\nu}$,
\begin{align*}
\bP_{\bm\nu}\{\mathcal{F}_1\wedge \mathcal{F}_2\}
&\leq \bP_{{\bm\nu}'}\{\mathcal{F}_1\wedge \mathcal{F}_2\}+\Vert\cD_{\bm\nu}^m -\cD_{{\bm\nu}'}^m\Vert_{TV}
\\
&\leq \bP_{{\bm\nu}'}\{\mathcal{F}_2\} +\Vert\cD_{\bm\nu}^m -\cD_{{\bm\nu}'}^m\Vert_{TV}
\\
&\leq \delta+\Vert\cD_{\bm\nu}^m -\cD_{{\bm\nu}'}^m\Vert_{TV}
\: .
\end{align*}
On the other hand, $\mathbb{P}_{\bm\nu}\{\mathcal F_1 \wedge \overline{\mathcal F}_2\} \le \mathbb{P}_{\bm\nu}\{\overline{\mathcal F}_2\} \le \delta$.
Therefore $\bP_{\bm\nu}\{\mathcal{F}_1\}\leq 2\delta + \Vert\cD_{\bm\nu}^m -\cD_{{\bm\nu}'}^m\Vert_{TV}$.
Using \( \bP_{\bm\nu}\{R_\delta\geq r+1,\tau_\delta^r\leq m\}\geq \bP_{\bm\nu}\{R_\delta\geq r,\tau_\delta^r\leq m\}-\bP_{\bm\nu}(\mathcal{F}_1)\), we conclude.
\end{proof}

\begin{lemma}\label{lem:26f}
	For any $\delta$-correct algorithm, for all $m,r\in\bN$ and any bandit instance ${\bm\nu}$, 
	we have
	\begin{align*}
	\bP_{\bm\nu}\{R_\delta\geq r+1,\tau_\delta^r\leq m\}
	\ge \bP_{\bm\nu}\{R_\delta\geq r,\tau_\delta^r\leq m\} - 2\delta - \sqrt{\frac{m}{2} (T^\star(\bm\nu))^{-1}}
	\: .
	\end{align*}
\end{lemma}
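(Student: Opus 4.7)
The plan is to instantiate Lemma~\ref{lem:26f_aux} with a carefully chosen alternative $\bm\nu'\in Alt_{\bm\nu}$ and then bound the resulting total-variation distance by $\sqrt{\tfrac{m}{2}(T^\star(\bm\nu))^{-1}}$ via Pinsker's inequality combined with the sampling-rule decomposition of the KL divergence and the definition of $T^\star$.

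First I would apply Lemma~\ref{lem:26f_aux} so that for every $\bm\nu'\in Alt_{\bm\nu}$,
\[
\bP_{\bm\nu}\{R_\delta\geq r+1,\tau_\delta^r\leq m\}
\ge \bP_{\bm\nu}\{R_\delta\geq r,\tau_\delta^r\leq m\}-2\delta-\Vert\cD_{\bm\nu}^m-\cD_{{\bm\nu}'}^m\Vert_{TV}.
\]
Then by Pinsker, $\Vert\cD_{\bm\nu}^m-\cD_{{\bm\nu}'}^m\Vert_{TV}\le\sqrt{\tfrac12\KL(\cD_{\bm\nu}^m,\cD_{{\bm\nu}'}^m)}$. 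Using the standard chain-rule decomposition for a (possibly adaptive) bandit algorithm run against Gaussian arms with variance $\sigma^2$, one gets $\KL(\cD_{\bm\nu}^m,\cD_{{\bm\nu}'}^m)=\sum_{i=1}^K\bE_{\bm\nu}[N_i^m]\frac{(\mu_i-\mu'_i)^2}{2\sigma^2}$, where $N_i^m$ is the number of pulls of arm $i$ among the first $m$ samples and $\sum_i N_i^m=m$ deterministically.

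Next, set $w_i=\bE_{\bm\nu}[N_i^m]/m$; this is a probability vector in $\Sigma_K$. By the definition of $T^\star$,
\[
\inf_{\bm\nu'\in Alt_{\bm\nu}}\sum_{i=1}^K w_i\frac{(\mu_i-\mu'_i)^2}{2\sigma^2}
\le \sup_{w\in\Sigma_K}\inf_{\bm\nu'\in Alt_{\bm\nu}}\sum_{i=1}^K w_i\frac{(\mu_i-\mu'_i)^2}{2\sigma^2}
= (T^\star(\bm\nu))^{-1}.
\]
Picking $\bm\nu'$ to be (an $\varepsilon$-approximate) minimizer of the left-hand side and letting $\varepsilon\to 0$, we obtain $\KL(\cD_{\bm\nu}^m,\cD_{{\bm\nu}'}^m)\le m(T^\star(\bm\nu))^{-1}$, hence $\Vert\cD_{\bm\nu}^m-\cD_{{\bm\nu}'}^m\Vert_{TV}\le\sqrt{\tfrac{m}{2}(T^\star(\bm\nu))^{-1}}$. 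Plugging this into the inequality from Lemma~\ref{lem:26f_aux} gives the claim.

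There is no real obstacle here: Lemma~\ref{lem:26f_aux} has already done the algorithmic work of producing a TV bound, and the remaining task is purely information-theoretic. The only subtlety is to match the chosen $w$ with the infimum in the definition of $T^\star$; this is handled simply by picking $w=\bE_{\bm\nu}[\bm N^m]/m$, which is well-defined precisely because $\sum_i N_i^m=m$ is deterministic. (Implicitly we use that, in the context where the lemma is applied, $\bm\nu$ is Gaussian so that the relaxed complexity $T^\star$ agrees with the KL-based bound used in Pinsker.)
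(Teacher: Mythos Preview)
Your proposal is correct and follows essentially the same route as the paper: apply Lemma~\ref{lem:26f_aux}, bound the TV distance by Pinsker, use the chain-rule identity $\KL(\cD_{\bm\nu}^m\Vert\cD_{{\bm\nu}'}^m)=\sum_i\bE_{\bm\nu}[N_i^m]\tfrac{(\mu_i-\mu'_i)^2}{2\sigma^2}$, and then optimize over $\bm\nu'\in Alt_{\bm\nu}$ using $w_i=\bE_{\bm\nu}[N_i^m]/m\in\Sigma_K$ together with the definition of $T^\star$. The only cosmetic difference is that the paper takes the infimum over $\bm\nu'$ directly rather than via an $\varepsilon$-approximate minimizer.
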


\begin{proof}
First apply Lemma~\ref{lem:26f_aux} to an arbitrary instance ${\bm\nu}' \in Alt_{\bm\nu}$. Then using Pinsker's inequality yields
\begin{align*}
\Vert \cD_{\bm\nu}^m - \cD_{{\bm\nu}'}^m \Vert_{TV}
\leq \sqrt{\frac{1}{2}\KL(\cD_{\bm\nu}^m\Vert \cD_{{\bm\nu}'}^m)}
= \sqrt{\frac{1}{2} \sum_{i\in[K]} \bE_{\bm\nu}[N_{m,i}] \frac{(\mu_i-\mu_i')^2}{2}}
\end{align*} with $N_{m,i}$ the number of times arm $i$ is pulled before time $m$.

As this is true for all instances ${\bm\nu}'\in Alt_{{\bm\nu}}$, we can obtain an inequality using the infimum over those instances,
\begin{align*}
\inf_{{\bm\nu}' \in Alt_{\bm\nu}} \Vert \cD_{\bm\nu}^m - \cD_{{\bm\nu}'}^m \Vert_{TV}
&\le \sqrt{\frac{m}{2} \inf_{\bm\lambda \in Alt_{{\bm\nu}}} \sum_{i\in[K]} \frac{\bE_{\bm\nu}[N_{m,i}]}{m} \frac{(\mu_i-\lambda_i)^2}{2}}
\\
&\le \sqrt{\frac{m}{2} \sup_{w\in \Sigma_K} \inf_{\bm\lambda \in Alt_{{\bm\nu}}} \sum_{i\in[K]} w_i \frac{(\mu_i-\lambda_i)^2}{2}}
\\
&=   \sqrt{\frac{m}{2} (T^\star(\bm\nu))^{-1}}
\: ,
\end{align*}
by definition of $T^\star$.
\end{proof}

Finally, we also give a technical result to solve inequalities of the form $(k+N^2(a+b\ln N))^N\leq \rho$.

\begin{lemma}\label{lem:suffN}
	Let $\rho \ge e$, $a,b\geq 0$ and $k$ be real numbers, and let $A=\max\{e,k+a\}$.
	Then $N \coloneqq \left\lfloor \frac{\ln \rho}{\ln((\ln \rho)^2(A+b\ln \ln \rho))}\right\rfloor$ satisfies $(k+N^2(a+b\ln N))^N\leq \rho$~.
\end{lemma}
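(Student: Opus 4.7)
The plan is to take logarithms and reduce the claim to a single algebraic inequality. Set $M \coloneqq (\ln\rho)^2(A + b\ln\ln\rho)$, so that by construction $N = \lfloor \ln\rho / \ln M \rfloor$. If $N = 0$ the claim reads $1 \le \rho$ and is trivial, so assume $N \ge 1$. Taking logarithms, the desired inequality $(k + N^2(a+b\ln N))^N \le \rho$ is equivalent to
\begin{align*}
N \ln\bigl(k + N^2(a + b\ln N)\bigr) \le \ln\rho.
\end{align*}
Since $N \ln M \le \ln\rho$ by the very definition of $N$, it suffices to prove the single uniform bound
\begin{align*}
k + N^2(a + b\ln N) \le M = (\ln\rho)^2(A + b\ln\ln\rho).
\end{align*}

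I would establish this in two preparatory observations followed by a short case analysis. First, I would verify the auxiliary estimates $N \le \ln\rho$ and hence $\ln N \le \ln\ln\rho$. These follow immediately from the fact that $A \ge e$ and $\rho \ge e$ force $M \ge e$, so $\ln M \ge 1$, which makes the defining fraction of $N$ at most $\ln\rho$. Plugging these two bounds into the monotone expression $N^2(a + b\ln N)$ yields $N^2(a+b\ln N) \le (\ln\rho)^2 (a + b\ln\ln\rho)$.

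It then remains to absorb the $k$ term, i.e.\ to show
\begin{align*}
k + (\ln\rho)^2(a + b\ln\ln\rho) \le (\ln\rho)^2(A + b\ln\ln\rho),
\end{align*}
which simplifies to the scalar inequality $k \le (\ln\rho)^2(A-a)$. The definition $A = \max\{e, k+a\}$ is engineered precisely for this step. In the case $k + a \le e$ one has $A = e$ and the inequality $k \le e - a \le (\ln\rho)^2(e-a)$ is automatic. In the case $k + a > e$ one has $A - a = k$, so the claim reduces to $k \le (\ln\rho)^2 k$, which holds whenever $k \ge 0$ (the relevant regime, since $k+a > e \ge 0$ together with $a \ge 0$ is where this step is needed). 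Putting these ingredients together gives $k + N^2(a+b\ln N) \le M$, and combined with $N\ln M \le \ln\rho$ concludes the proof.

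\paragraph{Main obstacle.} The routine part is the monotone substitution $N \le \ln\rho$, $\ln N \le \ln\ln\rho$; the delicate point is the scalar inequality $k \le (\ln\rho)^2(A-a)$, where one must invoke the definition of $A$ via a short case split and keep track of signs. The floor in the definition of $N$ and the choice to use $\max\{e, k+a\}$ rather than just $k+a$ give exactly enough slack to make every case go through with $\rho \ge e$ as the only quantitative input.
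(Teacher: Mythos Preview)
Your approach is essentially the same as the paper's: handle $N=0$ trivially, use $M \ge A \ge e$ to get $N \le \ln\rho$, show $k+N^2(a+b\ln N) \le M$, and finish via $N\ln M \le \ln\rho$. The only cosmetic difference is that the paper compresses the middle step into the single chain $k+N^2(a+b\ln N) \le N^2(A+b\ln N) \le M$ (absorbing $k$ through $A \ge k+a$ and $N\ge 1$ before substituting $N\le\ln\rho$), whereas you substitute first and then split cases on $A$.

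One minor slip: in your second case you assert that $k+a>e$ together with $a\ge 0$ forces $k\ge 0$, which is false (take $a=5$, $k=-1$). In fact both your argument and the paper's tacitly need either $k\ge 0$ or $a\le e$ for the key scalar inequality $k \le (\ln\rho)^2(A-a)$ (resp.\ $k \le N^2(A-a)$) to hold. The lemma is only invoked with $k=1$ later, so this is harmless for the paper's purposes, but the stated hypothesis ``$k$ real'' is slightly too permissive for either proof as written.
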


\begin{proof}
	If $N=0$, the equality is $1 \le \rho$, which is true since $\rho \ge e$. Otherwise, $N\geq 1$ and $(\ln \rho)^2(A+b\ln\ln \rho)\geq A\geq e$,
	so $N \le \lfloor \ln\rho / \ln e \rfloor \leq \ln \rho$. Therefore
	\begin{align*}
		N\ln(k+N^2(a+b\ln N))&\leq N\ln (N^2(A+b\ln N))
		\\
		&\leq N\ln((\ln \rho)^2(A+b\ln \ln \rho))
		\\
		&\leq \ln \rho
	\end{align*}
	and finally $(k+N^2(a+b\ln N))^N\leq \rho$~.
\end{proof}

\subsection{The lower bound in the general cases}

We give here a result for any sequence of instances.

\begin{restatable}[]{lemma}{lemrec}\label{lem:rec} 
	Let there be a sequence of instances $({\bm\nu}^n)_{0\leq n\leq N}$ such that the probability of error is bounded by $\delta$ and for any $n\in[0,N-1]$, $c_n \geq \bP_{{\bm\nu}^n} [\tau_\delta >x_n]$.
	Then \begin{align*} \bP_{{\bm\nu}^N}[R_\delta>N] &\geq 1-2N\delta -\sum_{i=0}^{N-1}\left[  c_n+ \sqrt{\frac{X_{n-1}}{2}} \left(\sqrt{ \frac{1}{T^\star(\bm\mu^n)} }  +\sqrt{\sum_{i\in[K]}\frac{\bE[N_{X_{n-1},i}]}{X_{n-1}} \frac{(\mu_i^{n+1}-\mu_i^n)^2}{2\sigma^2}}\right)\right]\end{align*} where $X_n=\sum_{i=-1}^n x_i$, $x_{-1}$ is any positive real number, and $N_{t,i}$ is the number of times arm $i$ is sampled before time $t$.
\end{restatable}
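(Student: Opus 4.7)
The plan is to telescope a suitable event probability along the sequence of instances. Define
\[P_n \coloneqq \bP_{\bm\nu^n}\bigl[R_\delta \ge n+1,\ \tau_\delta^n \le X_{n-1}\bigr]\,,\]
so that $P_N \le \bP_{\bm\nu^N}[R_\delta > N]$ trivially (adding the sample-count restriction only shrinks the event), while $P_0 = 1$ because any algorithm uses at least one round (hence $R_\delta \ge 1$) and $\tau_\delta^0 = 0 \le x_{-1} = X_{-1}$ before any sample is drawn. The inductive step I plan to establish is $P_n \ge P_{n-1} - \mathrm{pen}_n$ for $n = 1, \dots, N$, obtained by chaining the two preliminary lemmas. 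First, apply Lemma~\ref{lem:26f} at $\bm\nu^n$ with $r = n$, $m = X_{n-1}$, which lower-bounds $P_n$ by $\bP_{\bm\nu^n}[R_\delta \ge n, \tau_\delta^n \le X_{n-1}]$ minus the penalty $2\delta + \sqrt{X_{n-1}/(2T^\star(\bm\nu^n))}$. Then apply Lemma~\ref{lem:27f} with source $\bm\nu^{n-1}$, target $\bm\nu^n$, $r = n-1$, $m = X_{n-2}$, $n_L = x_{n-1}$; the resulting right-hand side is $\bP_{\bm\nu^{n-1}}[R_\delta \ge n, \tau_\delta^{n-1} \le X_{n-2}]$, which equals $P_{n-1}$ by definition, minus $\bP_{\bm\nu^{n-1}}[\tau_\delta > x_{n-1}] \le c_{n-1}$ and a total variation term.

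The total variation term is bounded via Pinsker together with the standard chain rule for the KL of a bandit trajectory: for Gaussian arms with variance $\sigma^2$,
\[\Bigl\lVert \cD^{m}_{\bm\nu^{n-1}} - \cD^{m}_{\bm\nu^n} \Bigr\rVert_{TV}
\le \sqrt{\tfrac{1}{2}\KL\bigl(\cD^{m}_{\bm\nu^{n-1}} \,\Vert\, \cD^{m}_{\bm\nu^n}\bigr)}
= \sqrt{\tfrac{m}{2}\sum_{i\in[K]}\tfrac{\mathbb{E}_{\bm\nu^{n-1}}[N_{m,i}]}{m}\tfrac{(\mu_i^n - \mu_i^{n-1})^2}{2\sigma^2}}\,.\]
With $m = X_{n-2}$ this is exactly the Pinsker summand appearing in the statement of the lemma, once the summation index is shifted.

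Telescoping the per-step inequality from $n = 1$ to $N$ and plugging in $P_0 = 1$ yields
\[P_N \ge 1 - 2N\delta - \sum_{n=1}^{N}\Bigl[c_{n-1} + \sqrt{\tfrac{X_{n-1}}{2T^\star(\bm\nu^n)}} + \Bigl\lVert \cD^{X_{n-2}}_{\bm\nu^{n-1}} - \cD^{X_{n-2}}_{\bm\nu^n} \Bigr\rVert_{TV}\Bigr]\,,\]
and the reindexing $j = n-1$ turns the summation range into $\{0, \dots, N-1\}$, matching the form of the statement. The main place where I expect care is required is the bookkeeping: each iteration must increase the round count exactly once (spending one $2\delta$ cost from Lemma~\ref{lem:26f}) and transfer the instance exactly once (spending one $c$ and one TV cost from Lemma~\ref{lem:27f}), with aligned sample budgets so that the right-hand side of Lemma~\ref{lem:27f} matches $P_{n-1}$ on the nose without invoking any additional step. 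This is what makes the chain close cleanly into the claimed bound with exactly the stated $2N\delta$ factor, instead of accumulating extra $\delta$'s or extra $T^\star$ terms at the endpoints.
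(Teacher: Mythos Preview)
Your telescoping strategy is exactly the paper's, and the $c_n$ terms and the Pinsker-bounded total-variation terms you obtain do line up with the statement after the reindex $j=n-1$. The $T^\star$ summand, however, does not: your per-step penalty is $\sqrt{X_{n-1}/(2T^\star(\bm\nu^n))}$, which after $j=n-1$ becomes $\sqrt{X_j/(2T^\star(\bm\nu^{j+1}))}$, whereas the lemma requires $\sqrt{X_{n-1}/(2T^\star(\bm\mu^n))}$ at index $n$. Your claim that the reindex ``match[es] the form of the statement'' is therefore incorrect for that term; what you have actually proved is a shifted (still valid, but different) inequality.

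The shift comes from the order in which you chain the two lemmas. The paper applies them in the opposite order and, crucially, at the \emph{same} sample budget $m=X_{n-1}$: starting from $\bP_{\bm\nu^{n+1}}\{R_\delta\ge n+1,\ \tau_\delta^{n+1}\le X_n\}$, it first uses Lemma~\ref{lem:27f} with $(\bm\nu,\bm\nu',r,m,n)=(\bm\nu^n,\bm\nu^{n+1},n,X_{n-1},x_n)$, producing $c_n$ and the TV term at length $X_{n-1}$; then Lemma~\ref{lem:26f} at $\bm\nu^n$ with the same $m=X_{n-1}$, producing $2\delta$ and $\sqrt{X_{n-1}/(2T^\star(\bm\mu^n))}$. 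This yields directly
\[
\bP_{\bm\nu^{n+1}}\{R_\delta\ge n+1,\tau_\delta^{n+1}\le X_n\}
\ \ge\ \bP_{\bm\nu^{n}}\{R_\delta\ge n,\tau_\delta^{n}\le X_{n-1}\}-2\delta-c_n-\sqrt{\tfrac{X_{n-1}}{2T^\star(\bm\mu^n)}}-\Vert\cD^{X_{n-1}}_{\bm\nu^n}-\cD^{X_{n-1}}_{\bm\nu^{n+1}}\Vert_{TV},
\]
which telescopes over $n=0,\dots,N-1$ into the stated bound with no reindexing needed. Swapping your order of application (Lemma~\ref{lem:27f} before Lemma~\ref{lem:26f}) and tying both to $m=X_{n-1}$ is the fix your ``bookkeeping'' caveat was anticipating.
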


\begin{proof}[Proof of Lemma~\ref{lem:rec}] 
	By lemmas \ref{lem:27f} and \ref{lem:26f}, for any $m$, \begin{align*} 
		\bP_{{\bm\nu}^{n+1}}\{R_\delta\geq n+1,\tau_\delta^{n+1}\leq m+x_n\}  &\geq \bP_{{\bm\nu}^n}\{R_\delta\geq n+1,\tau_\delta^n\leq m\}-c_n-\Vert\cD_{{\bm\nu}^n}^m-\cD_{{\bm\nu}^{n+1}}^m\Vert_{TV}\\
		&\geq \bP_{{\bm\nu}^{n}}\{R_\delta \geq n,\tau_\delta^n\leq m\}-2\delta-\sqrt{\frac{m}{2} (T^\star(\bm\mu^n))^{-1}}\\
		&\hspace{1.5em}-c_n-\sqrt{\frac{1}{2}\sum_{i\in[K]}\bE[N_{m,i}] \frac{(\mu_i^{n+1}-\mu_i^n)^2}{2\sigma^2}}
	\end{align*} and with $X_n=\sum_{i=-1}^{n} x_i$, \begin{align*}\bP_{{\bm\nu}^{n+1}}\{R_\delta\geq n+1,\tau_\delta^{n+1}\leq X_n\} & \geq \bP_{{\bm\nu}^{n}}\{R_\delta\geq n,\tau_\delta^n\leq X_{n-1}\}-2\delta-c_n -\sqrt{\frac{X_{n-1}}{2} (T^\star(\bm\mu))^{-1}}\\
	&\hspace{1.5em} -\sqrt{\frac{X_{n-1}}{2}\sum_{i\in[K]}\frac{\bE[N_{X_{n-1},i}]}{X_{n-1}} \frac{(\mu_i^{n+1}-\mu_i^n)^2}{2\sigma^2}}\end{align*} So that finally \begin{align*}
	\bP_{{\bm\nu}^{N}}\{R_\delta\geq N,\tau_\delta^N\leq X_{N-1}\} &\geq \bP_{{\bm\nu}^{0}}\{R_\delta\geq 0,\tau_\delta^0\leq x_{-1}\}-2N\delta\\
	&\hspace{1.5em} -\sum_{i=0}^{N-1}\left[ c_n+ \sqrt{\frac{X_{n-1}}{2}}\left(\sqrt{ (T^\star(\bm\mu^n))^{-1} } +\sqrt{\sum_{i\in[K]}\frac{\bE[N_{X_{n-1},i}]}{X_{n-1}} \frac{(\mu_i^{n+1}-\mu_i^n)^2}{2\sigma^2}}\right)\right]\end{align*} and we conclude since for any $x_{-1}\geq 0$, $\bP_{{\bm\nu}^0}\{R_\delta\geq 1,\tau_\delta^0\leq x_{-1}\}=1$ (we always use at least 1 round).
\end{proof}

From there, we derive the result for $T^\star(\bm\mu^n)=\zeta^{-n} T^\star(\bm\mu^0)$.

\theorec*

\begin{proof}[Proof of Lemma~\ref{th:theorec}]
	We apply Lemma~\ref{lem:rec} on the sequence $(\bm\nu^n)_{0\leq n\leq N}$ with $x_{-1}=\gamma T^\star(\bm\mu^0)\log(1/\delta)\frac{1}{\zeta^{-1}-1}$. That way, \begin{align*} X_{n} &= x_{-1} +\sum_{i=0}^n \gamma T^\star(\bm\mu^i)\log(1/\delta)\\
		&=\gamma T^\star(\bm\mu^0)\log(1/\delta)\left( \frac{1}{\zeta^{-1}-1}+\sum_{i=0}^n \zeta^{-i}\right)\\
		&=\gamma T^\star(\bm\mu^0)\log(1/\delta)\frac{\zeta^{-(n+1)}}{\zeta^{-1}-1}
	\end{align*}
	
\end{proof}

Under Assumption~\ref{asm:aff}, we can pick a sequence of instances of means $\bm\mu^{n+1}=x\bm\mu^n+(1-x)\bm y$ and control the sequence of $T^\star(\bm\mu^n)$. That way, we get the following result:
\begin{restatable}[Batch lower bound on affine sequences]{lemma}{lembar}\label{lem:bar}
	For problems on which Assumption~\ref{asm:aff} is satisfied;
	for any algorithm such that, for any Gaussian instance $\bm\nu$ satisfying $T^\star(\bm\mu)\in (T_{\min},T_{\max})$ the probability of error is smaller than $\delta$ and such that $\bP_{\bm\nu}(\tau_\delta>\gamma\log(1/\delta) T^\star(\bm\mu))\leq c$; we have for any $\sigma$-Gaussian instance $\bm\nu$ of complexity $T^\star(\bm\mu)\in (T_{\min},T_{\max})$, for the corresponding $y\in \R$ given by Assumption~\ref{asm:aff} for $\bm\mu$, that $\bP_{\bm\nu} (R_\delta\geq N)\geq 1/2$ for \[N = \min\left\{\frac{\ln \frac{T^\star(\bm\mu)}{T_{\min}}}{\ln\left( \left(\ln \frac{T^\star(\bm\mu)}{T_{\min}}\right)^2 \max\{e,C\} \right)},\frac{1}{2\delta+c}\right\}\] with $C=1+4\gamma\log(\frac{1}{\delta})\left(1+\sqrt{\frac{T^\star(\bm\mu)\Delta^2}{2\sigma^2}}\right)^2$ and $\Delta = \max_i |\mu_i -y|$.
\end{restatable}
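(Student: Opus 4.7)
The plan is to apply Lemma~\ref{th:theorec} to a geometric affine sequence of Gaussian instances ending at $\bm\mu$, compute the sum $S_N$ in closed form by exploiting Assumption~\ref{asm:aff}, then solve the resulting transcendental inequality via Lemma~\ref{lem:suffN} to read off the largest admissible $N$. The main obstacle is a bookkeeping one: Assumption~\ref{asm:aff} is stated only for $x \in (0,1]$, so one must anchor it on a base instance far enough from $\bm y$ that the entire sequence lies on its segment to $\bm y$, and rely on the fact that in Top-$k$ and TBP the prescribed $y$ is structural (any constant; the threshold $\tau$) so the same $y$ serves the whole ray through $\bm\mu$ and $\bm y$.

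Concretely, I would set $\bm\mu^n = \bm y + x_n(\bm\mu - \bm y)$ with $x_n = \zeta^{(n-N)/2}$ and $\zeta = (T_{\min}/T^\star(\bm\mu))^{1/N}$, and invoke Assumption~\ref{asm:aff} on $\bm\mu^0 = \bm y + \zeta^{-N/2}(\bm\mu - \bm y)$, whose segment to $\bm y$ is parameterized by $t \in [0,1]$ with $\bm\mu^n$ at $t = \zeta^{n/2}$ and $\bm\mu$ itself at $t = \zeta^{N/2}$. This gives $x_N = 1$, $\bm\mu^N = \bm\mu$, and $T^\star(\bm\mu^n) = T^\star(\bm\mu^0)\zeta^{-n} = T_{\min}\zeta^{-n}$, exactly the geometric progression that Lemma~\ref{th:theorec} requires. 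The algorithm's tail hypothesis $\bP_{\bm\mu^n}\{\tau_\delta > \gamma T^\star(\bm\mu^n)\ln(1/\delta)\} \le c$ therefore holds on every element of the sequence.

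Next I would compute $S_N$. Because $\mu_i^{n+1} - \mu_i^n = x_n(\sqrt{\zeta}-1)(\mu_i - y)$, one has $\sup_{w\in\Sigma_K}\sum_i w_i (\mu_i^{n+1}-\mu_i^n)^2/(2\sigma^2) = x_n^2(1-\sqrt{\zeta})^2\Delta^2/(2\sigma^2)$, and the pivotal cancellation $T^\star(\bm\mu^0)\zeta^{-n}\cdot x_n^2 = T_{\min}\zeta^{-n}\cdot \zeta^{n-N} = T^\star(\bm\mu)$ makes each summand of $S_N$ equal to $1 + (1-\sqrt{\zeta})\sqrt{T^\star(\bm\mu)\Delta^2/(2\sigma^2)}$, independent of the index. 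Using $1-\sqrt{\zeta}\le 1$, this gives $S_N \le N\bigl(1+\sqrt{T^\star(\bm\mu)\Delta^2/(2\sigma^2)}\bigr)$, and Lemma~\ref{th:theorec} simplifies to
\[
\bP_{\bm\nu}(R_\delta > N) \ge 1 - N(2\delta+c) - N\sqrt{\frac{\gamma\ln(1/\delta)}{2(\zeta^{-1}-1)}}\Bigl(1+\sqrt{T^\star(\bm\mu)\Delta^2/(2\sigma^2)}\Bigr).
\]

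Finally, to make the right-hand side at least $1/2$ I would split the error budget. The first term forces $N \lesssim 1/(2\delta+c)$. Squaring and rearranging the second yields $\zeta^{-1} - 1 \ge \kappa N^2 \bigl(1+\sqrt{T^\star(\bm\mu)\Delta^2/(2\sigma^2)}\bigr)^2\gamma\ln(1/\delta)$ for an absolute constant $\kappa$; substituting $\zeta^{-1} = (T^\star(\bm\mu)/T_{\min})^{1/N}$ turns this into $(1 + N^2 C)^N \le T^\star(\bm\mu)/T_{\min}$ with $C$ of the form announced in the lemma. Lemma~\ref{lem:suffN} applied with $k = 1$, $a = C$, $b = 0$, $\rho = T^\star(\bm\mu)/T_{\min}$ then returns precisely the claimed $N = \ln(T^\star(\bm\mu)/T_{\min})/\ln\!\bigl((\ln(T^\star(\bm\mu)/T_{\min}))^2 \max\{e,C\}\bigr)$, and taking the minimum with the $1/(2\delta+c)$ bound finishes the argument; passing from $\bP(R_\delta > N) \ge 1/2$ to $\bP(R_\delta \ge N) \ge 1/2$ needs only a harmless unit shift. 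Note that relaxing $\zeta^{-1}-1$ by its linear lower bound $L/N$ alone would give merely $N \lesssim L^{1/3}$: the $L/\ln L$ rate truly depends on using the full transcendental inequality, which is what Lemma~\ref{lem:suffN} was tailored to handle.
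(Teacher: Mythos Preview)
Your proposal is correct and takes essentially the same approach as the paper: build a geometric affine sequence toward $\bm y$, exploit the cancellation that makes every summand of $S_N$ equal to $1+(1-\sqrt{\zeta})\sqrt{T^\star(\bm\mu)\Delta^2/(2\sigma^2)}$, and feed the resulting constraint $(1+cN^2)^N \le T^\star(\bm\mu)/T_{\min}$ into Lemma~\ref{lem:suffN}. The paper differs only cosmetically---it starts the sequence at $\bm\mu$, moves \emph{outward}, and applies Lemma~\ref{th:theorec} to the reversed sequence, and it fixes $\zeta$ by saturating the square-root error term (then imposes $\zeta^N>T_{\min}/T^\star(\bm\mu)$) rather than your dual choice of fixing $\zeta=(T_{\min}/T^\star(\bm\mu))^{1/N}$ and bounding the error; both routes collapse to the same transcendental inequality, and your remark that Assumption~\ref{asm:aff} is literally stated only for $x\in(0,1]$ is well taken (the paper silently uses $x>1$, which is justified for Top-$k$ and TBP by Lemmas~\ref{lem:topkgoodbar} and~\ref{lem:tbpgoodbar}).
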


\begin{proof}
	Fix some ($\sigma$-Gaussian) instance $\bm\nu^0=\bm\nu$ of complexity $T^\star(\bm\mu)=T_0\in(T_{\min},T_{\max})$. 
	
	For some $\zeta\in (0,1)$ to be fixed later, define the instance of mean $\bm\mu^{n+1}=\zeta^{-1/2}\bm\mu^n+(1-\zeta^{-1/2})\bm y$. We then have $\zeta^nT^\star(\bm\mu^0)= T^\star(\bm\mu^{n})$ by hypothesis. We can thus construct a sequence of instances of length $N$ as long as $\zeta^{N} > \frac{T_{\min}}{T^\star(\bm\mu^0)}$.
	
	\begin{align*} \frac{(\mu_i^{N-n-1}-\mu_i^{N-n})^2}{2\sigma^2} &= \frac{(\zeta^{-(N-n-1)/2}-\zeta^{-(N-n)/2})^2(\mu_i^0-y)^2}{2\sigma^2}\\ &\leq \frac{\zeta^{n-N}\Delta^2}{2\sigma^2} (1-\zeta^{1/2})^2\end{align*}
	We apply Theorem~\ref{th:theorec} on the reversed sequence $\left( \bm\nu_{N-i}\right)_{0\leq i\leq N}$:
	\begin{align*}\bP_{{\bm\nu}}(R_\delta>N) &\geq 1-N(2\delta+c)-\sqrt{\frac{\gamma\log(1/\delta)}{2(\zeta^{-1}-1)}}\times \sum_{i=0}^{N-1} \left[ 1 +\sqrt{T^\star(\bm\mu)\sup_{w\in \Delta_K}\sum_{i\in[K]}w_i \frac{\Delta^2}{2\sigma^2} (1-\zeta^{1/2})^2}\right]\\
		&\geq 1-N\left(2\delta+c+\sqrt{\frac{\gamma\log(1/\delta)}{2(\zeta^{-1}-1)}} \left(1+\sqrt{\frac{T^\star(\bm\mu^0)\Delta^2}{2\sigma^2}}\right)\right)\\
		&\geq 5/8-N(2\delta+c)\end{align*} for \[\zeta =\Bigg( 1+4N^2\gamma\log\left(\frac{1}{\delta}\right)\left(1+\sqrt{\frac{T^\star(\bm\mu^0)\Delta^2}{2\sigma^2}}\right)^2\Bigg)^{-1}\]

	We can apply Lemma~\ref{lem:suffN} with $\rho = \frac{T^\star(\bm\mu^0)}{T_{\min}}$, $k=1$, $b=0$, and $a=4\gamma\log(1/\delta)\left(1+\sqrt{\frac{T^\star(\bm\mu^0)\Delta^2}{2\sigma^2}}\right)^2$. We get that a sufficient condition is \begin{equation} \label{eq:Nbai} N\leq \frac{\ln \frac{T^\star(\bm\mu^0)}{T_{\min}}}{\ln\left( \left(\ln \frac{T^\star(\bm\mu^0)}{T_{\min}}\right)^2 \max\left\{e,C\right\} \right)}\end{equation} with $C=1+4\gamma\log(1/\delta)\left(1+\sqrt{\frac{T^\star(\bm\mu^0)\Delta^2}{2\sigma^2}}\right)^2$.
	
	Therefore, by picking $N$ that satisfies \eqref{eq:Nbai} and $N\leq \frac{1}{8(2\delta+c)}$, we have that $\bP_{\bm\nu}(R_\delta > N)>\frac{1}{2}$.
\end{proof}

\begin{restatable}[Batch lower bound on affine sequences, expectation constraint]{lemma}{lembarexp}\label{lem:barexp}
	For problems on which Assumption~\ref{asm:aff} is satisfied; 
	for any $\delta$-correct algorithm such that, for any Gaussian instance $\bm\nu$ satisfying $T^\star(\bm\mu)\in (T_{\min},T_{\max})$ $\bE_{\bm\nu}[\tau_\delta]\leq \gamma\log(1/\delta) T^\star(\bm\mu)$, we have for any $\sigma$-Gaussian instance $\bm\nu$ of complexity $T^\star(\bm\mu)\in (T_{\min},T_{\max})$, for the corresponding $y\in\R$ given by Assumption~\ref{asm:aff} for $\bm\mu$ that $\bP_{\bm\nu} (R_\delta\geq N)\geq 1/2$ for \[ N \geq \min\left\{ \frac{\ln \frac{T^\star(\bm\mu)}{T_{\min}}}{\ln\left( \left(\ln \frac{T^\star(\bm\mu)}{T_{\min}}\right)^2 \max\{e,C_\delta'\}  \right)},\frac{1}{3} \ln \frac{T^\star(\bm\mu)}{T_{\min}},\frac{1}{3\delta}\right\}\] with $C_\delta'=\max\left\{e,1+4\gamma\log(1/\delta)\ln \frac{T^\star(\bm\mu)}{T_{\min}}\left(1+\sqrt{\frac{T^\star(\bm\mu)\Delta^2}{2\sigma^2}}\right)^2 \right\}$ and $\Delta = \max_i |\mu_i -y|$.
\end{restatable}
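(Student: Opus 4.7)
The strategy is to reduce Lemma~\ref{lem:barexp} to Lemma~\ref{lem:bar} by converting the expectation hypothesis on $\tau_\delta$ into a tail-probability hypothesis via Markov's inequality. Concretely, for any slack parameter $c\in (0,1)$, Markov yields
\[
\bP_{\bm\nu}\bigl(\tau_\delta > (\gamma/c)\log(1/\delta)T^\star(\bm\mu)\bigr) \le c,
\]
for every $\bm\nu$ with $T^\star(\bm\mu) \in (T_{\min}, T_{\max})$. This is exactly the kind of high-probability hypothesis required by Lemma~\ref{lem:bar}, with rescaled constants $\gamma' = \gamma/c$ and tail $c$.

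Invoking Lemma~\ref{lem:bar} with these rescaled parameters, $\bP_{\bm\nu}(R_\delta \ge N) \ge 1/2$ holds as long as $N$ respects both
\[
N \le \frac{\ln(T^\star(\bm\mu)/T_{\min})}{\ln\bigl((\ln(T^\star(\bm\mu)/T_{\min}))^2 \max\{e, C(c)\}\bigr)}
\quad\text{and}\quad
N \le \frac{1}{8(2\delta + c)},
\]
where $C(c) = 1 + 4(\gamma/c)\log(1/\delta)\bigl(1 + \sqrt{T^\star(\bm\mu)\Delta^2/(2\sigma^2)}\bigr)^2$. The natural choice is $c = 1/\ln(T^\star(\bm\mu)/T_{\min})$: this substitutes $\gamma/c = \gamma \ln(T^\star(\bm\mu)/T_{\min})$, so $C(c)$ coincides with $C_\delta'$ in the statement, and the first constraint reduces to the first term of the claimed $\min$. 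For the second constraint, the elementary inequality $\tfrac{1}{2\delta + c} \ge \tfrac{1}{2}\min\{1/(2\delta),\, 1/c\}$ then yields a bound of the form $\min\{\Omega(1/\delta),\, \Omega(\ln(T^\star(\bm\mu)/T_{\min}))\}$, which matches the remaining two terms up to constants.

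The main obstacle is essentially only the choice of $c$: one has to balance the growth of $C(c)$ (which drives the first constraint) against the tail price paid in the second. The scaling $c \asymp 1/\ln(T^\star(\bm\mu)/T_{\min})$ is the unique one that absorbs the extra logarithmic factor cleanly into $C_\delta'$ without making the tail condition vacuous. Matching the specific $1/3$ constants appearing in the statement may require slightly rebalancing the split $5/8 - 1/8$ used inside the proof of Lemma~\ref{lem:bar} (e.g.\ using $2/3 - 1/6$, which costs a larger multiplicative constant in $C(c)$ but is harmless since it sits inside a logarithm), but no new probabilistic ingredient beyond Markov's inequality is needed.
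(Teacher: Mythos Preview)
Your proposal is correct and matches the paper's approach: Markov's inequality to convert the expectation bound into a tail bound, then invoke Lemma~\ref{lem:bar} with $\gamma' = \gamma/c$. The only difference is in the choice of $c$: the paper takes $c = \max\{\delta,\,(\ln(T^\star(\bm\mu)/T_{\min}))^{-1}\}$ and then case-splits on which term realizes the max, which yields the exact $1/3$ constants in both places, whereas your fixed choice $c = (\ln(T^\star(\bm\mu)/T_{\min}))^{-1}$ together with $\tfrac{1}{2\delta+c}\ge \tfrac12\min\{1/(2\delta),1/c\}$ gives slightly different constants. As you note, this is cosmetic.
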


\begin{proof}
For instance $\bm\nu$, for some algorithm satisfying $\bE_{\bm\nu}[\tau_\delta]\leq \gamma \log(1/\delta) T^\star(\bm\mu)$, we have by the Markov inequality that $\bP_{\bm\nu}(\tau_\delta \geq (\gamma/c) T^\star(\bm\mu) \log(1/\delta)) \leq c$.
Applying Lemma~\ref{lem:bar}, $\bP_{\bm\nu} (R_\delta\geq N)\geq 1/2$ for \[N = \min\left\{\frac{\ln \frac{T^\star(\bm\mu)}{T_{\min}}}{\ln\left( \left(\ln \frac{T^\star(\bm\mu)}{T_{\min}}\right)^2 \max\{e,C\} \right)},\frac{1}{2\delta+c}\right\}\] with $C=1+4\gamma/c\log(\frac{1}{\delta})\left(1+\sqrt{\frac{T^\star(\bm\mu)\Delta^2}{2\sigma^2}}\right)^2$ and $\Delta = \max_i |\mu_i -y|$.

Choosing $c = \max\left\{ \delta,\left( \log\frac{T^\star(\bm\mu)}{T_{\min}}\right)^{-1}\right\}$, if $\delta < \left( \log\frac{T^\star(\bm\mu)}{T_{\min}}\right)^{-1}$, then \[ N \geq \min\left\{ \frac{\ln \frac{T^\star(\bm\mu)}{T_{\min}}}{\ln\left( \left(\ln \frac{T^\star(\bm\mu)}{T_{\min}}\right)^2 \max\{e,C_\delta'\}  \right)},\frac{1}{3} \ln \frac{T^\star(\bm\mu)}{T_{\min}}\right\}\] with $C_\delta'=\max\left\{e,1+4\gamma\log(1/\delta)\ln \frac{T^\star(\bm\mu)}{T_{\min}}\left(1+\sqrt{\frac{T^\star(\bm\mu)\Delta^2}{2\sigma^2}}\right)^2 \right\}$.

If $\delta \geq \left( \log\frac{T^\star(\bm\mu)}{T_{\min}}\right)^{-1}$, \[N \geq \min\left\{\frac{\ln \frac{T^\star(\bm\mu)}{T_{\min}}}{\ln\left( \left(\ln \frac{T^\star(\bm\mu)}{T_{\min}}\right)^2 \max\{e,C_\delta'\} \right)},\frac{1}{3\delta}\right\}\]

\end{proof}

\subsection{The top-$k$ and BAI settings}

All that remains is to show that our problems satisfy Assumption~\ref{asm:aff}. We start with top-$k$, and first give a technical result giving a simple formula for $T^\star(\bm\mu)$.

\begin{lemma}\label{lem:baiw}
	For any $\bm w\in \Sigma_K$, \begin{align*}\inf_{\bm\lambda \in Alt_{\bm\mu}} \left( \sum_{i\in [K]} w_i d(\mu_i,\lambda_i)\right) = \min_{\substack{b\geq k+1 \\ a\leq k}} w_a d(\mu_a,\mu_{ab})+w_b d(\mu_b,\mu_{ab})\end{align*} where $\mu_{ab}=\frac{w_a\mu_a +w_b\mu_b}{w_a+w_b}$ (arms are assumed to be ordered, $\mu_1\geq\mu_2\geq \dots$).
\end{lemma}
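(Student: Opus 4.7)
The plan is to decompose the alternative set and reduce to a pairwise two-variable optimization. Since the arms are ordered so that $\mu_1 \ge \mu_2 \ge \dots \ge \mu_K$, the top-$k$ answer at $\bm\mu$ is $\{1,\dots,k\}$, and a vector $\bm\lambda$ lies in $Alt_{\bm\mu}$ iff (up to a measure-zero boundary that does not affect the infimum) there exist $a \le k$ and $b \ge k+1$ with $\lambda_a \le \lambda_b$. This expresses $Alt_{\bm\mu}$ as the union of the half-spaces $C_{a,b} = \{\bm\lambda : \lambda_a \le \lambda_b\}$ over such pairs $(a,b)$, so the infimum over $Alt_{\bm\mu}$ equals the minimum over pairs of the infimum over each $C_{a,b}$.

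Second, I would exploit the separable structure of the objective. Fixing $(a,b)$, the coordinates $\lambda_i$ with $i \notin \{a,b\}$ are unconstrained, so the minimum in each of them is attained at $\lambda_i = \mu_i$ and contributes $0$. This reduces the problem on $C_{a,b}$ to
\[
\inf_{\lambda_a \le \lambda_b}\, w_a\, d(\mu_a, \lambda_a) + w_b\, d(\mu_b, \lambda_b).
\]

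Third, I would argue the constraint is binding, so $\lambda_a = \lambda_b =: c$ at the optimum. Because $a \le k < b$, the ordering of $\bm\mu$ gives $\mu_a \ge \mu_b$, so the unconstrained minimizer $(\mu_a, \mu_b)$ violates $\lambda_a \le \lambda_b$. Convexity of $d(\mu_i, \cdot)$ (here the Gaussian divergence $d(x,y) = (x-y)^2/(2\sigma^2)$) combined with the geometry of the half-space then forces the constrained minimum onto the diagonal $\lambda_a = \lambda_b$. A one-dimensional minimization of $c \mapsto w_a d(\mu_a, c) + w_b d(\mu_b, c)$ yields, via the first-order condition, $c^\star = (w_a \mu_a + w_b \mu_b)/(w_a + w_b) = \mu_{ab}$; substituting back produces the claimed expression.

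The main subtlety I expect is the very first step: rewriting $Alt_{\bm\mu}$ as the union of pairwise half-spaces requires being careful about the convention defining the top-$k$ answer when ties occur in $\bm\lambda$, and about whether the condition should be strict or weak. Since the objective is continuous and the two candidate definitions differ only on a measure-zero set, the infima over the closure and the interior coincide, and one recovers the pairwise decomposition cleanly. Everything beyond that is straightforward convex calculus.
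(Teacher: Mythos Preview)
Your argument is correct and is in fact the standard proof of this well-known decomposition for top-$k$. Note, however, that the paper does not actually prove Lemma~\ref{lem:baiw}: it is stated without proof as a known technical result, so there is no ``paper's own proof'' to compare against. Your three steps (union over pairwise half-spaces, separability to drop the unconstrained coordinates, active constraint plus one-dimensional quadratic minimization) are exactly the usual route; the only place to be slightly more explicit would be the degenerate cases $w_a=0$ or $w_b=0$, where the weighted mean $\mu_{ab}$ either collapses to $\mu_b$ or $\mu_a$ (and the infimum is still correctly recovered) or both weights vanish and the term is trivially zero.
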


\begin{lemma}\label{lem:topkgoodbar}
	In the top-$k$ problem, setting $\bm\mu' = x\bm\mu + (1-x)\bm y$ where $\bm y$ is a constant vector and $x>0$, $Alt_{\bm\mu'}=Alt_{\bm\mu}$, $\Delta_i^{\bm\mu'}=x\Delta_i^{\bm\mu}$ and $(T^\star(\bm\mu'))^{-1}=x^2(T^\star(\bm\mu))^{-1}$.
\end{lemma}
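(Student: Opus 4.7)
The plan rests on two simple observations: the map $\bm\mu \mapsto x\bm\mu + (1-x)\bm y$ with constant $\bm y$ and $x>0$ preserves the ordering of coordinates, and differences of coordinates scale by exactly $x$.

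First I would establish the ``structural'' equalities $Alt_{\bm\mu'}=Alt_{\bm\mu}$ and $\Delta_i^{\bm\mu'}=x\Delta_i^{\bm\mu}$. Since $\mu'_i-\mu'_j=x(\mu_i-\mu_j)$ with $x>0$, the permutation $\sigma$ sorting $\bm\mu$ also sorts $\bm\mu'$, and in particular $\mu'_{\sigma(k)}>\mu'_{\sigma(k+1)}$ whenever $\mu_{\sigma(k)}>\mu_{\sigma(k+1)}$, so the top-$k$ answer is well-defined for $\bm\mu'$ and equals $i^\star(\bm\mu)$. Because $Alt_{\bm\nu}$ depends on $\bm\nu$ only through $i^\star(\bm\nu)$, this gives $Alt_{\bm\mu'}=Alt_{\bm\mu}$. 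The gap identity follows since the standard top-$k$ gap $\Delta_i^{\bm\mu}$ is a difference of two means (either $\mu_i-\mu_{\sigma(k+1)}$ or $\mu_{\sigma(k)}-\mu_i$), so the additive constant $(1-x)y$ cancels and we are left with a factor $x$.

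Second I would compute $T^\star(\bm\mu')$ directly from its Gaussian definition
\begin{align*}
(T^\star(\bm\mu'))^{-1}=\sup_{w\in\Sigma_K}\inf_{\bm\lambda\in Alt_{\bm\mu'}}\sum_i w_i\frac{(\mu'_i-\lambda_i)^2}{2\sigma^2}.
\end{align*}
Using $Alt_{\bm\mu'}=Alt_{\bm\mu}$ from the first step, I would introduce the change of variable $\bm\lambda = x\bm\eta+(1-x)\bm y$. The same ordering argument as above shows that $\bm\lambda\mapsto\bm\eta=(\bm\lambda-(1-x)\bm y)/x$ is a bijection of $Alt_{\bm\mu}$ onto itself (both $\bm\lambda$ and $\bm\eta$ induce the same ordering, hence the same top-$k$ answer, which is different from $i^\star(\bm\mu)$ if and only if the other is). Under this substitution $\mu'_i-\lambda_i = x(\mu_i-\eta_i)$, so the inner sum becomes $x^2\sum_i w_i(\mu_i-\eta_i)^2/(2\sigma^2)$. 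Pulling the $x^2$ through the infimum and supremum yields $(T^\star(\bm\mu'))^{-1}=x^2(T^\star(\bm\mu))^{-1}$.

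There is no real obstacle here: the only point requiring care is checking that the change of variable is indeed a bijection of $Alt_{\bm\mu}$ onto itself, which is exactly the order-preservation argument used in the first paragraph. Alternatively, one could appeal to Lemma~\ref{lem:baiw} and verify coordinate by coordinate that $\mu'_a-\mu'_{ab}=x(\mu_a-\mu_{ab})$ where $\mu'_{ab}=(w_a\mu'_a+w_b\mu'_b)/(w_a+w_b)$, but the change-of-variable argument is both shorter and makes the role of Assumption~\ref{asm:aff} transparent.
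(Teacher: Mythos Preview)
Your proof is correct. The first part (ordering preservation, $Alt_{\bm\mu'}=Alt_{\bm\mu}$, and $\Delta_i^{\bm\mu'}=x\Delta_i^{\bm\mu}$) is identical to the paper's argument.

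For the $T^\star$ scaling, you take a genuinely different and somewhat cleaner route. The paper invokes Lemma~\ref{lem:baiw} to reduce the inner infimum to the finite minimum $\min_{a\le k<b} w_a d(\mu_a,\mu_{ab})+w_b d(\mu_b,\mu_{ab})$, then checks explicitly that $\mu'_{ab}=x\mu_{ab}+(1-x)y$ and therefore each term scales by $x^2$. Your change of variable $\bm\lambda=x\bm\eta+(1-x)\bm y$ works directly on the raw infimum and only requires that this affine map be a bijection of $Alt_{\bm\mu}$ onto itself, which follows immediately from order preservation. The advantage of your approach is that it does not rely on the explicit top-$k$ closed form and makes transparent why Assumption~\ref{asm:aff} holds: it holds whenever $Alt_{\bm\mu}$ is invariant under positive affine maps with constant shift. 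The paper's approach, in exchange, yields as a by-product that the optimal $w$ is the same for $\bm\mu$ and $\bm\mu'$, though that is not needed for the lemma.
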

\begin{proof}
	First of all, for any two arms $i,j$, $\mu'_i-\mu'_j = x(\mu_i-\mu_j)$ with $x>0$. Therefore, the ordering of arms is conserved, and $Alt_{\bm\mu'}=Alt_{\bm\mu}$. Moreover, since $\Delta_i^{\bm\mu'} = \mu'_i - \mu'_{k+1} =x(\mu_i-\mu_{k+1})=x\Delta_i^{\bm\mu}$ for $i\leq k$ and $\Delta_i^{\bm\mu'} = \mu'_k-\mu'_i=x\Delta_i^{\bm\mu}$ otherwise, we do have $\Delta_i^{\bm\mu'}=x\Delta_i^{\bm\mu}$.
	
	Furthermore,
	\begin{align*}
		(T^\star(\bm\mu'))^{-1}&= \sup_{w\in \Sigma_K}\inf_{\bm\lambda \in Alt_{\bm\mu}} \left( \sum_{i\in [K]} w_i \frac{(\mu_i'-\lambda_i)^2}{2\sigma^2}\right)\\
		&= \sup_{w\in \Sigma_K} \min_{\substack{b\geq k+1 \\ a\leq k}} w_a \frac{(\mu_a'-\mu_{ab}')^2}{2\sigma^2}+w_b \frac{(\mu_b'-\mu_{ab}')^2}{2\sigma^2}
	\end{align*} with \begin{align*}\mu_{ab}'(w)&=\frac{w_a\mu_a' +w_b\mu_b'}{w_a+w_b}=x\mu_{ab}+(1-x)y\end{align*}
	So that \begin{align*}
		(T^\star(\bm\mu'))^{-1}&=x^2\sup_{w\in \Sigma_K}\min_{\substack{b\geq k+1 \\ a\leq k}} w_a \frac{(\mu_a-\mu_{ab})^2}{2\sigma^2}+w_b \frac{(\mu_b-\mu_{ab})^2}{2\sigma^2}\\
		&=x^2(T^\star(\bm\mu))^{-1}
	\end{align*}

\end{proof}

With these results, we can apply Lemmas~\ref{lem:bar} and \ref{lem:barexp}. We see that the value of $y$ does not impact the proof: we thus choose the value that minimizes $\max_i |\mu_i-y|$, which is $y = \frac{\max_i \mu_i+\min_i\mu_i}{2}$.

%\thlbbaib*

%\begin{proof}\textbf{of Theorem~\ref{th:lbbaib}}
%	Thanks to Lemma~\ref{lem:topkgoodbar}, it suffices to apply Lemma~\ref{lem:bar} with $\nu = \frac{\mu_1+\mu_K}{2}$. We have $\Delta = |\mu_1-\mu_K|/2$.
%\end{proof}	

\subsection{The thresholding setting}

\begin{lemma}\label{lem:tbpgoodbar}
	In the thresholding bandit problem, setting $\bm\mu' = x\bm\mu + (1-x)\bm \tau$ where $\bm \tau$ is the constant vector of value $\tau$ the threshold and $x>0$, $Alt_{\bm\mu'}=Alt_{\bm\mu}$, $\Delta_i^{\bm\mu'}=x\Delta_i^{\bm\mu}$ and $(T^\star(\bm\mu'))^{-1}=x^2(T^\star(\bm\mu))^{-1}$.
\end{lemma}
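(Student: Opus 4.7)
The plan is to exploit the fact that in TBP the correct answer $i^\star(\bm\mu)=\{i:\mu_i>\tau\}$ depends only on which side of $\tau$ each coordinate lies, together with the explicit structure of $Alt_{\bm\mu}$. I will proceed in three short steps, mirroring the proof of Lemma~\ref{lem:topkgoodbar}.

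First I would verify the equality of alternative sets and the rescaling of the gaps. Since $\mu_i' - \tau = x(\mu_i - \tau)$ and $x>0$, the sign of $\mu_i'-\tau$ coincides with that of $\mu_i-\tau$ for every $i$, so $i^\star(\bm\mu') = i^\star(\bm\mu)$. Applying the same argument to any $\bm\lambda$ and $\bm\lambda' = x\bm\lambda + (1-x)\bm\tau$ shows that $\bm\lambda \in Alt_{\bm\mu}$ iff $\bm\lambda' \in Alt_{\bm\mu'}$; in particular $Alt_{\bm\mu'} = Alt_{\bm\mu}$ as sets (the characterization of $Alt$ in TBP depends only on the signs of $\lambda_i - \tau$, not on $\bm\mu$). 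The gap identity $\Delta_i^{\bm\mu'} = |\mu_i' - \tau| = x|\mu_i-\tau| = x\Delta_i^{\bm\mu}$ is then immediate.

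Next I would derive a closed form for the inner infimum in the definition of $T^\star$. In TBP, $\bm\lambda \in Alt_{\bm\mu}$ means there exists at least one index $i$ such that $\lambda_i - \tau$ has a sign opposite to $\mu_i-\tau$. For fixed $w$, the minimum of $\sum_j w_j (\mu_j - \lambda_j)^2/(2\sigma^2)$ subject to this constraint is achieved by flipping a single coordinate and pushing it exactly to the threshold, giving
\begin{align*}
\inf_{\bm\lambda \in Alt_{\bm\mu}} \sum_{j\in[K]} w_j \frac{(\mu_j - \lambda_j)^2}{2\sigma^2}
= \min_{i\in[K]} w_i \frac{(\mu_i - \tau)^2}{2\sigma^2}\: .
\end{align*}

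Finally I would plug $\bm\mu'$ into this formula. Using $(\mu_i' - \tau)^2 = x^2(\mu_i - \tau)^2$,
\begin{align*}
(T^\star(\bm\mu'))^{-1}
= \sup_{w\in \Sigma_K} \min_{i\in[K]} w_i \frac{(\mu_i' - \tau)^2}{2\sigma^2}
= x^2 \sup_{w\in \Sigma_K} \min_{i\in[K]} w_i \frac{(\mu_i - \tau)^2}{2\sigma^2}
= x^2 (T^\star(\bm\mu))^{-1}\: ,
\end{align*}
which concludes. The only delicate step is justifying the closed form for the inner infimum: one must check that it suffices to move a single coordinate to $\tau$, which follows from the fact that the constraint set $Alt_{\bm\mu}$ decomposes as the union over $i$ of the half-spaces flipping the sign of $\lambda_i - \tau$, and on each such half-space the squared-distance objective is minimized by keeping all other $\lambda_j$ equal to $\mu_j$ and setting $\lambda_i = \tau$ (the boundary).
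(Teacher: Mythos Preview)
Your proposal is correct and follows essentially the same route as the paper: both use $\mu_i'-\tau=x(\mu_i-\tau)$ to get $Alt_{\bm\mu'}=Alt_{\bm\mu}$ and the gap rescaling, then reduce the inner infimum to $\min_{i}w_i(\mu_i-\tau)^2/(2\sigma^2)$ and factor out $x^2$. Your justification of that closed form is in fact more careful than the paper's, which simply asserts it (and contains a typo, writing $\sup_{i}$ where $\min_{i}$ is meant).
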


\begin{proof}
	First of all, for any arm $i$, $\mu'_i -\tau = x(\mu_i-\tau)$ with $x>0$. Therefore, $Alt_{\bm\mu}=Alt_{\bm\mu'}$. Moreover, $\Delta_i^{\bm\mu'}=|\mu'_i-\tau|=x|\mu_i-\tau|=x\Delta_i^{\bm\mu'}$.
	
	Furthermore, \begin{align*}
		(T^\star(\bm\mu'))^{-1}&=\sup_{w\in \Sigma_K} \inf_{\bm\lambda\in Alt_{\bm\mu}} \left( \sum_{i\in[K]} w_i\frac{(\mu'_i-\lambda_i)^2}{2\sigma^2}\right)\\
		&=\sup_{w\in \Sigma_K} \sup_{i\in [K]} w_i \frac{(\mu'_i-\tau)^2}{2\sigma^2}\\
		&=x^2 \sup_{w\in \Sigma_K} \sup_{i\in [K]} w_i \frac{(\mu_i-\tau)^2}{2\sigma^2}\\
		&=x^2(T^\star(\bm\mu))^{-1}
	\end{align*}
\end{proof}

% !TeX root = ../all.tex

\section{Concentration and threshold for the stopping rule}
\label{app:concentration}

We suppose that each arm is sampled once during the first $K$ time steps.
\begin{theorem}
  \label{thm:bound_delta}
  Suppose that the arm distributions are $\sigma^2$-sub-Gaussian. Let $\hat{\mu}_{t,k}$ be the average of arm $k$ at time $t$ and $N_{t,k}$ be the number of times arm $k$ is sampled up to time $t$.
  With probability $1 - \delta$, for all $t > K$,
  \begin{align*}
  \frac{1}{2} \sum_{k=1}^K N_{t,k}\frac{(\hat{\mu}_{t,k}- \mu_k)^2}{2 \sigma^2}
  &\le \frac{K}{2} \overline{W}\left(2\ln \left(\frac{e\pi^2}{6}\right) + \frac{2}{K}\ln \left(\prod_{k=1}^K (1 + \ln N_{t,k})^2\right) + \frac{2}{K}\ln \frac{1}{\delta}\right)
  \: .
  \end{align*}
\end{theorem}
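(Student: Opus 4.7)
My plan is to prove the concentration inequality by the method of mixtures with a stitching (peeling) argument, combined across arms via a product supermartingale, and then inverted using the defining property of $\overline{W}$. This follows the template of \citet{kaufmannMixtureMartingalesRevisited2021} and \citet{degenneImpactStructureDesign2019}, adapted to the sub-Gaussian setting.

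First, for each arm $k$ and each $\lambda \in \mathbb{R}$, the process $M_n^{\lambda,k} = \exp\!\left(\lambda \sum_{s=1}^n (X_{s,k}-\mu_k) - \tfrac{n\sigma^2\lambda^2}{2}\right)$ is a supermartingale by $\sigma^2$-sub-Gaussianity. Integrating $\lambda$ against a centered Gaussian prior of variance $1/(c\sigma^2)$ yields, by a direct Gaussian calculation, the mixture supermartingale $M_n^{c,k} = \sqrt{c/(c+n)}\,\exp\!\left(\frac{n^2(\hat\mu_{n,k}-\mu_k)^2}{2\sigma^2(c+n)}\right)$. To make this time-uniform in $n$, I stitch over a dyadic grid $c_j = 2^j$ with weights $p_j = 6/(\pi^2(j+1)^2)$, so that $\sum_j p_j = 1$ and $\tilde M_n^k \coloneqq \sum_{j\ge 0} p_j M_n^{c_j,k}$ is still a supermartingale. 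Bounding $\tilde M_n^k$ from below by the single term with $c_{j^\star}$ nearest to $n$ produces a lower bound of the form $\ln \tilde M_n^k \ge \tfrac12 \cdot \tfrac{n(\hat\mu_{n,k}-\mu_k)^2}{2\sigma^2} - \ln(\pi^2/6) - 2\ln(1+\ln n) - \mathrm{const}$, which is where the $(1+\ln N_{t,k})^2$ factor of the statement comes from.

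Next, I combine the arms by taking the product $\tilde M_t \coloneqq \prod_{k=1}^K \tilde M_{N_{t,k}}^k$. Because arm rewards are independent and $N_{t,k}$ is $\mathcal F_{t-1}$-predictable (the algorithm decides which arm to pull based on past observations), this product is a non-negative supermartingale with $\tilde M_0 \le 1$ in the filtration indexed by global time $t$. Ville's maximal inequality then gives $\mathbb{P}(\exists t > K:\ \tilde M_t \ge 1/\delta) \le \delta$. Taking logarithms on the complementary event and substituting the per-arm lower bound produces
\begin{align*}
\sum_{k=1}^K \frac{1}{2}\cdot \frac{N_{t,k}(\hat\mu_{t,k}-\mu_k)^2}{2\sigma^2} \le \ln\frac{1}{\delta} + K\ln\frac{\pi^2}{6} + \ln\prod_{k=1}^K(1+\ln N_{t,k})^2 + \mathrm{const}\cdot K \: .
\end{align*}

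Finally, I invert this to the $\overline{W}$ form. Writing $x_k = N_{t,k}(\hat\mu_{t,k}-\mu_k)^2/(2\sigma^2)$ and $\bar x = \tfrac{1}{K}\sum_k x_k$, the per-arm bound above (gathered with absorbing constants into the $e\pi^2/6$ term) rearranges to $\bar x - \ln \bar x \le \tfrac{2}{K}\ln(1/\delta) + \tfrac{2}{K}\ln\prod_k(1+\ln N_{t,k})^2 + 2\ln(e\pi^2/6)$, where I use Jensen's inequality on the concave function $\ln$ in the form $\tfrac{1}{K}\sum \ln x_k \le \ln \bar x$ (equivalently that $x \mapsto x - \ln x$ applied to $\bar x$ is dominated by the per-arm bound). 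Since $\overline{W}(u) = y$ iff $y - \ln y = u$ for $y \ge 1$, the inequality $\bar x - \ln \bar x \le u$ gives $\bar x \le \overline{W}(u)$, and multiplying by $K/2$ recovers the statement. The main obstacle will be tracking constants through Step~3 carefully enough that the additive slack folds into the $e\pi^2/6$ inside $\overline{W}$ rather than accumulating an extra additive $K$ outside; this is handled by choosing the dyadic grid and the $1/(j+1)^2$ weights precisely so that the penalty for mis-matching $c_{j^\star}$ to $N_{t,k}$ is exactly $2\ln(1+\ln N_{t,k})$ up to an absolute constant.
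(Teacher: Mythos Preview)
Your Steps 1--5 are essentially correct and, once you expand the product of per-arm stitched mixtures as a sum over the product grid, they coincide with the paper's construction. The genuine gap is your Step~6 (the ``inversion''). After Ville and the single-term lower bound you obtain an inequality of the form
\[
\tfrac{1}{2}\sum_{k} x_k \;\le\; \ln\tfrac{1}{\delta} + K\ln\tfrac{\pi^2}{6} + \ln\prod_k(1+\ln N_{t,k})^2 + cK,
\]
which is a \emph{direct} upper bound on $\bar x = \tfrac{1}{K}\sum_k x_k$. There is no $-\ln\bar x$ (or $-\sum_k\ln x_k$) term anywhere in this chain, so your Jensen step has nothing to act on: you cannot rearrange to $\bar x - \ln\bar x \le u$, and hence cannot invert to $\overline{W}(u)$. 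The $1/2$ in front of $x_k$ is precisely the loss from fixing $c_{j^\star}\approx N_{t,k}$, i.e.\ from pinning the prior-variance-to-count ratio to a constant; your route therefore yields a valid but strictly weaker bound (morally $\bar x \le 2u$), not the stated $\overline{W}$ form.

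The paper obtains $\overline{W}$ differently. Rather than lower-bounding the mixture at a single grid point, it keeps the ratio $\eta = \sigma_k^2/N_{t,k}$ as a free parameter and records the multiplicative slack it creates:
\[
\tfrac{1}{2}\sum_k x_k \;\le\; (1+\eta_{\max})\Bigl(\text{(Ville term)} + \tfrac{K}{2}\ln(1+\eta_{\min}^{-1})\Bigr).
\]
The geometric grid is used only to ensure $\eta_{\max}\le e\,\eta_{\min}$ within a cell; one then \emph{minimises} $(1+\eta)\bigl(A + \tfrac{K}{2}\ln(1+\eta^{-1})\bigr)$ over $\eta>0$, and by Lemma~A.3 of \citet{degenneImpactStructureDesign2019} this minimum equals $\tfrac{K}{2}\,\overline{W}\!\bigl(1+\tfrac{2A}{K}\bigr)$. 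So $\overline{W}$ enters as the value of an optimisation over the prior scale, not as an inversion of $x-\ln x$ applied to a bound already in hand. To match the statement exactly, replace your ``take the nearest grid point'' step by this keep-$\eta$-and-optimise step.
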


\subsection{Proof of the concentration theorem}

We can assume $w.l.o.g.$ that $\mu_k = 0$ for all $k$ and $\sigma^2 = 1$.

Let $S_{t,k} = \sum_{s=1}^t X_{s,k} \mathbb{I}\{k_s = k\}$.
We want a bound on $\frac{1}{2} \sum_{k=1}^K \frac{S_{t,k}^2}{N_{t,k}}$.

We first remark that $\frac{1}{2}x^2 = \sup_{\lambda} \lambda x - \frac{1}{2}\lambda^2$~. Apply that to $x = S_{t,k}/\sqrt{N_{t,k}}$ to get
\begin{align*}
\sum_{k=1}^K \frac{1}{2}\frac{S_{t,k}^2}{N_{t,k}}
&= \sup_{\lambda_1, \ldots, \lambda_K} \sum_{k=1}^K \left( \lambda_k S_{t,k} - \frac{1}{2}N_{t,k} \lambda_k^2 \right)
\\
&= \sup_{\lambda_1, \ldots, \lambda_K} \sum_{s=1}^t \lambda_{k_s}X_{s,k_s} - \frac{1}{2}\lambda_{k_s}^2
\: .
\end{align*}
The advantage of that formulation is that we can concentrate the sum for any fixed value of $\lambda$ (or any distribution on $\lambda$) thanks to a martingale argument.

\begin{lemma}
For all $\rho \in \mathcal P(\mathbb{R}^K)$, the process $t \mapsto \mathbb{E}_{\lambda \sim \rho}\left[\exp\left(\sum_{s=1}^t \lambda_{k_s}X_{s,k_s} - \frac{1}{2}\lambda_{k_s}^2\right)\right]$ is a non-negative supermartingale with expectation bounded by 1.
\end{lemma}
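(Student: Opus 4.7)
My plan is to reduce the mixture statement to the fixed-$\lambda$ case by Tonelli/Fubini, and then verify the fixed-$\lambda$ case by a direct conditional expectation computation exploiting the sub-Gaussian assumption. Concretely, let $\mathcal{F}_s$ denote the $\sigma$-algebra generated by $(k_1, X_{1,k_1}, \dots, k_s, X_{s,k_s})$, noting that in a batched (or fully sequential) protocol the arm $k_s$ is $\mathcal{F}_{s-1}$-measurable because the sampling counts for the current batch are decided at the end of the previous batch. For any fixed $\lambda \in \mathbb{R}^K$, define $L_t^\lambda \coloneqq \exp\!\bigl(\sum_{s=1}^t \lambda_{k_s} X_{s,k_s} - \tfrac{1}{2}\lambda_{k_s}^2\bigr)$.

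I would first show that $(L_t^\lambda)_t$ is a non-negative supermartingale with $\mathbb{E}[L_t^\lambda] \le 1$. Non-negativity is immediate. For the supermartingale property, conditioning on $\mathcal{F}_{s-1}$ and using $\mathcal{F}_{s-1}$-measurability of $k_s$,
\begin{align*}
\mathbb{E}\bigl[L_s^\lambda \mid \mathcal{F}_{s-1}\bigr]
= L_{s-1}^\lambda \cdot e^{-\lambda_{k_s}^2/2} \, \mathbb{E}\bigl[e^{\lambda_{k_s} X_{s,k_s}} \mid \mathcal{F}_{s-1}\bigr]
\le L_{s-1}^\lambda,
\end{align*}
where the inequality uses that $X_{s,k_s}$ has mean $0$ (by the reduction $\mu_k = 0$) and is $1$-sub-Gaussian (by the reduction $\sigma^2 = 1$), so that $\mathbb{E}[e^{\lambda_{k_s} X_{s,k_s}} \mid \mathcal{F}_{s-1}] \le e^{\lambda_{k_s}^2/2}$. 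Since $L_0^\lambda = 1$, iterating gives $\mathbb{E}[L_t^\lambda] \le 1$.

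Then I would lift the argument to the mixture $M_t = \mathbb{E}_{\lambda \sim \rho}[L_t^\lambda]$. Non-negativity of $M_t$ is clear. By Tonelli's theorem, justified by non-negativity of $L_t^\lambda$,
\begin{align*}
\mathbb{E}\bigl[M_t \mid \mathcal{F}_{t-1}\bigr]
= \mathbb{E}_{\lambda \sim \rho}\bigl[\mathbb{E}[L_t^\lambda \mid \mathcal{F}_{t-1}]\bigr]
\le \mathbb{E}_{\lambda \sim \rho}\bigl[L_{t-1}^\lambda\bigr]
= M_{t-1},
\end{align*}
and taking an unconditional expectation at $t = 0$ yields $\mathbb{E}[M_0] = 1$, hence $\mathbb{E}[M_t] \le 1$ for all $t$.

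The argument is essentially standard, so there is no real obstacle; the only point that deserves care is the measurability of $k_s$ with respect to $\mathcal{F}_{s-1}$ in the batched protocol (it still holds because a batch's sampling plan is chosen from previous batches' observations, never from observations inside the current batch) and the invocation of Tonelli to swap the conditional expectation with the integral against $\rho$, which is legitimate since the integrand is non-negative.
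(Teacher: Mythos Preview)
Your proof is correct and is precisely the standard argument the paper has in mind; in fact the paper states this lemma without proof, simply noting that ``we can concentrate the sum for any fixed value of $\lambda$ (or any distribution on $\lambda$) thanks to a martingale argument.'' Your write-up supplies exactly that omitted martingale argument, and the two points you flag (predictability of $k_s$ in the batched protocol, and the Tonelli swap for the mixture) are the only places where any care is needed.
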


\begin{corollary}\label{cor:prob_exists_log_ge_le}
For all $\rho \in \mathcal P(\mathbb{R}^K)$ and $x \ge 0$,
\begin{align*}
\mathbb{P}\left(\exists t, \ \ln \mathbb{E}_{\lambda \sim \rho}\left[\exp\left(\sum_{s=1}^t \lambda_{k_s}X_{s,k_s} - \frac{1}{2}\lambda_{k_s}^2\right)\right] \ge x\right) \le e^{-x}
\: .
\end{align*}
Equivalently, for all $\delta \in (0,1]$,
\begin{align*}
\mathbb{P}\left(\exists t, \ \ln \mathbb{E}_{\lambda \sim \rho}\left[\exp\left(\sum_{s=1}^t \lambda_{k_s}X_{s,k_s} - \frac{1}{2}\lambda_{k_s}^2\right)\right] \ge \ln\frac{1}{\delta}\right) \le \delta
\: .
\end{align*}
\end{corollary}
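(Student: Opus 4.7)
The plan is to recognize this corollary as a direct application of Ville's maximal inequality for non-negative supermartingales, using the supermartingale already constructed in the preceding lemma. Concretely, let $M_t := \mathbb{E}_{\lambda \sim \rho}\bigl[\exp(\sum_{s=1}^t \lambda_{k_s} X_{s,k_s} - \tfrac{1}{2}\lambda_{k_s}^2)\bigr]$, with the convention $M_0 = 1$. The preceding lemma gives that $(M_t)_{t \ge 0}$ is a non-negative supermartingale with $\mathbb{E}[M_t] \le 1$ for all $t$.

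First, I would invoke Ville's inequality: for any non-negative supermartingale $(M_t)$ with $\mathbb{E}[M_0] \le 1$ and any $a > 0$,
\[
\mathbb{P}\bigl(\exists t,\ M_t \ge a\bigr) \le \frac{\mathbb{E}[M_0]}{a} \le \frac{1}{a}.
\]
Applying this with $a = e^x$ directly yields
\[
\mathbb{P}\bigl(\exists t,\ M_t \ge e^x\bigr) \le e^{-x}.
\]
Since $\{M_t \ge e^x\} = \{\ln M_t \ge x\}$ (valid for $x \ge 0$ and $M_t \ge 0$, with $\ln 0 = -\infty$), the event rewrites as $\{\exists t,\ \ln M_t \ge x\}$, which is precisely the event in the corollary. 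This proves the first form.

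For the second, equivalent form, I would simply substitute $x = \ln(1/\delta)$, noting that $\delta \in (0,1]$ forces $x \ge 0$ so the previous step applies, and then $e^{-x} = \delta$ gives the stated bound.

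The only subtlety worth flagging is the justification of Ville's inequality itself. I would either cite it as standard (e.g., from the mixture-of-martingales bandit literature such as \citet{abbasi2011improved}) or, if a self-contained argument is desired, apply the optional stopping theorem to the stopping time $\tau_a := \inf\{t : M_t \ge a\}$: on $\{\tau_a < \infty\}$, $M_{\tau_a} \ge a$, so $1 \ge \mathbb{E}[M_{\tau_a \wedge n}] \ge a\, \mathbb{P}(\tau_a \le n)$, and letting $n \to \infty$ gives $\mathbb{P}(\tau_a < \infty) \le 1/a$. There is no real obstacle here; the corollary is essentially a restatement of Ville's inequality for the already-constructed mixture supermartingale, and the only thing to check is that the event $\{\exists t,\ \ln M_t \ge x\}$ coincides with $\{\sup_t M_t \ge e^x\}$, which is immediate.
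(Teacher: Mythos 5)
Your proof is correct and follows exactly the paper's argument: the paper's own proof is the one-line ``Use Ville's inequality and the fact that the process is a non-negative supermartingale,'' which is precisely your application of Ville's inequality with $a = e^x$ to the mixture supermartingale of the preceding lemma, followed by the substitution $x = \ln(1/\delta)$. Your additional optional-stopping justification of Ville's inequality is a standard elaboration and does not change the route.
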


\begin{proof}
Use Ville's inequality and the fact that the process is a non-negative supermartingale.
\end{proof}

We don't want to bound an integral over $\lambda \sim \rho$, but the supremum over $\lambda$, so we need to relate the two quantities.
We do that for Gaussian priors over $\lambda$.

\begin{lemma}\label{lem:integral_exp_eq_log_add}
For $\rho = \mathcal N(0, \mathrm{diag}(\sigma_k^{-2}))$,
\begin{align*}
\ln \mathbb{E}_{\lambda \sim \rho}\left[\exp\left(\sum_{s=1}^t \lambda_{k_s}X_{s,k_s} - \frac{1}{2}\lambda_{k_s}^2\right)\right]
&= -\frac{1}{2}\sum_{k=1}^K \ln(1 + N_{t,k}\sigma_k^{-2}) + \frac{1}{2} \sum_{k=1}^K \frac{S_{t,k}^2}{(N_{t,k} + \sigma_k^2)}
\: .
\end{align*}
\end{lemma}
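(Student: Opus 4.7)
The proof is a direct Gaussian integral computation, made easy by the fact that the covariance of $\rho$ is diagonal so the integral factorizes arm by arm.

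First I would rewrite the exponent inside the expectation by grouping the summands according to which arm was pulled. Using the definitions $N_{t,k} = \sum_{s=1}^t \mathbb{I}\{k_s = k\}$ and $S_{t,k} = \sum_{s=1}^t X_{s,k}\mathbb{I}\{k_s = k\}$, one obtains
\begin{align*}
\sum_{s=1}^t \lambda_{k_s}X_{s,k_s} - \frac{1}{2}\lambda_{k_s}^2
= \sum_{k=1}^K \left(\lambda_k S_{t,k} - \frac{1}{2}\lambda_k^2 N_{t,k}\right)
\: .
\end{align*}

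Since $\rho = \mathcal N(0,\mathrm{diag}(\sigma_k^{-2}))$ has product form, the expectation of the exponential factorizes:
\begin{align*}
\mathbb{E}_{\lambda\sim\rho}\!\left[\exp\!\left(\sum_{k=1}^K \lambda_k S_{t,k} - \tfrac{1}{2}\lambda_k^2 N_{t,k}\right)\right]
= \prod_{k=1}^K \mathbb{E}_{\lambda_k \sim \mathcal N(0,\sigma_k^{-2})}\!\left[\exp\!\left(\lambda_k S_{t,k} - \tfrac{1}{2}\lambda_k^2 N_{t,k}\right)\right].
\end{align*}
Each factor is a one-dimensional Gaussian integral. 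Writing out the $\mathcal N(0,\sigma_k^{-2})$ density and completing the square in $\lambda_k$, the quadratic term becomes $-\tfrac12\lambda_k^2(N_{t,k}+\sigma_k^2)$ (the $\sigma_k^2$ coming from the inverse of the prior variance $\sigma_k^{-2}$), which after a standard Gaussian integration yields
\begin{align*}
\mathbb{E}_{\lambda_k \sim \mathcal N(0,\sigma_k^{-2})}\!\left[\exp\!\left(\lambda_k S_{t,k} - \tfrac{1}{2}\lambda_k^2 N_{t,k}\right)\right]
= \frac{1}{\sqrt{1+N_{t,k}\sigma_k^{-2}}}\exp\!\left(\frac{1}{2}\frac{S_{t,k}^2}{N_{t,k}+\sigma_k^2}\right).
\end{align*}

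Taking the logarithm and summing over $k$ gives the claimed identity. The only place to be careful is the bookkeeping between the prior variance $\sigma_k^{-2}$ and its inverse $\sigma_k^2$ that appears in the denominator of the final expression; no inequality or estimate is needed — the result is an exact equality obtained by Gaussian conjugacy.
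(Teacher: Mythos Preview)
Your proposal is correct and follows essentially the same approach as the paper's own proof: factorize over arms using the product structure of $\rho$, write out the Gaussian density, complete the square in each $\lambda_k$, and take logarithms. The paper displays the intermediate completed-square line explicitly, but the argument is identical.
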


\begin{proof}
\begin{align*}
&\mathbb{E}_{\lambda \sim \rho}\left[\exp\left(\sum_{s=1}^t \lambda_{k_s}X_{s,k_s} - \frac{1}{2}\lambda_{k_s}^2\right)\right]
\\
&= \prod_k \mathbb{E}_{\lambda_k \sim \mathcal N(0, \sigma_k^{-2})}\left[\exp\left(\lambda_k S_{t,k} - \frac{1}{2}N_{t,k} \lambda_k^2\right)\right]
\\
&= \prod_k \frac{1}{\sqrt{2 \pi \sigma_k^{-2}}}\int_{\lambda_k}\exp\left(\lambda_k S_{t,k} - \frac{1}{2}N_{t,k} \lambda_k^2 - \frac{\sigma_k^2}{2}\lambda_k^2\right)\mathrm{d}\lambda_k
\\ 
&= \prod_k \frac{1}{\sqrt{(1 + N_{t,k}\sigma_k^{-2})}} \frac{1}{\sqrt{2 \pi (N_{t,k} + \sigma_k^2)^{-1}}}
  \int_{\lambda_k} \exp\left(-\frac{1}{2}(N_{t,k} + \sigma_k^2)\left( \lambda_k - \frac{S_{t,k}}{(N_{t,k} + \sigma_k^2)} \right)^2 + \frac{1}{2}\frac{S_{t,k}^2}{N_{t,k} + \sigma_k^2}\right)\mathrm{d}\lambda_k
\\
&= \prod_k \frac{1}{\sqrt{(1 + N_{t,k}\sigma_k^{-2})}} \exp\left(\frac{1}{2}\frac{S_{t,k}^2}{N_{t,k} + \sigma_k^2}\right)
\\ 
\end{align*}

\end{proof}

\begin{corollary}\label{cor:sum_le_eta_mul}
Let $\rho = \mathcal N(0, \mathrm{diag}(\sigma_k^{-2}))$, $\eta_{t,\max} = \max_k \frac{\sigma_k^2}{N_{t,k}}$ and $\eta_{t,\min} = \min_k \frac{\sigma_k^2}{N_{t,k}}$. Then
\begin{align*}
\frac{1}{2} \sum_{k=1}^K \frac{S_{t,k}^2}{N_{t,k}}
&\le (1 + \eta_{t,\max}) \left(\ln \mathbb{E}_{\lambda \sim \rho}\left[\exp\left(\sum_{s=1}^t \lambda_{k_s}X_{s,k_s} - \frac{1}{2}\lambda_{k_s}^2\right)\right]
  + \frac{K}{2}\ln(1 + \eta_{t,\min}^{-1})\right)
\end{align*}

\end{corollary}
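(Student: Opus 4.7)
The plan is to derive the corollary as a short algebraic consequence of Lemma~\ref{lem:integral_exp_eq_log_add}. That lemma gives an exact expression for the log-moment-generating function in terms of $\sum_k \frac{S_{t,k}^2}{N_{t,k}+\sigma_k^2}$ and a log-determinant correction, whereas the quantity we want to bound is $\sum_k \frac{S_{t,k}^2}{N_{t,k}}$. The two differ only by replacing $N_{t,k}$ with $N_{t,k}+\sigma_k^2$ in the denominator, so the whole proof is about controlling that substitution uniformly in $k$.

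First I would use the elementary identity $\frac{1}{N_{t,k}} = \frac{1}{N_{t,k}+\sigma_k^2}\bigl(1 + \frac{\sigma_k^2}{N_{t,k}}\bigr)$ together with the definition $\eta_{t,\max} = \max_k \sigma_k^2/N_{t,k}$ to get the pointwise bound $\frac{1}{N_{t,k}} \le (1+\eta_{t,\max})\,\frac{1}{N_{t,k}+\sigma_k^2}$. Summing the $S_{t,k}^2$-weighted version over $k$ (all terms are non-negative, so the inequality survives summation) yields
\begin{align*}
\frac{1}{2}\sum_{k=1}^K \frac{S_{t,k}^2}{N_{t,k}}
\;\le\; (1+\eta_{t,\max})\cdot \frac{1}{2}\sum_{k=1}^K \frac{S_{t,k}^2}{N_{t,k}+\sigma_k^2}.
\end{align*}

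Next I would invoke Lemma~\ref{lem:integral_exp_eq_log_add} to rewrite the right-hand side as
\begin{align*}
\frac{1}{2}\sum_{k=1}^K \frac{S_{t,k}^2}{N_{t,k}+\sigma_k^2}
\;=\; \ln \mathbb{E}_{\lambda\sim\rho}\!\left[\exp\!\Bigl(\textstyle\sum_{s=1}^t \lambda_{k_s}X_{s,k_s}-\tfrac12 \lambda_{k_s}^2\Bigr)\right]
\;+\;\frac{1}{2}\sum_{k=1}^K \ln\!\bigl(1+N_{t,k}\sigma_k^{-2}\bigr).
\end{align*}
Finally, since $\ln(1+N_{t,k}\sigma_k^{-2}) = \ln(1+1/\eta_{t,k})$ where $\eta_{t,k}=\sigma_k^2/N_{t,k} \ge \eta_{t,\min}$, the log-determinant term is upper bounded by $\frac{K}{2}\ln(1+\eta_{t,\min}^{-1})$. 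Substituting and multiplying through by the non-negative factor $(1+\eta_{t,\max})$ gives the stated inequality.

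There is no real obstacle here; the one place where one should be careful is the sign of the log-expectation term, which can be negative. Since we are multiplying the whole parenthesis by the non-negative constant $(1+\eta_{t,\max})$ and only enlarging the non-negative log-determinant piece, the direction of the inequality is preserved throughout, so the chain of bounds is valid.
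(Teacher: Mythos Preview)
Your proof is correct and follows essentially the same route as the paper: invoke Lemma~\ref{lem:integral_exp_eq_log_add}, use $N_{t,k}+\sigma_k^2 \le (1+\eta_{t,\max})N_{t,k}$ to relate $\sum_k S_{t,k}^2/N_{t,k}$ to $\sum_k S_{t,k}^2/(N_{t,k}+\sigma_k^2)$, and bound the log-determinant term via $N_{t,k}\sigma_k^{-2}\le \eta_{t,\min}^{-1}$. The only difference is cosmetic ordering (you bound first and then substitute the lemma, the paper substitutes first and then bounds), and your remark about the sign of the log-expectation term is a harmless extra bit of care.
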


\begin{proof}
Using Lemma~\ref{lem:integral_exp_eq_log_add},
\begin{align*}
\frac{1}{2} \sum_{k=1}^K \frac{S_{t,k}^2}{N_{t,k} + \sigma_k^2}
&= \ln \mathbb{E}_{\lambda \sim \rho}\left[\exp\left(\sum_{s=1}^t \lambda_{k_s}X_{s,k_s} - \frac{1}{2}\lambda_{k_s}^2\right)\right]
  + \frac{1}{2} \sum_{k=1}^K \ln(1 + N_{t,k}\sigma_k^{-2})
\: .
\end{align*}
Then
\begin{align*}
\frac{1}{2} \sum_{k=1}^K \frac{S_{t,k}^2}{N_{t,k} + \sigma_k^2}
\ge \frac{1}{2} \sum_{k=1}^K \frac{S_{t,k}^2}{N_{t,k}(1 + \eta_{t,\max})}
= \frac{1}{1 + \eta_{t,\max}}\frac{1}{2} \sum_{k=1}^K \frac{S_{t,k}^2}{N_{t,k}}
\end{align*}
Finally, $N_{t,k} \sigma_k^{-2} \le \eta_{t,\min}^{-1}$.
\end{proof}

If $N_{t,k}$ was a known, unchanging number, we could choose $\sigma_k^2 \propto N_{t,k}$ to get $\eta_{t,\max} = \eta_{t, \min}$, and we would choose it to minimize the right hand side.
The strategy to use that ``known $N_{t,k}$'' case even if they are random is to put geometric grids on the number of pulls of each arm, define distributions that are adapted to each cell of the grid, and combine them into a mixture of Gaussians.

Let $(\eta_{n_1, \ldots, n_K})_{n_1, \ldots, n_K \in \mathbb{N}}$ be non-negative real numbers that will be chosen later.
For $i \in \mathbb{N}$, let $w_i = \frac{6}{\pi^2}\frac{1}{(i+1)^2}$. The weights $(w_i)$ satisfy $\sum_{i \in \mathbb{N}} w_i = 1$, hence also $\sum_{n_1, \ldots, n_K} (\prod_{k=1}^K w_{n_k}) = 1$.

Let $\rho_{n_1, \ldots, n_K} = \bigotimes_{k=1}^K \mathcal N(0, e^{- n_k} \eta_{n_1, \ldots, n_K}^{-1})$. This is a product distribution, with each marginal being a Gaussian with mean 0 and variance that depends on the number of grid cell.

With probability $1 - \delta$, for all $(n_1, \ldots, n_K) \in \mathbb{N}^K$ and all $t$,
\begin{align*}
\ln \mathbb{E}_{\lambda \sim \rho_{n_1, \ldots, n_K}}\left[\exp\left(\sum_{s=1}^t \lambda_{k_s}X_{s,k_s} - \frac{1}{2}\lambda_{k_s}^2\right)\right]
\le \ln \frac{1}{\delta} + \sum_{k=1}^K \ln \frac{1}{w_{n_k}}
\: .
\end{align*}
This is simply an union bound using Corollary~\ref{cor:prob_exists_log_ge_le}, with weight $\prod_{k=1}^K w_{n_k}$ for $\rho_{n_1, \ldots, n_K}$.

In particular, there exists $(n_1, \ldots, n_k)$ such that for all $k \in [K]$, $e^{n_k} \le N_{t,k} \le e^{n_k+1}$.
For that choice, $e^{-1}\eta_{n_1, \ldots, n_K} \le \frac{e^{n_k}\eta_{n_1, \ldots, n_K}}{N_{t,k}} \le \eta_{n_1, \ldots, n_K}$.
For those values, using Corollary~\ref{cor:sum_le_eta_mul} with $\sigma_k^2 = e^{n_k}\eta_{n_1, \ldots, n_K}$, with probability $1 - \delta$,
\begin{align*}
\frac{1}{2} \sum_{k=1}^K \frac{S_{t,k}^2}{N_{t,k}}
&\le (1 + \eta_{n_1, \ldots, n_K}) \left(\ln \frac{1}{\delta} + \sum_{k=1}^K \ln \frac{1}{w_{n_k}}
  + \frac{K}{2}\ln(1 + e \eta_{n_1, \ldots, n_K}^{-1})\right)
\\
&\le (1 + \eta_{n_1, \ldots, n_K}) \left(\ln \frac{e^{K/2}}{\delta} + \sum_{k=1}^K \ln \frac{1}{w_{n_k}}
  + \frac{K}{2}\ln(1 + \eta_{n_1, \ldots, n_K}^{-1})\right)
\\
&= (1 + \eta_{n_1, \ldots, n_K}) \left(\ln \frac{(\sqrt{e}\pi^2/6)^K \prod_{k=1}^K (n_k+1)^2}{\delta}
  + \frac{K}{2}\ln(1 + \eta_{n_1, \ldots, n_K}^{-1})\right)
\end{align*}

This is where we choose $\eta_{n_1, \ldots, n_K}$ to minimize the right hand side.

By Lemma A.3 of \citep{degenneImpactStructureDesign2019}, the minimal value is attained at some $\eta_{n_1, \ldots, n_K}$ such that
\begin{align*}
&(1 + \eta_{n_1, \ldots, n_K}) \left(\ln \frac{(\sqrt{e}\pi^2/6)^K \prod_{k=1}^K (n_k+1)^2}{\delta}
  + \frac{K}{2}\ln(1 + \eta_{n_1, \ldots, n_K}^{-1})\right)
\\
&= \frac{K}{2} \overline{W}\left( 1 + \frac{2}{K}\ln \frac{(\sqrt{e}\pi^2/6)^K \prod_{k=1}^K (n_k+1)^2}{\delta}\right)
\end{align*}

By the choice of $n_k$, it satisfies $n_k \le \ln N_{t,k}$.
We get that with probability $1 - \delta$, for all $t$,
\begin{align*}
\frac{1}{2} \sum_{k=1}^K \frac{S_{t,k}^2}{N_{t,k}}
&\le \frac{K}{2} \overline{W}\left( 1 + \frac{2}{K}\ln \frac{(\sqrt{e}\pi^2/6)^K \prod_{k=1}^K (1 + \ln N_{t,k})^2}{\delta}\right)
\\
&= \frac{K}{2} \overline{W}\left(\frac{2}{K}\ln \left((e\pi^2/6)^K \prod_{k=1}^K (1 + \ln N_{t,k})^2\right) + \frac{2}{K}\ln \frac{1}{\delta}\right)
\: .
\end{align*}

This ends the proof of the theorem.

\subsection{Upper bounds on $\beta(t, \delta)$ and on $\gamma_r$}

We choose the threshold
\begin{align*}
\beta(t, \delta)
&= \frac{K}{2} \overline{W}\left(2\ln \left(\frac{e\pi^2}{6}\right) + \frac{2}{K}\ln \left(\prod_{k=1}^K (1 + \ln N_{t,k})^2\right) + \frac{2}{K}\ln \frac{1}{\delta}\right)
\: .
\end{align*}

We can get an upper bound that is not random by maximizing over $(N_{t,k})_{k \in [K]}$ under the constraint $\sum_{k=1}^K N_{t,k} = t$. We get
\begin{align*}
\beta(t, \delta)
&\le \frac{K}{2} \overline{W}\left(2\ln \left(\frac{e\pi^2}{6}\right) + 4\ln \left(1 + \ln \frac{t}{K}\right) + \frac{2}{K}\ln \frac{1}{\delta}\right)
\: .
\end{align*}
We can get further upper bounds by using $\overline{W}(x) \le x + \ln x + 1/2 \le 2x$. This gives
\begin{align*}
\beta(t, \delta)
&\le 2 K\ln \left(\frac{e\pi^2}{6}\right) + 4 K\ln \left(1 + \ln \frac{t}{K}\right) + 2\ln \frac{1}{\delta}
\\
&\le 2 K\ln \left(\frac{e\pi^2}{6}\right) + 4 K\ln \frac{t}{K} + 2\ln \frac{1}{\delta}
\: .
\end{align*}
The right asymptotic for $\beta(t, \delta)$ as $\delta \to 0$ is $\ln(1/\delta)$. We lost a factor 2 in the upper bound above.

\begin{lemma}\label{lem:gamma_ub_bis}
Let $\gamma_r$ be the solution to $\beta(\bar{t}_r, \delta) = \gamma_r$ , for $\bar{t}_r = 2(K l_{1,r}/T_0 + \gamma_r) T_r$ and $l_{1,r} = 32 T_0 \ln (2\sqrt{2K} T_r)$. Then
\begin{align*}
\gamma_r \le 4 \ln \frac{1}{\delta} + 8K \ln(T_r) + 4K (11 + \ln K)
\: .
\end{align*}
\end{lemma}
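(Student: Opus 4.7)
The plan is: since $\gamma_r$ satisfies the implicit equation $\gamma_r = \beta(\bar t_r,\delta)$ with $\bar t_r$ itself depending on $\gamma_r$, I would first turn the fixed-point equation into an explicit inequality by replacing $\beta$ with the non-random upper bound derived just above Lemma~\ref{lem:gamma_ub_bis},
\[
\beta(t,\delta) \le 2K\ln\frac{e\pi^2}{6} + 4K\ln\frac{t}{K} + 2\ln\frac{1}{\delta}.
\]
Plugging in $\bar t_r/K = 2T_r(32L + \gamma_r/K)$ with $L := \ln(2\sqrt{2K}T_r)$ (so that $l_{1,r}/T_0 = 32L$) yields the implicit inequality
\[
\gamma_r \le 2K\ln\tfrac{e\pi^2}{6} + 8K\ln 2 + 4K\ln T_r + 4K\ln\!\bigl(32L + \gamma_r/K\bigr) + 2\ln\tfrac{1}{\delta}.
\]

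The main step is to disentangle $\gamma_r$ from the inner logarithm by a case split on whether $32L$ or $\gamma_r/K$ dominates. In the case $\gamma_r/K \le 32L$, one has $\ln(32L+\gamma_r/K) \le \ln(64L)$, and the coarse bound $\ln L \le L = \tfrac{3}{2}\ln 2 + \tfrac{1}{2}\ln K + \ln T_r$ removes the $\ln\ln$ term at the cost of an additive $4K\ln T_r + 2K\ln K$ contribution. In the opposite case $\gamma_r/K > 32L$, one has $\ln(32L+\gamma_r/K) \le \ln(2\gamma_r/K)$, which reduces the task to a classical log-linear fixed point $\gamma_r \le A + 4K\ln\gamma_r$ for an explicit $A$ linear in $\ln(1/\delta)$ and $\ln T_r$. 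I would close this with the calculus inequality $\ln x \le x/(eB) + \ln B$ applied with $B=4K$, yielding $(1-1/e)\gamma_r \le A + 4K\ln(4K)$ and hence $\gamma_r \le 2(A + 4K\ln(4K))$; here, conveniently, the $-8K\ln K$ coming from $\ln(\gamma_r/K)$ exactly cancels the $+8K\ln K$ coming from $\ln(4K)$ in the final expression.

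Finally, I would collect the numerical constants in both cases, which each deliver a bound of the form $4\ln(1/\delta) + 8K\ln T_r + CK + 2K\ln K$ for some small $C$ (around $28$ from $4K\ln(e\pi^2/6) + 32K\ln 2$); the stated cleaner form $4K(11+\ln K)$ then dominates both by a direct numerical check. The only substantive difficulty is the fixed-point closure in the second case; the rest of the proof is careful bookkeeping through the substitutions of $\bar t_r$ and $l_{1,r}$.
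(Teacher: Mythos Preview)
Your approach is correct and essentially the same as the paper's: both replace $\beta$ by its coarse upper bound, case-split on whether $\gamma_r$ is comparable to $K\ln(2\sqrt{2K}T_r)$, and then close the resulting log-linear fixed point (the paper invokes $\overline{W}(x)\le 2x$ where you use the elementary $\ln x \le x/(eB)+\ln B$, and in the easy case the paper simply quotes the threshold $\gamma'_r\le 8KL$ directly rather than plugging back into the inequality). One minor slip: the $8K\ln 2$ in your first display should be $4K\ln 2$ (only one factor of $2$ comes from $\bar t_r/K = 2T_r(\cdots)$), but this error is in the safe direction and does not affect the final numerical check.
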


\begin{proof}
We use an upper bound for $\beta(t, \delta)$: $\gamma_r$ is bounded from above by the solution $\gamma'_r$ to
\begin{align*}
\gamma = 2 K\ln \left(\frac{e\pi^2}{6}\right) + 4 K \ln \left(2(32 \ln (2\sqrt{2K} T_r) + \frac{\gamma}{K}) T_r\right) + 2\ln \frac{1}{\delta}
\: .
\end{align*}
Then either $\gamma'_r \le 8 K \ln (2\sqrt{2K} T_r)$ or $\gamma'_r$ is less than the solution to
\begin{align*}
\gamma
&= 2 K\ln \left(\frac{e\pi^2}{6}\right) + 4 K \ln \left(10\frac{\gamma T_r}{K}\right) + 2\ln \frac{1}{\delta}
\\
&= 2 K\ln \left(\frac{50 e\pi^2}{3}\right) + 4 K \ln \left(\frac{\gamma T_r}{K}\right) + 2\ln \frac{1}{\delta} 
\: .
\end{align*}

That is,
\begin{align*}
\gamma'_r
&= 4K \overline{W}\left( \frac{1}{2K} \ln \frac{1}{\delta} + \ln(T_r) + \frac{1}{2}\ln\frac{800 e \pi^2}{3} \right)
\\
&\le 4 \ln \frac{1}{\delta} + 8K \ln(T_r) + 4K \ln\frac{800 e \pi^2}{3}
\: .
\end{align*}

At this point, we get
\begin{align*}
\gamma_r
&\le \max\left\{ 4 \ln \frac{1}{\delta} + 8K \ln(T_r) + 4K \ln\frac{800 e \pi^2}{3},  8 K \ln (2\sqrt{2K} T_r)\right\}
\\
&\le 8K \ln(T_r) + \max\left\{ 4 \ln \frac{1}{\delta} + 4K \ln\frac{800 e \pi^2}{3}, 8 K \ln (2\sqrt{2K})\right\}
\\
&\le 8K \ln(T_r) + 4 \ln \frac{1}{\delta} + 4K \ln\frac{800 e \pi^2}{3} + 8 K \ln (2\sqrt{2K})
\\
&\le 8K \ln(T_r) + 4 \ln \frac{1}{\delta} + 4K \ln\left(\frac{6400 e \pi^2}{3}K\right)
\\
&\le 8K \ln(T_r) + 4 \ln \frac{1}{\delta} + 4K (11 + \ln K)
\: .
\end{align*}

\end{proof}

% !TeX root = ../all.tex

\section{Proofs related to the algorithm}\label{app:ub} 

\subsection{Additional Lemmas}

\begin{lemma}\label{lem:subG_concentration}
Let $(X_i)_{i \in \mathbb{N}}$ be i.i.d. $\sigma^2$-sub-Gaussian random variables with mean $\mu$. For $n \in \mathbb{N}$, let $\hat{\mu}_n$ be the average of the first $n$ random variables. Then
\begin{align*}
\mathbb{P}(\exists n \ge N, \ \hat{\mu}_n \ge \mu + \varepsilon) \le e^{- \frac{N \varepsilon^2}{2\sigma^2}}
\: , \\
\mathbb{P}(\exists n \ge N, \ \hat{\mu}_n \le \mu - \varepsilon) \le e^{- \frac{N \varepsilon^2}{2\sigma^2}} \: .
\end{align*}
\end{lemma}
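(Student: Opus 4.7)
The plan is to establish both tails by the standard exponential supermartingale argument combined with Ville's maximal inequality; the lower tail follows from the upper one by applying the same argument to the $\sigma^2$-sub-Gaussian variables $-(X_i - \mu)$, so I will focus only on the upper tail.

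First I would rewrite the event in terms of the centered partial sum $S_n = \sum_{i=1}^n (X_i - \mu)$, for which $\{\hat\mu_n \ge \mu + \varepsilon\} = \{S_n \ge n\varepsilon\}$. For each $\lambda > 0$ I would then introduce the exponential process
\[
M_n \;=\; \exp\!\bigl(\lambda S_n - n\sigma^2 \lambda^2 / 2\bigr).
\]
By independence of the $X_i$ together with the $\sigma^2$-sub-Gaussian hypothesis $\mathbb{E}[e^{\lambda(X_{n+1}-\mu)}] \le e^{\sigma^2 \lambda^2/2}$, the process $(M_n)$ is a non-negative supermartingale with $\mathbb{E}[M_0] = 1$.

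The crucial step is the choice $\lambda = \varepsilon/\sigma^2$, which is independent of $n$. With this choice, a short computation shows that on the event $\{S_n \ge n\varepsilon\}$ one has $M_n \ge \exp(n\varepsilon^2/(2\sigma^2))$, and in particular $M_n \ge \exp(N\varepsilon^2/(2\sigma^2))$ as soon as $n \ge N$. Thus the event $\{\exists n \ge N,\ \hat\mu_n \ge \mu + \varepsilon\}$ is contained in $\{\sup_{n \ge 0} M_n \ge \exp(N\varepsilon^2/(2\sigma^2))\}$, and Ville's maximal inequality applied to the non-negative supermartingale $(M_n)$ yields the desired bound $e^{-N\varepsilon^2/(2\sigma^2)}$.

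I do not expect a serious obstacle here: everything hinges on the observation that the Chernoff-optimal tilt for the event $S_n \ge n\varepsilon$ is $\lambda = \varepsilon/\sigma^2$ independently of $n$, which is exactly what makes the exponent of $M_n$ scale linearly in $n$ and therefore allows the bound to be uniform in $n \ge N$ with only the $n = N$ rate surviving in the exponent. The only small care point is to verify that the sub-Gaussian property is indeed preserved under $X_i \mapsto -X_i$ with the same constant $\sigma^2$, which is immediate from the definition holding for all $\lambda \in \mathbb{R}$.
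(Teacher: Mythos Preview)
Your proposal is correct and follows essentially the same approach as the paper: an exponential supermartingale argument with the tilt $\lambda=\varepsilon/\sigma^2$, combined with Ville's maximal inequality, then symmetry for the lower tail. The only cosmetic difference is that the paper applies Ville's inequality to the shifted process conditionally on $(X_1,\ldots,X_N)$ and then takes expectations, whereas you apply Ville directly from time $0$ and use that the exponent $n\varepsilon^2/(2\sigma^2)$ is nondecreasing in $n$; both routes yield the identical bound.
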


\begin{proof}
Given $(X_1, \ldots, X_N)$, the process $M_n(\lambda) : n \mapsto e^{\lambda\sum_{i=1}^{N+n} (X_i - \mu) - \frac{1}{2}(N+n) \sigma^2 \lambda^2}$ is a nonnegative supermartingale for any $\lambda \in \mathbb{R}$ by the sub-Gaussian hypothesis, with expectation $e^{\lambda\sum_{i=1}^{N} (X_i - \mu) - \frac{1}{2}N \sigma^2 \lambda^2}$ at $n=0$.

By Ville's inequality,
\begin{align*}
\mathbb{P}(\exists n, \  M_n(\lambda) \ge 1/\delta \mid X_1, \ldots, X_N)
\le \delta e^{\lambda\sum_{i=1}^{N} (X_i - \mu) - \frac{1}{2}N \sigma^2 \lambda^2}
\end{align*}

For all $\lambda \in \mathbb{R}$ and all $\delta \in (0,1)$,
\begin{align*}
\mathbb{P}\left(\exists n \ge N, \  \lambda\sum_{i=1}^{n} (X_i - \mu) - \frac{1}{2}n\sigma^2 \lambda^2 \ge \ln(1/\delta)\right)
&= \mathbb{E}\left[\mathbb{P}(\exists n \ge 0, \  M_n(\lambda) \ge 1/\delta \mid X_1, \ldots, X_N)\right]
\\
&\le \delta \mathbb{E}\left[e^{\lambda\sum_{i=1}^{N} (X_i - \mu) - \frac{1}{2}N \sigma^2 \lambda^2}\right]
\\
&\le \delta
\: .
\end{align*}
Reordering, we get, for $\lambda \ge 0$,
\begin{align*}
\mathbb{P}\left(\exists n \ge N, \  \hat{\mu}_n \ge \mu + \frac{1}{2}\sigma^2 \lambda + \frac{1}{N \lambda}\ln\frac{1}{\delta}\right)
&\le \mathbb{P}\left(\exists n \ge N, \  \hat{\mu}_n \ge \mu + \frac{1}{2}\sigma^2 \lambda + \frac{1}{n \lambda}\ln\frac{1}{\delta}\right)
\\
&\le \delta
\: .
\end{align*}

Choose $\delta = e^{-\frac{N \varepsilon^2}{2 \sigma^2}}$ and $\lambda = \frac{\varepsilon}{\sigma^2}$ to obtain
\begin{align*}
\mathbb{P}\left(\exists n \ge N, \  \hat{\mu}_n \ge \mu + \varepsilon \right)
\le e^{-\frac{N \varepsilon^2}{2 \sigma^2}}
\: .
\end{align*}

The second inequality is obtained similarly, with $\lambda \le 0$.
\end{proof}

\begin{lemma}\label{lem:probaE_bis}
	The probability of $\mathcal E_r$ satisfies
	\begin{align*}
	\mathbb{P}(\mathcal E_r) \ge 1-2K\exp(- 2^r l_{1,r} \varepsilon_r^2/2\sigma^2)
	\: .
	\end{align*}
\end{lemma}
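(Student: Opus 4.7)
The plan is to apply the time-uniform sub-Gaussian concentration bound just established in Lemma~\ref{lem:subG_concentration} to each arm and combine with a union bound over the $K$ arms.

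First, I would fix an arm $i$ and let $\hat{\mu}^{(i)}_n$ denote the cumulative empirical mean after $n$ pulls of arm $i$. The samples drawn from $\nu_i$ are i.i.d.\ $\sigma^2$-sub-Gaussian with mean $\mu_i$, so Lemma~\ref{lem:subG_concentration}, applied to both tails with $N = 2^r l_{1,r}$ and $\varepsilon = \varepsilon_r$, gives
\begin{align*}
\mathbb{P}\!\left(\exists n \ge 2^r l_{1,r},\ |\hat{\mu}^{(i)}_n - \mu_i| > \varepsilon_r\right)
\le 2 \exp\!\left(-\frac{2^r l_{1,r} \varepsilon_r^2}{2\sigma^2}\right).
\end{align*}
A union bound over the $K$ arms then yields: with probability at least $1 - 2K\exp(-2^r l_{1,r}\varepsilon_r^2/(2\sigma^2))$, for every arm $i$ and every $n \ge 2^r l_{1,r}$, the running empirical mean of arm $i$ is within $\varepsilon_r$ of $\mu_i$.

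Now, by construction of phase $r$, the uniform exploration step guarantees $N_i \ge 2^r l_{1,r}$ for every arm before $\tilde{\bm\mu}^r$ is computed, and $\hat{\bm\mu}^r$ is computed on top of at least these samples. Hence both $\tilde{\bm\mu}^r$ and $\hat{\bm\mu}^r$ are realizations of $\hat{\mu}^{(i)}_n$ at times $n \ge 2^r l_{1,r}$ for each arm $i$. On the concentration event above, it follows that $\|\tilde{\bm\mu}^r - \bm\mu\|_\infty \le \varepsilon_r$ and $\|\hat{\bm\mu}^r - \bm\mu\|_\infty \le \varepsilon_r$. The first inequality directly gives $\bm\mu \in \hat{B}_r = \mathcal{B}_\infty(\tilde{\bm\mu}^r, \varepsilon_r)$, and combined with the second one it places $\hat{\bm\mu}^r$ in the same confidence region (with possibly a factor of two in the radius that can be absorbed by taking $\varepsilon_r$ to correspond to a half-ball, or equivalently by invoking the concentration with $\varepsilon_r/2$). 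This is exactly the event $\mathcal{E}_r$.

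The key ingredient — and the only mildly delicate step — is the time-uniformity of Lemma~\ref{lem:subG_concentration}: since $\tilde{\bm\mu}^r$ and $\hat{\bm\mu}^r$ correspond to different (possibly random) sample counts per arm, a pointwise Hoeffding inequality would require an additional union bound over times, whereas the maximal inequality packaged in Lemma~\ref{lem:subG_concentration} handles all $n \ge 2^r l_{1,r}$ simultaneously at no extra cost. Beyond that, the argument is just concentration plus one $K$-fold union bound, which matches the factor $2K$ appearing in the stated bound.
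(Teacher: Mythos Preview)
Your proposal is correct and takes essentially the same approach as the paper: apply the time-uniform sub-Gaussian bound of Lemma~\ref{lem:subG_concentration} to each arm (once per tail), union-bound over the $K$ arms, and use that both $\tilde{\bm\mu}^r$ and $\hat{\bm\mu}^r$ are empirical means built from at least $2^r l_{1,r}$ samples per arm. One minor remark: your parenthetical about a possible factor of two in the radius is unnecessary, since the paper's proof (and effectively its working definition of $\mathcal E_r$) takes the event to be $\|\bm\mu-\tilde{\bm\mu}^r\|_\infty\le\varepsilon_r$ and $\|\bm\mu-\hat{\bm\mu}^r\|_\infty\le\varepsilon_r$, which follows directly from the concentration event you establish without any extra slack.
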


\begin{proof}
$\mathcal E_r$ is the event that $\Vert \bm\mu - \tilde{\bm\mu}^r \Vert_\infty \le \varepsilon_r$ and $\Vert \bm\mu - \hat{\bm\mu}^r \Vert_\infty \le \varepsilon_r$.
We use an union bound over the arms to bound the probability of the complement $\mathcal{E}_r^c$.
For each $i \in [K]$, $\tilde{\mu}^r_i$ and $\hat{\mu}^r_i$ are empirical means of at least $2^r l_{1,r}$ samples. We can thus apply Lemma~\ref{lem:subG_concentration} (twice, once for deviations from above and once for deviations from below).
\end{proof}

\begin{lemma}\label{lem:proba_Er}
	Let $p_r \in (0,1]$.
	For the choice $\varepsilon_r = \sqrt{\frac{2\sigma^2}{2^r l_1}\ln\frac{2K}{p_r}}$, the probability of the event $\mathcal E_r$ is $\mathbb{P}(\mathcal E_r) \ge 1 - p_r$.
\end{lemma}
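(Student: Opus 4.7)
The plan is a direct substitution calculation building on Lemma~\ref{lem:probaE_bis}. Since that lemma already provides the bound $\mathbb{P}(\mathcal E_r) \ge 1 - 2K\exp(-2^r l_{1,r}\varepsilon_r^2/(2\sigma^2))$, all that remains is to verify that the choice of $\varepsilon_r$ in the statement makes the deviation term equal to $p_r/(2K)$.

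Concretely, I would compute
\begin{align*}
\frac{2^r l_{1,r}\varepsilon_r^2}{2\sigma^2}
= \frac{2^r l_{1,r}}{2\sigma^2}\cdot\frac{2\sigma^2}{2^r l_{1,r}}\ln\frac{2K}{p_r}
= \ln\frac{2K}{p_r}\: ,
\end{align*}
so that $2K\exp(-2^r l_{1,r}\varepsilon_r^2/(2\sigma^2)) = 2K\cdot\frac{p_r}{2K}=p_r$. Plugging into Lemma~\ref{lem:probaE_bis} yields $\mathbb{P}(\mathcal E_r)\ge 1-p_r$, which is exactly the claim.

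There is essentially no obstacle here: the choice of $\varepsilon_r$ is precisely the one that inverts the sub-Gaussian concentration bound from Lemma~\ref{lem:subG_concentration} (applied through Lemma~\ref{lem:probaE_bis} with a union bound over the $2K$ deviations—above and below for each of the $K$ arms) to give the target failure probability $p_r$. The only mild care needed is to make sure the indexing is consistent (the statement writes $l_1$ but the algorithm uses $l_{1,r}$, and the factor of $2K$ inside the logarithm matches exactly the $2K$ prefactor coming from the union bound), after which the proof fits in two lines.
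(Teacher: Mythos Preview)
Your proposal is correct and matches the paper's approach exactly: the paper states Lemma~\ref{lem:proba_Er} immediately after Lemma~\ref{lem:probaE_bis} without a separate proof, since it is precisely the one-line substitution you describe. Your observation about the notational slip ($l_1$ versus $l_{1,r}$) is also apt.
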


\subsection{Proof of Theorem~\ref{th:alggen}}\label{app:ub_proof}

If $\overline{T}^\star(\hat{B}_r) > T_r$ then the algorithm does not enter the second batch of the phase by definition of the algorithm.

If $\overline{T}^\star(\hat{B}_r) \le T_r$ then by the choice of $\gamma_r$ and Lemma~\ref{lem:sufficientsampling}, under $\mathcal E_r$ the stopping condition is triggered.

We now prove the complexity upper bounds. Let $\mathcal C_r$ be the event that the algorithm attains phase $r$ and does not stop at that phase.
We proved that $\{\overline{T}^\star(\hat{B}_r) \le T_r\} \cap \mathcal E_r \subseteq \mathcal C_r^c$ for all $r$. That is, $\mathcal C_r \subseteq \mathcal E_r^c \cup \{\overline{T}^\star(\hat{B}_r) > T_r\}$.

Recall that $R^* = \min \{r \mid \forall r' \ge r, \ \mathcal E_{r'} \implies \overline{T}^\star(\hat{B}_{r'}) \le T_{r'}\}$.
\begin{align*}
R_\delta
&= \sum_{r=1}^{+\infty} \mathbb{I}(\mathcal C_{r-1}) + \mathbb{I}(\mathcal C_{r-1} \wedge \{\overline{T}^\star(\hat{B}_r) \le T_r\}) \: .
\\
&\le R^* + 2 \sum_{r=R^*+1}^{+\infty} \mathbb{I}(\mathcal C_{r-1}) + \sum_{r=1}^{R^*} \mathbb{I}(\mathcal C_{r-1} \wedge \{\overline{T}^\star(\hat{B}_r) \le T_r\})
\: .
\end{align*}
By definition of $R^*$, for $r \ge R^*$ we have $ \{\overline{T}^\star(\hat{B}_r) > T_r\} \subseteq \mathcal E_r^c$.
Using that property and the inclusion we proved on $\mathcal C_r$ we have, for $r > R^*$,
\begin{align*}
\mathcal C_{r-1}
\subseteq \mathcal E_{r-1}^c \cup \{\overline{T}^\star(\hat{B}_{r-1}) > T_{r-1}\}
\subseteq \mathcal E_{r-1}^c
\: .
\end{align*}
Therefore,
\begin{align*}
R_\delta
&\le R^* + 2 \sum_{r=R^*+1}^{+\infty} \mathbb{I}(\mathcal E_{r-1}^c) + \sum_{r=1}^{R^*} \mathbb{I}(\mathcal C_{r-1} \wedge \{\overline{T}^\star(\hat{B}_r) \le T_r\})
\: .
\end{align*}
When $\mathcal E_r$ happens $T^\star(\bm\mu) \le \overline{T}^\star(\hat{B}_r)$, hence $\{\overline{T}^\star(\hat{B}_r) \le T_r\} \subseteq \mathcal E_r^c \cup \{T^\star(\bm\mu) \le T_r\}$.
\begin{align*}
R_\delta
&\le R^* + 2 \sum_{r=R^*+1}^{+\infty} \mathbb{I}(\mathcal E_{r-1}^c) + \sum_{r=1}^{R^*} \mathbb{I}(\mathcal E_{r}^c)
	+ \sum_{r=1}^{R^*} \mathbb{I}(\{T^\star(\bm\mu) \le T_r\})
\\
&\le R^* + 1 + 2 \sum_{r=1}^{+\infty} \mathbb{I}(\mathcal E_{r}^c) + \sum_{r=1}^{R^*} \mathbb{I}(\{T^\star(\bm\mu) \le T_r\})
\: .
\end{align*}
Finally, $ \sum_{r=1}^{R^*} \mathbb{I}(\{T^\star(\bm\mu) \le T_r\}) = \max\{0, R^* - \lceil \log_2 \frac{T^\star(\bm\mu)}{T_0} \rceil \}$, and the definition of $R^*$ implies $R^* \ge \lceil \log_2 \frac{T^\star(\bm\mu)}{T_0} \rceil$.

We now bound the sample complexity $\tau_\delta$. Since $\bar{t}_r$ is an upper bound on the sample complexity up to phase $r$,
\begin{align*}
\tau_\delta
&\le \sum_{r=1}^{+\infty} \bar{t}_r \mathbb{I}\{C_{r-1}\}
\\
&\le \bar{t}_{R^*} + \sum_{r=R^* + 1}^{+\infty} \bar{t}_r \mathbb{I}\{C_{r-1}\}
\\
&\le \bar{t}_{R^*} + \sum_{r=R^* + 1}^{+\infty} \bar{t}_r \mathbb{I}\{E_{r-1}^c\}
\: .
\end{align*}

\subsection{Proof of the batch and complexity upper bounds}

We finish the proof of Theorem~\ref{thm:compexity_upper_bounds} from where we stopped in the main text. We have
\begin{align*}
\mathbb{E}\left[R_\delta\right]
&\le 2r^* - \lceil \log_2 \frac{T^\star(\bm\mu)}{T_0} \rceil + 1 + 2\sum_{r = 1}^{+\infty} p_r
\: , \\
\mathbb{E}\left[\tau_\delta\right]
&\le \bar{t}_{r^*} + \sum_{r = 1}^{+\infty} p_{r} \bar{t}_{r+1}
\: .
\end{align*}
In those expressions, $r^* = \max\{r_0, r_1\}$ with $r_0 = \min\{r \mid 2 \varepsilon_r \le b(\bm\mu)\}$, $r_1 =  \min \{r \mid T_r \ge e T^\star(\bm\mu)\}$, and
$\bar{t}_r = (K l_{1,r}/T_0 + 2 \gamma_r) T_r$ with $l_{1,r}/T_0 = 32 \ln(2 \sqrt{2K} T_r)$.

The choice of $p_r$ is a trade-off between the sums and $r_0$. We choose $p_r = T_{r+1}^2$.

\paragraph{Bounding the sums}
The sum in the batch complexity is bounded by $T_0^{-2}/3$.
The sum that appears in the sample complexity is
\begin{align*}
\sum_{r = \max\{r_0, r_1\}+1}^{+\infty} p_{r-1} \bar{t}_r
= \sum_{r = \max\{r_0, r_1\}+1}^{+\infty} \frac{\bar{t}_r}{T_r^2}
\: .
\end{align*}
We will need the values of a few sums.
\begin{align*}
\sum_{r=1}^{+\infty} \frac{1}{T_r}
&= \sum_{r=1}^{+\infty} \frac{1}{2^r T_0}
= \frac{1}{T_0}
\: , \\
\sum_{r=1}^{+\infty} \frac{\ln T_r}{T_r}
&= \frac{1}{T_0} \sum_{r=1}^{+\infty} \frac{r \ln2 + \ln T_0}{2^r}
= \frac{\ln (4T_0)}{T_0}
\: .
\end{align*}

Let $c_{K,\delta} = 4 \ln \frac{1}{\delta} + 4K (11 + \ln K)$.
By Lemma~\ref{lem:gamma_ub_bis}, $\gamma_r \le 8K \ln(T_r) + c_{K,\delta}$ .
An upper bound on the sample complexity until the end of phase $r$ is then
\begin{align*}
\bar{t}_r
&= (K l_{1,r}/T_0 + 2\gamma_r) T_r
\\
&= (32 K \ln(2\sqrt{2K}T_r) + 2 \gamma_r) T_r
\\
&\le (48 K \ln(T_r) + 32 K \ln(2\sqrt{2K}) + c_{K,\delta})T_r
\: .
\end{align*}
The sum that appears in the sample complexity is at most
\begin{align*}
\sum_{r = 1}^{+\infty} \frac{\bar{t}_r}{T_r^2}	
&\le \frac{48 K \ln(4T_0) + 32 K \ln(2\sqrt{2K}) + c_{K,\delta}}{T_0}
\: .
\end{align*}

\paragraph{Bound on $r^*$ and $\bar{t}_{r^*}$}

\begin{align*}
\bar{t}_{r^*}
\le (48 K \ln (\max\{T_{r_0}, T_{r_1}\}) + 32 K \ln(2\sqrt{2K}) + c_{K,\delta}) \max\{T_{r_0}, T_{r_1}\}
\: .
\end{align*}

Recall that $r_0 = \min\{r \mid 2 \varepsilon_r \le b(\bm\mu)\}$, $r_1 =  \min \{r \mid T_r \ge e T^\star(\bm\mu)\}$.
If we get an upper bound $n$ on $T_i$, we then have $r_i \le \log_2 \frac{n}{T_0}$.
We get a bound on $T_{r_1}$ from its definition: $T_{r_1-1} \le e T^*(\bm\mu)$ hence $T_{r_1} \le 2 e T^\star(\bm\mu)$.

Since $\varepsilon_{r_0 - 1} \ge b(\bm\mu)/2$, we get an inequality on $T_{r_0 - 1}$.
\begin{align*}
\sqrt{\frac{2\sigma^2}{2^{r_0 - 1}l_{1, r_0-1}} \ln \left( 2K T_{r_0}^2 \right)} \ge \frac{b(\bm\mu)}{2}
\: .
\end{align*}
With the value of $l_{1,r}$ and using $2 T_{r_0 - 1} = T_{r_0}$, this becomes
\begin{align*}
T_{r_0} \le \frac{\sigma^2}{b(\bm\mu)^2}
\: .
\end{align*}

Let $T^\star_b(\bm\mu) = \max\{\frac{\sigma^2}{b(\bm\mu)^2}, 2 e T^\star(\bm\mu)\}$.
We have proved $\max\{T_{r_0}, T_{r_1}\} \le T^\star_b(\bm\mu)$, hence
\begin{align*}
\bar{t}_{r^*}
\le (48 K \ln T^\star_b(\bm\mu) + 32 K \ln(2\sqrt{2K}) + c_{K,\delta}) T^\star_b(\bm\mu)
\end{align*}
and $r^* \le \log_2 \frac{T^\star_b(\bm\mu)}{T_0}$.

\paragraph{Putting things together}

\begin{align*}
\mathbb{E}\left[R_\delta\right]
&\le \log_2 \frac{T^\star_b(\bm\mu)}{T_0} + \log_2 \frac{T^\star_b(\bm\mu)}{T^\star(\bm\mu)} + 1 + T_0^{-2}
\: , \\
\mathbb{E}\left[\tau_\delta\right]
&\le (48 K \ln T^\star_b(\bm\mu) + 32 K \ln(2\sqrt{2K}) + c_{K,\delta}) T^\star_b(\bm\mu)
\\ & \quad + \frac{48 K \ln(4T_0) + 32 K \ln(2\sqrt{2K}) + c_{K,\delta}}{T_0}
\: .
\end{align*}

Let's simplify the sample complexity.
\begin{align*}
32 K \ln(2\sqrt{2K}) + c_{K,\delta}
&= 32 K \ln(2\sqrt{2K}) + 4 \ln \frac{1}{\delta} + 4K (11 + \ln K)
\\
&= 4 \ln \frac{1}{\delta} + 4K (5\ln K + 11 + 4 \ln(8))
\\
&\le 4 \ln \frac{1}{\delta} + 20K (\ln K + 4)
\: .
\end{align*}

\begin{align*}
\mathbb{E}\left[\tau_\delta\right]
&\le (48 K \ln T^\star_b(\bm\mu) + 4 \ln \frac{1}{\delta} + 20K (\ln K + 4)) T^\star_b(\bm\mu)
\\ & \quad + (48 K \ln(4T_0) + 4 \ln \frac{1}{\delta} + 20K (\ln K + 4)) T_0^{-1}
\\
&= 4 \ln \left(\frac{1}{\delta}\right) (T^\star_b(\bm\mu) + T_0^{-1}) + 20K (\ln K + 4) (T^\star_b(\bm\mu) + T_0^{-1})
	+ 48 K (T^\star_b(\bm\mu) \ln T^\star_b(\bm\mu) + T_0^{-1} \ln(4T_0))
\: .
\end{align*}

\subsection{Implication between the two assumptions}

We prove Lemma~\ref{lem:asm2_implies_asm1}.

\begin{lemma}\label{lem:sub_sqrt_inv_TStar} 
Let $\bm\nu$ and $\bm\nu'$ be two instances and let $\omega_{\bm \mu} = \arg\max_\omega \inf_{\lambda \in Alt_{\bm\mu}}\sum_{i=1}^K \omega_i (\mu_i - \lambda_i)^2$. Then
\begin{align*}
\sqrt{T^\star(\bm\mu)^{-1}} - \sqrt{T^\star(\bm\mu')^{-1}}
\le \frac{1}{\sqrt{2\sigma^2}} \Vert \mu - \mu' \Vert_\infty
\: .
\end{align*}
\end{lemma}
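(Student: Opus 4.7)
The plan is to exploit the fact that $\omega_{\bm\mu}$ is the argmax for $\bm\mu$ and therefore only a (possibly suboptimal) candidate for $\bm\mu'$, and to recognize that, for fixed $\omega$, the quantity $\|x\|_\omega \coloneqq \sqrt{\sum_i \omega_i x_i^2/(2\sigma^2)}$ is a seminorm. Under this seminorm, $\sqrt{T^\star(\bm\mu)^{-1}} = \mathrm{dist}_{\omega_{\bm\mu}}(\bm\mu, Alt_{\bm\mu})$, and by suboptimality of $\omega_{\bm\mu}$ at $\bm\mu'$,
\[
\sqrt{T^\star(\bm\mu')^{-1}} \;\ge\; \mathrm{dist}_{\omega_{\bm\mu}}(\bm\mu', Alt_{\bm\mu'}).
\]

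I would then split on whether $i^\star(\bm\mu) = i^\star(\bm\mu')$. In the equal case, $Alt_{\bm\mu} = Alt_{\bm\mu'}$, so the difference of square roots is at most $\mathrm{dist}_{\omega_{\bm\mu}}(\bm\mu, Alt_{\bm\mu}) - \mathrm{dist}_{\omega_{\bm\mu}}(\bm\mu', Alt_{\bm\mu})$, and by the reverse triangle inequality for the distance to a set in a seminormed space this is bounded by $\|\bm\mu - \bm\mu'\|_{\omega_{\bm\mu}}$. In the unequal case, $\bm\mu' \in Alt_{\bm\mu}$ by definition of $Alt$, so $\sqrt{T^\star(\bm\mu)^{-1}} \le \|\bm\mu - \bm\mu'\|_{\omega_{\bm\mu}}$, and since $\sqrt{T^\star(\bm\mu')^{-1}} \ge 0$ the same bound on the difference holds.

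To finish, I would bound $\|\bm\mu - \bm\mu'\|_{\omega_{\bm\mu}} \le \|\bm\mu - \bm\mu'\|_\infty \sqrt{\sum_i \omega_{\bm\mu,i}/(2\sigma^2)} = \|\bm\mu - \bm\mu'\|_\infty/\sqrt{2\sigma^2}$, using $\omega_{\bm\mu} \in \Sigma_K$.

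The only delicate step is the case split. In the unequal-answer case one must observe that $\bm\mu'$ itself lies in $Alt_{\bm\mu}$, which only works because $i^\star$ is defined pointwise; in the equal-answer case one must be careful to apply the reverse triangle inequality with the same $\omega$ and the same set on both sides. Everything else is just the definition of $T^\star$ as a sup-inf and the fact that a sup is lower bounded by any fixed argument.
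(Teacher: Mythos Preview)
Your proof is correct and is essentially the same as the paper's: both interpret $T^\star(\bm\mu)^{-1}$ as a squared seminorm distance to $Alt_{\bm\mu}$ with weights $\omega_{\bm\mu}$, use the (reverse) triangle inequality together with the suboptimality of $\omega_{\bm\mu}$ at $\bm\mu'$, handle the same case split ($Alt_{\bm\mu}=Alt_{\bm\mu'}$ versus $\bm\mu'\in Alt_{\bm\mu}$), and finish with $\Vert\cdot\Vert_{\omega_{\bm\mu}}\le\Vert\cdot\Vert_\infty$ on the simplex. The only cosmetic difference is that the paper leaves the $2\sigma^2$ factor outside the seminorm and divides at the end.
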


\begin{proof}
For $\omega \in \Sigma_K$ and $\bm x \in \mathbb{R}^K$, let $\Vert \bm x \Vert_\omega = \sqrt{\sum_{i=1}^K \omega_i x_i^2}$.
It satisfies the triangle inequality and $\Vert \bm x \Vert_\omega \le \Vert \bm x \Vert_\infty$.
For any $\bm\lambda$ and $\omega$,
\begin{align*}
\Vert \bm\mu - \bm\lambda\Vert_\omega \le \Vert \bm\mu' - \bm\lambda\Vert_\omega + \Vert \bm\mu - \bm\mu'\Vert_\omega
\: .
\end{align*}
We can take an infimum on both sides over lambda in $Alt_{\bm\mu}$ and then apply the result to $\omega_{\bm\mu}$ to get
\begin{align*}
\sqrt{2\sigma^2 T^\star(\bm\mu)^{-1}} \le \inf_{\lambda \in Alt_{\bm\mu}}\Vert \bm\mu' - \bm\lambda\Vert_{\omega_{\bm\mu}} + \Vert \bm\mu - \bm\mu'\Vert_{\omega_{\bm\mu}}
\: .
\end{align*}
Either $Alt_{\bm\mu} = Alt_{\bm\mu'}$ and we can replace by that on the right hand side, or $\bm\mu' \in Alt_{\bm\mu}$. In that second case $\inf_{\lambda \in Alt_{\bm\mu}}\Vert \bm\mu' - \bm\lambda\Vert_{\omega_{\bm\mu}} = 0 \le \inf_{\lambda \in Alt_{\bm\mu'}}\Vert \bm\mu' - \bm\lambda\Vert_{\omega_{\bm\mu}}$. We thus have
\begin{align*}
\sqrt{2\sigma^2 T^\star(\bm\mu)^{-1}} \le \inf_{\lambda \in Alt_{\bm\mu'}}\Vert \bm\mu' - \bm\lambda\Vert_{\omega_{\bm\mu}} + \Vert \bm\mu - \bm\mu'\Vert_{\omega_{\bm\mu}}
\: .
\end{align*}
We maximize over $\omega$ to get $\inf_{\lambda \in Alt_{\bm\mu'}}\Vert \bm\mu' - \bm\lambda\Vert_{\omega_{\bm\mu}} \le \sqrt{2\sigma^2 T^\star(\bm\mu')^{-1}}$, hence
\begin{align*}
\sqrt{2\sigma^2 T^\star(\bm\mu)^{-1}}
&\le \sqrt{2\sigma^2 T^\star(\bm\mu')^{-1}} + \Vert \bm\mu - \bm\mu'\Vert_{\omega_{\bm\mu}}
\\
&\le \sqrt{2\sigma^2 T^\star(\bm\mu')^{-1}} + \Vert \bm\mu - \bm\mu'\Vert_{\infty}
\: .
\end{align*}
After dividing by $\sqrt{2\sigma^2}$, this is the inequality of the lemma.
\end{proof}

\begin{corollary}\label{cor:sub_ln_TStar_le}
For all $\bm\mu$ and $\bm\mu'$,
\begin{align*}
\ln T^\star(\bm\mu') - \ln T^\star(\bm\mu)
\le \sqrt{\frac{2}{\sigma^2}T^\star(\bm\mu')} \ \Vert \bm\mu - \bm\mu' \Vert_\infty
\: .
\end{align*}
\end{corollary}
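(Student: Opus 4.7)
My plan is to derive the corollary directly from Lemma~\ref{lem:sub_sqrt_inv_TStar}, which bounds the difference $\sqrt{T^\star(\bm\mu)^{-1}} - \sqrt{T^\star(\bm\mu')^{-1}}$ by $\Vert \bm\mu - \bm\mu'\Vert_\infty / \sqrt{2\sigma^2}$. The bridge from that inverse-square-root bound to a logarithmic bound is the elementary concavity inequality $\ln(1+x) \le x$ for $x \ge 0$.

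First I would dispatch the easy case: if $T^\star(\bm\mu') \le T^\star(\bm\mu)$, the left-hand side of the corollary is non-positive while the right-hand side is non-negative, so the inequality holds trivially. Hence I may assume $T^\star(\bm\mu') > T^\star(\bm\mu)$, which is equivalent to $\sqrt{T^\star(\bm\mu)^{-1}} \ge \sqrt{T^\star(\bm\mu')^{-1}}$. This is the case where Lemma~\ref{lem:sub_sqrt_inv_TStar} is stated with the right sign for our purposes.

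In the remaining case, I would set $a = \sqrt{T^\star(\bm\mu)^{-1}}$ and $b = \sqrt{T^\star(\bm\mu')^{-1}}$, so that $a \ge b > 0$ and, by Lemma~\ref{lem:sub_sqrt_inv_TStar}, $a - b \le \Vert \bm\mu - \bm\mu'\Vert_\infty / \sqrt{2\sigma^2}$. Rewriting the target left-hand side as $\ln T^\star(\bm\mu') - \ln T^\star(\bm\mu) = 2\ln a - 2\ln b = 2\ln(1 + (a-b)/b)$, I would apply $\ln(1+x) \le x$ to bound this by $2(a-b)/b$, and then substitute $1/b = \sqrt{T^\star(\bm\mu')}$ together with the bound on $a - b$ to obtain $\sqrt{2T^\star(\bm\mu')/\sigma^2}\,\Vert \bm\mu - \bm\mu'\Vert_\infty$. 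I do not anticipate any serious obstacle: this is a clean two-step manipulation, and the same trick — converting a bound on $\sqrt{T^{-1}}$ into one on $\ln T$ via $\ln(1+x) \le x$ — is the key step in the related Lemma~\ref{lem:asm2_implies_asm1} in the main text.
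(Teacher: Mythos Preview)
Your proof is correct and follows exactly the paper's approach: rewrite $\ln T^\star(\bm\mu') - \ln T^\star(\bm\mu) = 2\ln\bigl(1 + (a-b)/b\bigr)$ with $a=\sqrt{T^\star(\bm\mu)^{-1}}$ and $b=\sqrt{T^\star(\bm\mu')^{-1}}$, apply $\ln(1+x)\le x$, and then invoke Lemma~\ref{lem:sub_sqrt_inv_TStar}. The initial case split is harmless but unnecessary, since $\ln(1+x)\le x$ holds for all $x>-1$ and the bound of Lemma~\ref{lem:sub_sqrt_inv_TStar} on $a-b$ is one-sided and applies regardless of the sign of $a-b$.
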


\begin{proof}
\begin{align*}
\ln T^\star(\bm\mu') - \ln T^\star(\bm\mu)
&= 2 \ln \left( 1 + \frac{\sqrt{T^\star(\bm\mu)^{-1}} - \sqrt{T^\star(\bm\mu')^{-1}}}{\sqrt{T^\star(\bm\mu')^{-1}}} \right)
\\
&\le 2 \frac{\sqrt{T^\star(\bm\mu)^{-1}} - \sqrt{T^\star(\bm\mu')^{-1}}}{\sqrt{T^\star(\bm\mu')^{-1}}}
\\
&\le \sqrt{\frac{2}{\sigma^2}T^\star(\bm\mu')} \ \Vert \bm\mu - \bm\mu' \Vert_\infty
\: .
\end{align*}

\end{proof}

\begin{corollary}\label{cor:abs_sub_ln_TStar_le}
For all $\bm\mu$ and $\bm\mu'$ with $\Vert \bm\mu - \bm\mu' \Vert_\infty \le \sqrt{\sigma^2 / (2 T^\star(\bm\mu))}$,
\begin{align*}
\left\vert \ln T^\star(\bm\mu') - \ln T^\star(\bm\mu) \right\vert
\le \sqrt{\frac{8}{\sigma^2}T^\star(\bm\mu)} \ \Vert \bm\mu - \bm\mu' \Vert_\infty
\: .
\end{align*}
\end{corollary}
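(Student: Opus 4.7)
The statement to prove promotes the one-sided bound of Corollary~\ref{cor:sub_ln_TStar_le} into a two-sided bound, with the right-hand side expressed in terms of $T^\star(\bm\mu)$ rather than $T^\star(\bm\mu')$. My plan is to handle the two directions of the absolute value separately, using Corollary~\ref{cor:sub_ln_TStar_le} (with the two arguments swapped when needed) together with Lemma~\ref{lem:sub_sqrt_inv_TStar} to control the comparison between $T^\star(\bm\mu)$ and $T^\star(\bm\mu')$.

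The easy direction is $\ln T^\star(\bm\mu) - \ln T^\star(\bm\mu')$. Applying Corollary~\ref{cor:sub_ln_TStar_le} after swapping the roles of $\bm\mu$ and $\bm\mu'$ immediately yields
\[
\ln T^\star(\bm\mu) - \ln T^\star(\bm\mu')
\le \sqrt{\tfrac{2}{\sigma^2} T^\star(\bm\mu)}\,\Vert \bm\mu - \bm\mu' \Vert_\infty
\le \sqrt{\tfrac{8}{\sigma^2} T^\star(\bm\mu)}\,\Vert \bm\mu - \bm\mu' \Vert_\infty,
\]
so no work is needed there.

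The interesting direction is $\ln T^\star(\bm\mu') - \ln T^\star(\bm\mu)$. Corollary~\ref{cor:sub_ln_TStar_le} bounds it by $\sqrt{2 T^\star(\bm\mu')/\sigma^2}\,\Vert \bm\mu - \bm\mu' \Vert_\infty$, so I need to control $T^\star(\bm\mu')$ by $T^\star(\bm\mu)$. The key step is to invoke Lemma~\ref{lem:sub_sqrt_inv_TStar} with $\bm\mu$ and $\bm\mu'$ swapped, yielding
\[
\sqrt{T^\star(\bm\mu')^{-1}} \ge \sqrt{T^\star(\bm\mu)^{-1}} - \tfrac{1}{\sqrt{2\sigma^2}}\Vert \bm\mu - \bm\mu' \Vert_\infty.
\]
Under the hypothesis $\Vert \bm\mu - \bm\mu' \Vert_\infty \le \sqrt{\sigma^2/(2 T^\star(\bm\mu))}$, the subtracted term is at most $\tfrac{1}{2}\sqrt{T^\star(\bm\mu)^{-1}}$, so $\sqrt{T^\star(\bm\mu')^{-1}} \ge \tfrac{1}{2}\sqrt{T^\star(\bm\mu)^{-1}}$, which after squaring and inverting gives $T^\star(\bm\mu') \le 4 T^\star(\bm\mu)$.

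Plugging this into Corollary~\ref{cor:sub_ln_TStar_le} produces $\ln T^\star(\bm\mu') - \ln T^\star(\bm\mu) \le \sqrt{8 T^\star(\bm\mu)/\sigma^2}\,\Vert \bm\mu - \bm\mu' \Vert_\infty$, matching the target constant exactly, and combining both directions finishes the proof. There is no real obstacle here: the only subtlety is recognizing that the hypothesis $\Vert \bm\mu - \bm\mu' \Vert_\infty \le \sqrt{\sigma^2/(2T^\star(\bm\mu))}$ is exactly what is needed so that the lower bound on $\sqrt{T^\star(\bm\mu')^{-1}}$ coming from Lemma~\ref{lem:sub_sqrt_inv_TStar} stays positive and, more precisely, loses at most a factor of $2$, which translates into the factor $4$ inside the square root in the final constant.
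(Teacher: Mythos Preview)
Your proof is correct and follows essentially the same route as the paper's: handle the two sides of the absolute value separately via Corollary~\ref{cor:sub_ln_TStar_le}, and for the nontrivial direction show $T^\star(\bm\mu') \le 4 T^\star(\bm\mu)$ using Lemma~\ref{lem:sub_sqrt_inv_TStar} together with the hypothesis on $\Vert \bm\mu - \bm\mu'\Vert_\infty$. One small wording slip: the inequality $\sqrt{T^\star(\bm\mu')^{-1}} \ge \sqrt{T^\star(\bm\mu)^{-1}} - \tfrac{1}{\sqrt{2\sigma^2}}\Vert \bm\mu - \bm\mu'\Vert_\infty$ follows from Lemma~\ref{lem:sub_sqrt_inv_TStar} as stated (rearranged), not with the roles of $\bm\mu$ and $\bm\mu'$ swapped---but the inequality you wrote is the right one, so this does not affect the argument.
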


\begin{proof}
One of the two inequalities we need to prove is due to Corollary~\ref{cor:sub_ln_TStar_le}. For the other, by the same corollary,
\begin{align*}
\ln T^\star(\bm\mu') - \ln T^\star(\bm\mu)
\le \sqrt{\frac{2}{\sigma^2}T^\star(\bm\mu')} \ \Vert \bm\mu - \bm\mu' \Vert_\infty
\: .
\end{align*}
It remains to show $T^\star(\bm\mu') \le 4 T^\star(\bm\mu)$. By Lemma~\ref{lem:sub_sqrt_inv_TStar} and the the hypothesis on $\Vert \bm\mu - \bm\mu' \Vert_\infty$,
\begin{align*}
\sqrt{T^\star(\bm\mu)^{-1}} - \sqrt{T^\star(\bm\mu')^{-1}}
\le \frac{1}{\sqrt{2\sigma^2}} \Vert \bm\mu - \bm\mu' \Vert_\infty
\le \frac{1}{2}\sqrt{T^\star(\bm\mu)^{-1}} \: .
\end{align*}
Reordering proves the inequality.
\end{proof}

\subsection{Proofs for Top-K and thresholding bandits}
\label{app:topk_threshold}

This section is devoted to the proof of Lemma~\ref{lem:constrBbai}. We start with a preliminary result allowing us to compute $\overline{w}^\star(B)$ once we have found a suitable instance in $B$.

\begin{lemma}\label{lem:wbwst}
	If all instances in $B$ share the same correct answer $i^\star$ and if there exists some mean vector $\bm b\in B$ such that\begin{equation}\label{eq:bworst} \inf_{\bm\nu\in B}\inf_{\bm{\lambda}\in Alt_{\bm b}} \sum_i w_i(\bm{b}) \frac{(\mu_i-\lambda_i)^2}{2\sigma^2}\geq \inf_{\bm{\lambda}\in Alt_{\bm b}} \sum_i w_i(\bm{b}) \frac{(b_i-\lambda_i)^2}{2\sigma^2}\end{equation} where $w(\bm\mu)=\argmax_{w\in \Sigma_K} \inf_{\bm\lambda \in Alt_{\bm\mu}} \sum_i w_i \frac{(\mu_i-\lambda_i)^2}{2\sigma^2}$, then $\overline{T}^\star(B) = \max_{\bm \mu\in B}T^\star(\bm \mu)$.
\end{lemma}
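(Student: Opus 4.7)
The plan is straightforward: combine the already-established lower bound $\overline{T}^\star(B) \geq \max_{\bm\mu \in B} T^\star(\bm\mu)$ (noted right after the definition of $\overline{T}^\star$) with a matching upper bound obtained by plugging a specific allocation into the sup defining $(\overline{T}^\star(B))^{-1}$. The natural candidate is $w(\bm b)$, the oracle allocation at the hypothesized worst-case instance $\bm b$.

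First I would record the useful consequence of the shared-answer hypothesis: since $Alt_{\bm\nu}$ depends on $\bm\nu$ only through $i^\star(\bm\nu)$, and all instances of $B$ share the same $i^\star$, we have $Alt_{\bm\nu} = Alt_{\bm b}$ for every $\bm\nu \in B$. This is what lets us bring the common alternative set $Alt_{\bm b}$ used on both sides of the hypothesis~\eqref{eq:bworst} into play.

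Next, restricting the sup over $w \in \Sigma_K$ in the definition of $(\overline{T}^\star(B))^{-1}$ to the single point $w = w(\bm b)$, and using the previous remark, gives
\begin{align*}
(\overline{T}^\star(B))^{-1}
&\ge \inf_{\bm\nu \in B} \inf_{\bm\lambda \in Alt_{\bm\nu}} \sum_i w_i(\bm b)\, \frac{(\nu_i - \lambda_i)^2}{2\sigma^2}
= \inf_{\bm\nu \in B} \inf_{\bm\lambda \in Alt_{\bm b}} \sum_i w_i(\bm b)\, \frac{(\nu_i - \lambda_i)^2}{2\sigma^2} .
\end{align*}
Applying hypothesis~\eqref{eq:bworst} upper-bounds the outer inf by its value at $\bm b$, which by definition of $w(\bm b)$ equals $(T^\star(\bm b))^{-1}$. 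Hence $\overline{T}^\star(B) \le T^\star(\bm b) \le \max_{\bm\mu \in B} T^\star(\bm\mu)$, completing the sandwich.

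There is no real obstacle here; the only care needed is to identify the alternative sets across the instances of $B$ (the reason the shared-answer assumption is stated explicitly) and to recognize that \eqref{eq:bworst} is exactly the statement that $\bm b$ realizes the worst-case over $B$ once the oracle allocation of $\bm b$ has been fixed.
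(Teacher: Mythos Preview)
Your proof is correct and follows essentially the same approach as the paper: plug $w(\bm b)$ into the supremum defining $(\overline{T}^\star(B))^{-1}$, use the shared-answer assumption to replace $Alt_{\bm\nu}$ by $Alt_{\bm b}$, apply hypothesis~\eqref{eq:bworst}, and combine with the generic inequality $\overline{T}^\star(B) \ge \max_{\bm\mu\in B} T^\star(\bm\mu)$. The paper additionally records, via the same chain of inequalities, that $w(\bm b)=\overline{w}^\star(B)$ (a fact used later for computing $\overline{w}^\star$), whereas you go straight to the bound on $\overline{T}^\star(B)$; but for the lemma as stated your shortcut is perfectly adequate. One minor wording slip: hypothesis~\eqref{eq:bworst} \emph{lower}-bounds the outer infimum by its value at $\bm b$ (not upper-bounds), which is the direction you actually use to get $(\overline{T}^\star(B))^{-1}\ge (T^\star(\bm b))^{-1}$.
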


\begin{proof}
	For some $w\in \Sigma_K$, writing $f(w,\bm\mu')=\inf_{\bm\lambda\in Alt_{\bm b}} \sum_i w_i \frac{(\mu'_i-\lambda_i)^2}{2\sigma^2}$ for clarity,
	\begin{align*}
		\inf_{\bm \nu\in \hat{B}_r} f( w,\bm\mu) &\leq f( w,\bm b) &\text{because }\bm b\in \hat{B}_r\\
		&\leq f(w_{\bm b},\bm b)& \text{from the definition of }w_{\bm b}\\
		&\leq \inf_{\bm\mu'\in\hat{B}_r} f(w_{\bm b},\bm\mu')&\text{from the hypothesis}
	\end{align*} so that $w^\star(\bm b) = \argmax_{w\in \Sigma_K} \inf_{\bm\nu\in\hat{B}_r} f(\bm w,\nu)=\overline w^\star(B)$.

\begin{align*}
	\overline{T}^\star(B) &= \left( \inf_{\bm\nu'\in B} \inf_{\bm\lambda \in Alt_{\bm\nu'}} \sum_i \overline{w}^\star_i(B) \frac{(\mu'_i-\lambda_i)^2}{2\sigma^2}\right)^{-1} &\\
	&=\left( \inf_{\bm\nu'\in B} \inf_{\bm\lambda \in Alt_{\bm\nu'}} \sum_i w^\star_i(\bm b) \frac{(\mu'_i-\lambda_i)^2}{2\sigma^2}\right)^{-1}&\\
	&=\left(  \inf_{\bm\lambda \in Alt_{\bm b}} \sum_i w^\star_i(\bm b) \frac{(b_i-\lambda_i)^2}{2\sigma^2}\right)^{-1}&\hspace{-5em}\text{by Equation~\eqref{eq:bworst}}\\
	&= T^\star(\bm b)&
\end{align*}
hence $\overline{T}(B) \leq \max_{\bm\nu\in B} T^\star(\bm\nu)$. By definition, we have the other inequality, and we conclude.
\end{proof}

\lemconstrBbai*

We prove the result separately for top-$k$ and TBP. In both cases, we give a certain mean vector $\bm b$, and then we show it satisfies the premise of Lemma~\ref{lem:wbwst}, then we use that result to show that $\overline{T}^\star(\mathcal{B}_\infty(\bm\mu,\varepsilon))=T^\star(\bm b)$.

\begin{proof}[Proof of Lemma~\ref{lem:constrBbai} for top-$k$]
	Assume without loss of generality that the arms are well ordered, $\mu_1\geq \mu_2\geq \cdots \geq \mu_K$.
	
	If $\mu_{k}-\mu_{k+1}\leq 2\varepsilon$, then there exists $\bm b\in \mathcal B_{\infty}(\bm\mu, \varepsilon)$ such that $b_k=b_{k+1}$. $\overline{T}^\star(\mathcal B_{\infty}(\bm\mu, \varepsilon))\geq \max_{\bm\nu'\in \mathcal B_{\infty}(\bm\mu, \varepsilon)} T^\star(\bm\nu')\geq T^\star(\bm b)=+\infty$, therefore \[\overline{T}^\star(\mathcal B_{\infty}(\bm\mu, \varepsilon))= \max_{\bm\nu'\in \mathcal B_{\infty}(\bm\mu, \varepsilon)} T^\star(\bm\nu')\: .\]
	
	When $\mu_k-\mu_{k+1}>2\varepsilon$, define \[\left\{ \begin{aligned} &b_i= \mu_i-\varepsilon \qquad\text{if }i\leq k \\ &b_i =\mu_i +\varepsilon \qquad\text{if }i\geq k+1\end{aligned}\right. \]
	and, for any $\bm\nu'$, \[w_{\bm\nu'} =\argmax_{w\in \Sigma_K} \inf_{\bm\lambda\in Alt_{\bm\nu'}} \sum_i w_i \frac{(\mu_i'-\lambda_i)^2}{2\sigma^2}.\]

	Let there be some $\bm{\nu}'\in \mathcal B_{\infty}(\bm\mu, \varepsilon)$. Then for $i\leq k$, $\mu'_{i} \geq \mu_i-\varepsilon=b_{i}$ and for $i\geq l+1$, $\mu'_i \leq \mu_i +\varepsilon = b_i$, and $Alt_{\bm\nu'}=Alt_{\bm b}$.
	
	We know from Lemma~\ref{lem:baiw} \begin{equation}\label{eq:argmin} \min_{\bm{\lambda}\in Alt_{\bm b}}\sum_i  w_i(\bm{b})\frac{(\mu'_i-\lambda_i)^2}{2\sigma^2} = w_a(\bm b)\frac{(\mu'_a-\mu'_{aj})^2}{2\sigma^2}+w_j(\bm b)\frac{(\mu'_j-\mu'_{aj})^2}{2\sigma^2}\end{equation} for some $a\leq k<k+1\leq j$, and $\mu'_{aj}= \frac{w_a(\bm{b})}{w_a(\bm{b})+w_j(\bm{b})}\mu'_a + \frac{w_j(\bm{b})}{w_a(\bm{b})+w_j(\bm{b})}\mu'_j$.

	\begin{itemize}
		\item If $\mu'_{aj}\in (b_a,\mu'_a)$, then \begin{align*}
			w_a(\bm{b}) (\mu'_a-\mu'_{aj})^2+w_j(\bm{b}) (\mu'_j-\mu'_{aj})^2 &\geq 0+w_j(\bm{b}) (\mu'_j-b_a)^2 \\
			&\geq w_a(\bm{b})(b_a-b_a)^2+w_j(\bm{b})(b_j-b_a)^2\\
		\end{align*}
		\item If $\mu'_{aj}\in (b_j,b_a)$, \begin{align*}
			w_a(\bm{b}) (\mu'_a-\mu'_{aj})^2+w_j(\bm{b}) (\mu'_j-\mu'_{aj})^2 &\geq w_a(\bm{b})(b_a-\mu'_{aj})^2 +w_j(\bm{b})(b_j-\mu'_{aj})^2
		\end{align*}
		\item If $\mu'_{aj} \in (\mu'_j,b_j)$, \begin{align*} w_a(\bm{b}) (\mu'_a-\mu'_{aj})^2+w_j(\bm{b}) (\mu'_j-\mu'_{aj})^2 &\geq w_a(\bm{b})(\mu'_a-b_j)^2+0 \\
			&\geq w_a(\bm{b}) (b_a-b_j)^2 +w_j(\bm{b})(b_j-b_j)^2\end{align*}
	\end{itemize}
	In all three cases, \begin{align*} w_a(\bm{b}) (\mu'_a-\mu'_{aj})^2+w_j(\bm{b}) (\mu'_j-\mu'_{aj})^2 &\geq \inf_{\lambda\in [b_j,b_a]} w_a(\bm{b}) (b_a-\lambda)^2+w_j(\bm{b})(b_j-\lambda)^2\\
		&\geq \inf_{\bm{\lambda}\in Alt_{\bm b}} \sum_i w_i(\bm{b}) (b_i-\lambda_i)^2
	\end{align*}
	and therefore, by Equation~\eqref{eq:argmin}, \[\forall \bm\nu' \in B_{\infty}(\bm\mu, \varepsilon),\; \inf_{\bm{\lambda}\in Alt_{\bm b}} \sum_i w_i(\bm{b}) \frac{(\nu_i-\lambda_i)^2}{2\sigma^2}\geq \inf_{\bm{\lambda}\in Alt_{\bm b}} \sum_i w_i(\bm{b}) \frac{(b_i-\lambda_i)^2}{2\sigma^2} \] and therefore \begin{equation} \inf_{\bm\nu\in\mathcal B_{\infty}(\bm\mu, \varepsilon)}\inf_{\bm{\lambda}\in Alt_{\bm b}} \sum_i w_i(\bm{b}) \frac{(\nu_i-\lambda_i)^2}{2\sigma^2}\geq \inf_{\bm{\lambda}\in Alt_{\bm b}} \sum_i w_i(\bm{b}) \frac{(b_i-\lambda_i)^2}{2\sigma^2}\end{equation}
	
We can thus apply Lemma~\ref{lem:wbwst}, and conclude

\[\overline{T}(\mathcal B_{\infty}(\bm\mu, \varepsilon)) = \max_{\bm\nu\in \mathcal B_{\infty}(\bm\mu, \varepsilon)} T^\star(\bm\nu) \: .\]
\end{proof}

\begin{proof}[Proof of Lemma~\ref{lem:constrBbai} for thresholding bandits]
	If for some $i$, $|\mu_i -\tau|\leq \varepsilon$, then there exists $\bm b\in \mathcal{B}_\infty(\bm\mu,\varepsilon)$ such that $b_i = \tau$. Therefore, $\overline{T}^\star(\mathcal{B}_\infty(\bm\mu,\varepsilon)) \geq \max_{\bm\nu'\in \mathcal{B}_\infty(\bm\mu,\varepsilon)} T^\star(\bm\nu')\geq T^\star(\bm b) =+\infty$, therefore \[\overline{T}^\star(\mathcal{B}_\infty(\bm\mu,\varepsilon)) = \max_{\bm\nu'\in \mathcal{B}_\infty(\bm\mu,\varepsilon)}T^\star(\bm\nu') \: .\]
	
	When $\min_k |\mu_k-\tau|>\varepsilon$, define $U=\{i\in[K]:\mu_i >\tau\}$ and $L=[K]\setminus U$. Define \[\left\{ \begin{aligned} &b_i= \mu_i-\varepsilon &\text{if }i\in U \\ &b_i =\mu_i +\varepsilon &\text{if }i\in L.\end{aligned}\right. \] and for any $\bm\nu'$, $w_{\bm\nu'} =\argmax_{w\in \Sigma_K} \inf_{\bm\lambda \in Alt_{\bm\nu'}} \sum_i w_i\frac{(\mu_i'-\lambda_i)^2}{2\sigma^2}$.

	Let there be some $\bm\nu\in \mathcal{B}_\infty(\bm\mu,\varepsilon)$.	We know $\min_{\bm\lambda \in Alt_{\bm b}} \sum_i w_i(\bm b)\frac{(\mu'_i-\lambda_i)^2}{2\sigma^2}=w_j(\bm b) \frac{(\mu'_j-\tau)^2}{2\sigma^2}$ for some $j$. 
	
	For all $i\in U$, $\mu'_i \geq \mu_i-\varepsilon = b_i>\tau$; for all $i\in L$, $\mu'_i \leq \mu_i+\varepsilon = b_i<\tau$. Therefore, \[w_j(\bm b) \frac{(\mu'_j-\tau)^2}{2\sigma^2}\geq w_j(\bm b) \frac{(b_j-\tau)^2}{2\sigma^2}\geq \inf_{\bm\lambda \in Alt_{\bm b}} \sum_i w_i(\bm b) \frac{(b_i-\lambda_i)^2}{2\sigma^2}\]
	We thus have $\forall \bm\nu' \in \mathcal{B}_\infty(\bm\mu,\varepsilon)$, $\inf_{\bm\lambda\in Alt_{\bm b}} \sum_i w_i(\bm b)\frac{(\mu'_i-\lambda_i)^2}{2\sigma^2} \geq \inf_{\bm\lambda \in Alt_{\bm b}} \sum_i w_i(\bm b) \frac{(b_i-\lambda_i)^2}{2\sigma^2}$, and \[\inf_{\nu\in\hat{B}_r}\inf_{\bm\lambda\in Alt_{\bm b}} \sum_i w_i(\bm b)\frac{(\nu_i-\lambda_i)^2}{2\sigma^2} \geq \inf_{\bm\lambda \in Alt_{\bm b}} \sum_i w_i(\bm b) \frac{(b_i-\lambda_i)^2}{2\sigma^2}\] and by Lemma~\ref{lem:wbwst},
\[\overline{T}(\mathcal B_{\infty}(\bm\mu, \varepsilon)) = \max_{\bm\nu\in \mathcal B_{\infty}(\bm\mu, \varepsilon)} T^\star(\bm\nu) \: .\]
\end{proof}

%%%%%%%%%%%%%%%%%%%%%%%%%%%%%%%%%%%%%%%%%%%%%%%%%%%%%%%%%%%%%%%%%%%%%%%%%%%%%%%
%%%%%%%%%%%%%%%%%%%%%%%%%%%%%%%%%%%%%%%%%%%%%%%%%%%%%%%%%%%%%%%%%%%%%%%%%%%%%%%

\end{document}